\newcommand\tagthis{\addtocounter{equation}{1}\tag{\theequation}}
\def\taglabel#1{\tagthis\label{#1}}
\newtheorem{prop}{Proposition}
\newtheorem{lemma}{Lemma}
\newtheorem{thm}{Theorem}
\newtheorem{example}{Example}
\newtheorem{cor}{Corollary}
\long\def\BOC#1\EOC{\message{(Commented text )}}
\long\def\BOCC#1\EOCC{\message{(Commented text )}}
\long\def\BOCCC#1\EOCCC{\message{(Commented text )}}
\long\def\optional#1{\empty}
\long\def\NB#1{}
\def\mvis{\!=\!}
\def\o{\overline}
\def\bi{\begin{itemize}}
\def\ii{\item}
\def\ei{\end{itemize}}
\def\beq{\begin{equation}}
\def\eeq#1{\label{#1}\end{equation}}
\def\ba{\begin{array}}
\def\ea{\end{array}}
\def\i#1{\hbox{\it #1\/}}
\def\sm{\hbox{\rm SM}}
\def\lpmln{\hbox{\rm LP}^{\rm{MLN}}}
\def\lpmln{{\rm LP}^{\rm{MLN}}}
\def\no{\i{not}}
\def\sneg{\sim\!\!}
\def\ar{\leftarrow}
\def\rar{\rightarrow}
\def\no{\i{not}}
\def\false{\hbox{\bf f}}
\def\true{\hbox{\bf t}}
\def\i#1{\hbox{\itshape #1\/}}
\def\qed{\quad \vrule height7.5pt width4.17pt depth0pt \medskip}
\begin{document}

\title{$\lpmln$, Weak Constraints, and P-log}

\author{Joohyung Lee and Zhun Yang \\
School of Computing, Informatics and Decision Systems Engineering \\
Arizona State University, Tempe, USA \\
{\tt \{joolee, zyang90\}@asu.edu}}

\maketitle

\begin{abstract}
$\lpmln$ is a recently introduced formalism that extends answer set programs by adopting the log-linear weight scheme of Markov Logic. This paper investigates the relationships between $\lpmln$ and two other extensions of answer set programs: weak constraints to express a quantitative preference among answer sets, and P-log to incorporate probabilistic uncertainty. We present a translation of $\lpmln$ into programs with weak constraints and a translation of P-log into $\lpmln$, which complement the existing translations in the opposite directions. The first translation allows us to compute the most probable stable models (i.e., MAP estimates) of $\lpmln$ programs using standard ASP solvers. This result can be extended to other formalisms, such as Markov Logic, ProbLog, and Pearl's Causal Models, that are shown to be translatable into $\lpmln$. The second translation tells us how probabilistic nonmonotonicity (the ability of the reasoner to change his probabilistic model as a result of new information) of P-log can be represented in $\lpmln$, which yields a way to compute P-log using standard ASP solvers and MLN solvers.
\end{abstract}

\section{Introduction} \label{sec:intro}

$\lpmln$ \cite{lee16weighted} is a recently introduced probabilistic logic programming language that extends answer set programs \cite{gel88} with the concept of weighted rules, whose weight scheme is adopted from that of Markov Logic \cite{richardson06markov}. It is shown in~\cite{lee16weighted,lee15markov} that $\lpmln$ is expressive enough to embed Markov Logic and several other probabilistic logic languages, such as ProbLog \cite{deraedt07problog}, Pearls' Causal Models  \cite{pearl00causality}, and a fragment of P-log \cite{baral09probabilistic}. 

Among several extensions of answer set programs to overcome the deterministic nature of the stable model semantics, $\lpmln$ is one of the few languages that incorporate the concept of weights into the semantics.  Another one is weak constraints \cite{buccafurri00enhancing}, which are to assign a quantitative preference over the stable models of non-weak constraint rules: weak constraints cannot be used for deriving stable models. It is relatively a simple extension of the stable model semantics but has turned out to be useful in many practical applications. Weak constraints are part of the ASP Core~2 language \cite{calimeri13aspcore2}, and are implemented in standard ASP solvers, such as {\sc clingo} and {\sc dlv}. 

P-log is a probabilistic extension of answer set programs. In contrast to weak constraints, it is highly structured and has quite a sophisticated semantics. One of its distinct features is {\em probabilistic nonmonotonicity} (the ability of the reasoner to change his probabilistic model as a result of new information) whereas, in most other probabilistic logic languages, new information can only cause the reasoner to abandon some of his possible worlds, making the effect of an update {\em monotonic}.

This paper reveals interesting relationships between $\lpmln$ and these two extensions of answer set programs. It shows how different weight schemes of $\lpmln$ and weak constraints are related, and how the probabilistic reasoning in P-log can be expressed in $\lpmln$. The result helps us understand these languages better as well as other related languages, and also provides new, effective computational methods based on the translations.

It is shown in~\cite{lee16weighted} that programs with weak constraints can be easily viewed as a special case of $\lpmln$ programs. In the first part of this paper, we show that an inverse translation is also possible under certain conditions, i.e., an $\lpmln$ program can be turned into a usual ASP program with weak constraints so that the most probable stable models of the $\lpmln$ program are exactly the optimal stable models of the program with weak constraints. The result allows for using ASP solvers for computing Maximum A Posteriori probability (MAP) estimates of $\lpmln$ programs. Interestingly, the translation is quite simple so it can be easily applied in practice.
Further, the result implies that MAP inference in other probabilistic logic languages, such as Markov Logic, ProbLog, and Pearl's Causal Models, can be computed by standard ASP solvers because they are known to be embeddable in $\lpmln$,  thereby allowing us to apply combinatorial optimization in standard ASP solvers to MAP inference in these languages.

In the second part of the paper, we show how P-log can be completely characterized in $\lpmln$. 
Unlike the translation in \cite{lee16weighted}, which was limited to a subset of P-log  that does not allow dynamic default probability, our translation applies to full P-log and complements the recent translation from $\lpmln$ into P-log in~\cite{balai16ontherelationship}. In conjunction with the embedding of $\lpmln$ in answer set programs with weak constraints, our work shows how MAP estimates of P-log can be computed by standard ASP solvers, which provides a highly efficient way to compute P-log.

\section{Preliminaries} \label{sec:prelim}

\subsection{Review: $\lpmln$} \label{sec:lpmln}

We review the definition of $\lpmln$ from~\cite{lee16weighted}.
In fact, we consider a more general syntax of programs than the one from~\cite{lee16weighted}, but this is not an essential extension. We follow the view of~\cite{ferraris11stable} by identifying logic program rules as a special case of first-order formulas under the stable model semantics. For example, rule $r(x)\ar p(x), \no\ q(x)$ is identified with formula $\forall x (p(x)\land\neg q(x)\rar r(x))$.
An $\lpmln$ program is a finite set of weighted first-order formulas $w: F$ where $w$ is a real number (in which case the weighted formula is called {\em soft}) or $\alpha$ for denoting the infinite weight (in which case it is called {\em hard}). An $\lpmln$ program is called {\em ground} if its formulas contain no variables. We assume a finite Herbrand Universe. Any $\lpmln$ program can be turned into a ground program by replacing the quantifiers with multiple conjunctions and disjunctions over the Herbrand Universe. Each of the ground instances of a formula with free variables receives the same weight as the original formula.


For any ground $\lpmln$ program $\Pi$ and any interpretation~$I$, 
$\o{\Pi}$ denotes the unweighted formula obtained from $\Pi$, and
${\Pi}_I$ denotes the set of $w: F$ in $\Pi$ such that $I\models F$, 
and $\sm[\Pi]$ denotes the set $\{I \mid \text{$I$ is a stable model of $\o{\Pi_I}$}\}$ (We refer the reader to the stable model semantics of first-order formulas in~\cite{ferraris11stable}). 
The {\em unnormalized weight} of an interpretation $I$ under $\Pi$ is defined as 
\[
\small
 W_\Pi(I) =
\begin{cases}
  exp\Bigg(\sum\limits_{w:F\;\in\; {\Pi}_I} w\Bigg) & 
      \text{if $I\in\sm[\Pi]$}; \\
  0 & \text{otherwise}. 
\end{cases}
\]
The {\em normalized weight} (a.k.a. {\em probability}) of an interpretation $I$ under~$\Pi$ is defined as  \[ 
\small 
  P_\Pi(I)  = 
  \lim\limits_{\alpha\to\infty} \frac{W_\Pi(I)}{\sum\limits_{J\in {\rm SM}[\Pi]}{W_\Pi(J)}}. 
\] 
$I$ is called a {\sl (probabilistic) stable model} of $\Pi$ if $P_\Pi(I)\ne 0$.

\subsection{Review: Weak Constraints} \label{ssec:weak} 

A {\em weak constraint} has the form 
\[
  {\tt :\sim}\ F \ \ \  [\i{Weight}\ @\ \i{Level}]. 
\] 
where $F$ is a ground formula, $\i{Weight}$ is a real number and $\i{Level}$ is a nonnegative integer. Note that the syntax is more general than the one from the literature \cite{buccafurri00enhancing,calimeri13aspcore2}, where $F$ was restricted to conjunctions of literals.\footnote{
A literal is either an atom $p$ or its negation $\no\ p$.} 
We will see the generalization is more convenient for stating our result, but will also present translations that conform to the restrictions imposed on the input language of ASP solvers.

Let $\Pi$ be a program $\Pi_1\cup\Pi_2$, where $\Pi_1$ is a set of ground formulas and $\Pi_2$ is a set of weak constraints. We call $I$ a stable model of $\Pi$ if it is a stable model of $\Pi_1$ (in the sense of \cite{ferraris11stable}).
For every stable model $I$ of $\Pi$ and any nonnegative integer $l$, the {\em penalty} of $I$ at level~$l$, denoted by $\i{Penalty}_\Pi(I,l)$,  is defined as 
\[
\sum\limits_{:\sim\ F [w@l]\in\Pi_2,\atop I\models F}  w .
\]
For any two stable models $I$ and $I'$ of~$\Pi$, we say $I$ is {\em dominated} by $I'$ if 
\bi
\ii there is some nonnegative integer $l$ such that $\i{Penalty}_\Pi(I',l) < \i{Penalty}_\Pi(I,l)$ and 
\ii for all integers $k>l$, $\i{Penalty}_\Pi(I',k) = \i{Penalty}_\Pi(I,k)$.
\ei
A stable model of $\Pi$ is called {\em optimal} if it is not dominated by another stable model of $\Pi$.

\section{Turning $\lpmln$ into Programs with Weak Constraints} \label{sec:lpmln2wc}

\subsection{General Translation} 

We define a translation that turns an $\lpmln$ program into a program with weak constraints.
For any ground $\lpmln$ program $\Pi$, the translation ${\rm lpmln2wc}(\Pi)$ is simply defined as follows. 
We turn each weighted formula $w: F$ in $\Pi$ into $\{F\}^{\rm ch}$, where $\{F\}^{\rm ch}$ is a choice formula, standing for $F\lor \neg F$ \cite{ferraris11stable}. Further, we add
\beq
   :\sim F\ \ \ [-1@1]
\eeq{hard-wc}
if $w$ is $\alpha$, and 
\beq
   :\sim F\ \ \ [-w@0]
\eeq{soft-wc}
otherwise. 

Intuitively, choice formula $\{F\}^{\rm ch}$ allows $F$ to be either included or not in deriving a stable model.\footnote{This view of choice formulas was used in PrASP \cite{nickles14probabilistic} in defining {\sl spanning} programs.}  
When $F$ is included, the stable model gets the (negative) penalty $-1$ at level $1$ or $- w$ at level $0$ depending on the weight of the formula, which corresponds to the (positive) ``reward'' $e^\alpha$ or $e^{w}$ that it receives under the $\lpmln$ semantics.  

The following proposition tells us that choice formulas can be used for generating the members of $\sm[\Pi]$. 

\begin{prop}\label{prop:lpmln2wc}
For any $\lpmln$ program $\Pi$, the set $\sm[\Pi]$ is exactly the set of the stable models of ${\rm lpmln2wc(\Pi)}$.
\end{prop}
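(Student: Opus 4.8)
The plan is to reduce the statement to the reduct-based characterization of stable models \cite{ferraris11stable} and to show that, for every interpretation $I$, the choice-formula program produced by the translation has exactly the same reduct relative to $I$ as $\o{\Pi_I}$. First I would dispose of the weak constraints: by the definition of a stable model of a program $\Pi_1\cup\Pi_2$ in the Weak Constraints subsection, the stable models of ${\rm lpmln2wc}(\Pi)$ are precisely the stable models of its non-weak-constraint part, since the constraints \eqref{hard-wc} and \eqref{soft-wc} only assign penalties. So it suffices to show that $I\in\sm[\Pi]$ iff $I$ is a stable model of $\Gamma = \{\,\{F\}^{\rm ch}\mid w\!:\!F\in\Pi\,\}$.

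Next I would carry out the central computation: the reduct of a single choice formula $\{F\}^{\rm ch}=F\lor\neg F$ relative to $I$. Recalling that $\neg F$ abbreviates $F\rightarrow\bot$, one has $(\neg F)^I=\top$ when $I\not\models F$ and $(\neg F)^I=\bot$ when $I\models F$, while $F^I=\bot$ when $I\not\models F$. Combining these,
\[
 (\{F\}^{\rm ch})^I \;\equiv\;
 \begin{cases}
   F^I & \text{if } I\models F,\\
   \top & \text{if } I\not\models F.
 \end{cases}
\]
Since the reduct of a set of formulas is the conjunction of the reducts, the formulas with $I\not\models F$ contribute only $\top$ and drop out, so $\Gamma^I$ is equivalent to $\bigwedge_{w:F\in\Pi,\ I\models F}F^I$, which is exactly $(\o{\Pi_I})^I$. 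This coincidence of reducts is the crux of the argument.

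Finally I would close the equivalence through minimality. Both programs are trivially satisfied by $I$: every choice formula is a classical tautology, and $\o{\Pi_I}$ consists by definition of exactly those $F$ with $I\models F$. Hence for each of the two programs, $I$ is a stable model iff $I$ is a minimal model of its reduct. As the two reducts coincide, the minimal-model conditions coincide as well, so $I$ is a stable model of $\Gamma$ iff $I$ is a stable model of $\o{\Pi_I}$, i.e.\ iff $I\in\sm[\Pi]$.

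The main obstacle I anticipate is the reduct computation itself, specifically the two-case analysis for $\neg F$ and the verification that the $I\not\models F$ case really contributes a tautology rather than altering which models are minimal. Once that case split is pinned down and matched against the definition of $\o{\Pi_I}$, the remaining steps are routine.
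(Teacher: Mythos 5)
Your proposal is correct and follows essentially the same route as the paper's proof: both discard the weak constraints by definition, reduce the claim to the coincidence of the reducts $\bigwedge_{w:F\in\Pi}(\{F\}^{\rm ch})^I$ and $\bigwedge_{w:F\in\Pi,\,I\models F}F^I$, and conclude via the minimal-model characterization of stable models. Your explicit two-case computation of $(\neg F)^I$ is just a more detailed spelling-out of the identity the paper states in one line.
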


The following theorem follows from~Proposition~\ref{prop:lpmln2wc}. As the probability of a stable model of an $\lpmln$ program $\Pi$ increases, the penalty of the corresponding stable model of ${\rm lpmln2wc}(\Pi)$ decreases, and the distinction between hard rules and soft rules can be simulated by the different levels of weak constraints, and vice versa.

\begin{thm}\label{thm:lpmln2wc}
For any $\lpmln$ program~$\Pi$, the most probable stable models (i.e., the stable models with the highest probability) of $\Pi$ are precisely the optimal stable models of the program with weak constraints ${\rm lpmln2wc}(\Pi)$.
\end{thm}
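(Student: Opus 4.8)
The plan is to build on Proposition~\ref{prop:lpmln2wc}, which already guarantees that $\sm[\Pi]$ and the set of stable models of ${\rm lpmln2wc}(\Pi)$ coincide. Hence both ``most probable'' and ``optimal'' select from one common pool of interpretations, and all that remains is to show the two selection criteria agree. To compare them cleanly I would encode each $I\in\sm[\Pi]$ by the pair $(h(I),s(I))$, where $h(I)$ is the number of hard formulas $\alpha\,{:}\,F$ in $\Pi$ satisfied by $I$ and $s(I)=\sum_{w:F\in\Pi_I,\,w\ne\alpha} w$ is the total weight of the soft formulas it satisfies. With this notation the definition of the unnormalized weight gives $W_\Pi(I)=\exp(\alpha\,h(I)+s(I))$ for every $I\in\sm[\Pi]$.

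First I would characterize the most probable stable models. Setting $h_{\max}=\max_{J\in\sm[\Pi]} h(J)$ and dividing both numerator and denominator of $P_\Pi(I)$ by $\exp(\alpha\,h_{\max})$ before taking the limit, every term with $h(J)<h_{\max}$ decays to zero, so
\[
P_\Pi(I)=\frac{\exp(s(I))}{\sum_{J:\,h(J)=h_{\max}}\exp(s(J))}
\]
when $h(I)=h_{\max}$, and $P_\Pi(I)=0$ otherwise. Consequently $I$ is a most probable stable model exactly when it first maximizes $h(I)$ and, among those models, maximizes $s(I)$; that is, when the pair $(h(I),s(I))$ is lexicographically maximal over $\sm[\Pi]$.

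Next I would compute the penalties in ${\rm lpmln2wc}(\Pi)$. The choice formulas sit in the non-weak part and contribute nothing, so the only weak constraints are $:\sim F\ [-1@1]$ for each hard formula and $:\sim F\ [-w@0]$ for each soft formula. Unwinding the definition of penalty then yields $\i{Penalty}(I,1)=-h(I)$, $\i{Penalty}(I,0)=-s(I)$, and $\i{Penalty}(I,k)=0$ for all $k\ge 2$. Feeding these into the definition of domination, $I$ is dominated by $I'$ precisely when $h(I')>h(I)$, or when $h(I')=h(I)$ and $s(I')>s(I)$ — in other words, domination is exactly the strict lexicographic order on the pairs $(h,s)$. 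Therefore the undominated (optimal) stable models are again those whose pair $(h(I),s(I))$ is lexicographically maximal, matching the characterization obtained in the previous step and finishing the argument.

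The step requiring the most care is the limit computation: rather than appealing to it informally, one must verify that the contribution of every stable model not attaining $h_{\max}$ genuinely vanishes as $\alpha\to\infty$ and that the surviving terms renormalize to the displayed expression. Everything else is bookkeeping, since the translation was deliberately designed so that level~$1$ of the weak constraints tracks the hard-rule count and level~$0$ tracks the accumulated soft weight, making the two orderings identical by construction.
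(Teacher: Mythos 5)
Your proof is correct and takes essentially the same route as the paper's: both reduce the claim, via Proposition~\ref{prop:lpmln2wc}, to showing that the lexicographic ordering on stable models (hard-formula count first, then total soft weight) arising from the $\alpha\to\infty$ limit coincides with the level-$1$/level-$0$ penalty ordering of ${\rm lpmln2wc}(\Pi)$, which is exactly the paper's nested $\mathrm{argmax}$/$\mathrm{argmin}$ chain of equalities. If anything, your explicit limit computation (dividing by $\exp(\alpha\, h_{\max})$ and checking that non-maximal terms vanish) justifies a characterization of the most probable stable models that the paper's proof simply asserts, so your write-up is slightly more detailed on that step.
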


\begin{example}\label{ex:1}
For program $\Pi$:
\beq
\ba {rl|crl}
  10:& \ \  p\rar q  \ \  \ &  &   5:& \ \ p \\
   1: & \ \  p\rar r  &  &  -20:&   \neg r\rar\bot 
\ea
\eeq{ex1}
$\sm[\Pi]$ has $5$ elements: $\emptyset$, $\{p\}$, $\{p,q\}$, $\{p,r\}$, $\{p,q,r\}$. 
Among them, $\{p,q\}$ is the most probable stable model, whose weight is $e^{15}$, while $\{p,q,r\}$ is a probabilistic stable model whose weight is $e^{-4}$.
The translation yields 
\[
\ba {c|cccrcc}
   \{p\rar q\}^{\rm ch}            &   &    :\sim &     p\rar q\ \              &[-10\ @\ 0]   \\
   \{p\rar r\}^{\rm ch}             &   &    :\sim &    p\rar r\ \               & [-1\ @\ 0] \\
   \{p\}^{\rm ch}                     &   &    :\sim &    p   \ \                     & [-5\ @\ 0] \\
   \{\neg r\rar\bot\}^{\rm ch} &   &    :\sim &    \neg r\rar\bot   \ \ &      [20\ @\ 0] 
\ea
\]
whose optimal stable model is $\{p,q\}$ with the penalty at level $0$ being $-15$, while $\{p,q,r\}$ is a stable model whose penalty at level $0$ is $4$.
\end{example}

The following example illustrates how the translation accounts for the difference between hard rules and soft rules by assigning different levels. 

\begin{example}\label{ex:bird}
Consider the $\lpmln$ program $\Pi$ in Example~1 from \cite{lee16weighted}. 
{\small
\[
\ba {llr}
\alpha: & \i{Bird}(\i{Jo}) \ar \i{ResidentBird}(\i{Jo}) & (r1) \\
\alpha: & \i{Bird}(\i{Jo}) \ar \i{MigratoryBird}(\i{Jo})  & (r2)\\
\alpha: &  \bot \ar \i{ResidentBird}(\i{Jo}), \i{MigratoryBird}(\i{Jo}) & (r3)  \\
2: & \i{ResidentBird}(\i{Jo}) & (r4) \\ 
1: & \i{MigratoryBird}(\i{Jo}) & (r5)
\ea
\]
}
The translation ${\rm lpmln2wc}(\Pi)$ is \footnote{Recall that we identify the rules with the corresponding first-order formulas.}
{\small
\[
\ba {lllr}
\{ \i{Bird}(\i{Jo}) \ar \i{ResidentBird}(\i{Jo}) \}^{\rm ch}  \\
\{ \i{Bird}(\i{Jo}) \ar \i{MigratoryBird}(\i{Jo}) \}^{\rm ch}  \\
 \{ \bot \ar \i{ResidentBird}(\i{Jo}), \i{MigratoryBird}(\i{Jo})\}^{\rm ch}  \\ 
\{ \i{ResidentBird}(\i{Jo}) \}^{\rm ch} \\ 
\{ \i{MigratoryBird}(\i{Jo}) \}^{\rm ch} \\[0.7em]
\hspace{0em} :\sim \i{Bird}(\i{Jo}) \ar \i{ResidentBird}(\i{Jo}) & [-1@1] \\
:\sim \i{Bird}(\i{Jo}) \ar \i{MigratoryBird}(\i{Jo})  & [-1@1] \\
:\sim \bot \ar \i{ResidentBird}(\i{Jo}), \i{MigratoryBird}(\i{Jo})\} & [-1@1]   \\ 
:\sim \i{ResidentBird}(\i{Jo}) & [-2@0]  \\ 
:\sim \i{MigratoryBird}(\i{Jo}) & [-1@0] \\ 
\ea
\]
}

The three probabilistic stable models of $\Pi$,  
$\emptyset$, 
$\{\i{Bird}(Jo), \i{ResidentBird}(Jo)\}$,  and
$\{\i{Bird}(Jo), \i{MigratoryBird}(Jo)\}$,  
have the same penalty $-3$ at level $1$.  Among them, $\{\i{Bird}(Jo), \i{ResidentBird}(Jo)\}$  has the least penalty at level $0$, and thus is an optimal stable model of ${\rm lpmln2wc}(\Pi)$.
\end{example}

In some applications, one may not want any hard rules to be violated assuming that hard rules encode definite knowledge. For that,  ${\rm lpmln2wc}(\Pi)$ can be modified by simply turning hard rules into the usual ASP rules. Then the stable models of  ${\rm lpmln2wc}(\Pi)$ satisfy all hard rules. 
For example,  the program in Example~\ref{ex:bird} can be translated into programs with weak constraints as follows. 
{\small
\[
\ba {lllr}
\i{Bird}(\i{Jo}) \ar \i{ResidentBird}(\i{Jo})   \\
\i{Bird}(\i{Jo}) \ar \i{MigratoryBird}(\i{Jo})   \\
\bot \ar \i{ResidentBird}(\i{Jo}), \i{MigratoryBird}(\i{Jo})  \\ 
\{ \i{ResidentBird}(\i{Jo}) \}^{\rm ch} \\ 
\{ \i{MigratoryBird}(\i{Jo}) \}^{\rm ch} \\[0.7em]
:\sim \i{ResidentBird}(\i{Jo}) & [-2@0]  \\ 
:\sim \i{MigratoryBird}(\i{Jo}) & [-1@0] 
\ea
\]
}

Also note that while the most probable stable models of $\Pi$ and the optimal stable models of ${\rm lpmln2wc}(\Pi)$ coincide, their weights and penalties are not proportional to each other. The former is defined by an exponential function whose exponent is the sum of the weights of the satisfied formulas, while the latter simply adds up the penalties of the satisfied formulas. On the other hand, they are monotonically increasing/decreasing as more formulas are satisfied.

In view of Theorem~2 from~\cite{lee16weighted}, which tells us how to embed Markov Logic into $\lpmln$, it follows from Theorem~\ref{thm:lpmln2wc} that  MAP inference in MLN can also be reduced to the optimal stable model finding of programs with weak constraints. 
For any Markov Logic Network $\Pi$, let ${\rm mln2wc}(\Pi)$ be the union of ${\rm lpmln2wc}(\Pi)$ plus choice rules~$\{A\}^{\rm ch}$ for all atoms~$A$. 

\begin{thm}\label{thm:mln2asp}
For any Markov Logic Network $\Pi$, the most probable models of $\Pi$ are precisely the optimal stable models of the  program with weak constraints ${\rm mln2wc}(\Pi)$.
\end{thm}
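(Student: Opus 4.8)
The plan is to derive the statement by composing the embedding of Markov Logic into $\lpmln$ (Theorem~2 of~\cite{lee16weighted}) with Theorem~\ref{thm:lpmln2wc}. Write $\Pi^\star$ for the $\lpmln$ program $\Pi\cup\{\{A\}^{\rm ch}\mid A\ \text{an atom}\}$, obtained by reading each weighted formula of the MLN $\Pi$ as a weighted $\lpmln$ formula and adding the choice rules for all atoms. By Theorem~2 of~\cite{lee16weighted}, the probability distribution defined by $\Pi$ as an MLN coincides with the distribution defined by $\Pi^\star$ as an $\lpmln$ program: the choice rules force $\sm[\Pi^\star]$ to be the set of \emph{all} interpretations, so the candidate models of the two formalisms agree, and since each choice formula is a tautology its fixed contribution to every $W_{\Pi^\star}(I)$ cancels in the normalization, leaving $P_{\Pi^\star}(I)$ equal to the Markov Logic probability of $I$. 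Consequently the most probable models of $\Pi$ are precisely the most probable probabilistic stable models of $\Pi^\star$.

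I would then apply Theorem~\ref{thm:lpmln2wc} to the $\lpmln$ program $\Pi^\star$, which yields that the most probable stable models of $\Pi^\star$ are exactly the optimal stable models of the program with weak constraints ${\rm lpmln2wc}(\Pi^\star)$. Chaining this with the previous paragraph, the most probable models of the MLN $\Pi$ are exactly the optimal stable models of ${\rm lpmln2wc}(\Pi^\star)$. The remaining task is to identify ${\rm lpmln2wc}(\Pi^\star)$ with ${\rm mln2wc}(\Pi)={\rm lpmln2wc}(\Pi)\cup\{\{A\}^{\rm ch}\mid A\ \text{an atom}\}$ at the level of optimal stable models.

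The two programs differ only in how the atom choice rules are handled: ${\rm mln2wc}(\Pi)$ appends them as plain generating choice rules, whereas ${\rm lpmln2wc}(\Pi^\star)$ also emits, for each such rule, an accompanying weak constraint. I would settle the equivalence in two moves. First, for the generating parts both programs have the set of all interpretations as their stable models: for ${\rm lpmln2wc}(\Pi^\star)$ this is Proposition~\ref{prop:lpmln2wc} together with $\sm[\Pi^\star]$ being all interpretations, and for ${\rm mln2wc}(\Pi)$ it holds because the atom choice rules already render every interpretation a stable model regardless of the remaining choice formulas. Second, each weak constraint induced by an atom choice rule in ${\rm lpmln2wc}(\Pi^\star)$ has the form $:\sim\{A\}^{\rm ch}\ [c\,@\,l]$, and because $\{A\}^{\rm ch}$ is satisfied by every interpretation, it adds the \emph{same} constant to the penalty of \emph{every} stable model at its level; a uniform shift of all penalties at a fixed level leaves the domination relation unchanged, so these constraints may be discarded without affecting which stable models are optimal. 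Hence the optimal stable models of the two programs coincide and the theorem follows. I expect this last move to be the main obstacle, since it requires pinning down the exact weight assigned to the atom choice rules inside the embedding $\Pi^\star$ and verifying that the weak constraints they induce fire uniformly across all interpretations, so that they contribute nothing to the domination order.
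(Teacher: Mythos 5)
Your proposal is correct and follows essentially the same route as the paper's own proof: embed the MLN $\Pi$ into $\lpmln$ by adding choice rules for all atoms (Theorem~2 of Lee and Wang 2016), apply Theorem~\ref{thm:lpmln2wc} to that $\lpmln$ program, and then discard the weak constraints induced by the atom choice rules on the grounds that these are tautologies and hence shift every stable model's penalty by the same constant at the same level, leaving the domination order unchanged. The only minor divergence is in reconciling the generating parts: the paper replaces each nested choice $\{\{A\}^{\rm ch}\}^{\rm ch}$ by $\{A\}^{\rm ch}$ via strong equivalence of their reducts, whereas you argue directly that both generating programs admit all interpretations as stable models --- both arguments are sound and settle the same point.
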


Similarly, MAP inference in ProbLog and Pearl's Causal Models can be reduced to finding an optimal stable model of a program with weak constraints in view of the reduction of ProbLog to $\lpmln$ (Theorem~4 from \cite{lee16weighted}) and the reduction of Causal Models to $\lpmln$ (Theorem~4 from \cite{lee15markov}) thereby allowing us to apply combinatorial optimization methods in standard ASP solvers to these languages.

\subsection{Alternative Translations}

Instead of aggregating the weights of satisfied formulas, we may aggregate the weights of formulas that are not satisfied. 
Let ${\rm lpmln2wc^{pnt}}(\Pi)$ be a modification of ${\rm lpmln2wc}(\Pi)$ by replacing \eqref{hard-wc} with 
\[
  :\sim\ \ \neg F\ \ [1 @ 1] 
\] 
and \eqref{soft-wc} with 
 \[
   :\sim\ \ \neg F\ \ [w@0].
 \]

Intuitively, when $F$ is not satisfied, the stable model gets the penalty $1$ at level $1$, or $w$ at level $0$ depending on whether $F$ is a hard or soft formula.

\begin{cor}\label{cor:lpmln2wc-pnt}
Theorem~\ref{thm:lpmln2wc} remains true when ${\rm lpmln2wc}(\Pi)$ is replaced by ${\rm lpmln2wc^{pnt}}(\Pi)$.
\end{cor}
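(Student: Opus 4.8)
The plan is to deduce the corollary from Theorem~\ref{thm:lpmln2wc} by showing that ${\rm lpmln2wc}(\Pi)$ and ${\rm lpmln2wc^{pnt}}(\Pi)$ determine the same set of stable models and the same dominance relation, hence the same optimal stable models.

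First I would observe that the two translations share exactly the same choice formulas $\{F\}^{\rm ch}$ and differ only in their weak constraints, which by definition play no role in generating stable models. By Proposition~\ref{prop:lpmln2wc}, both programs therefore have $\sm[\Pi]$ as their common set of stable models, so it remains only to compare the two penalty functions on these models. In both translations the soft formulas contribute at level $0$ and the hard formulas contribute at level $1$, so I would fix a level $l\in\{0,1\}$ and compare the penalties level by level.

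The key observation is that for any ground formula $F$ and any interpretation $I$, exactly one of $F$, $\neg F$ holds under $I$. Let $C_l$ denote the sum, over all weighted formulas $w\!:\!F$ in $\Pi$ contributing at level $l$, of the corresponding magnitude ($w$ for a soft formula at level $0$, and $1$ for a hard formula at level $1$); note that $C_l$ depends only on $\Pi$ and $l$, not on $I$. For each such formula, ${\rm lpmln2wc}$ charges the satisfied-constraint penalty (a negated reward on $I\models F$) while ${\rm lpmln2wc^{pnt}}$ charges the unsatisfied-constraint penalty (a positive charge on $I\models\neg F$); summing over all contributing formulas and using the dichotomy above, I would establish the identity
\[
  \i{Penalty}_{{\rm lpmln2wc^{pnt}}(\Pi)}(I,l)
  \;=\; \i{Penalty}_{{\rm lpmln2wc}(\Pi)}(I,l) \;+\; C_l .
\]

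Because $C_l$ is independent of $I$, for every pair of stable models $I,I'$ and every level $l$ the penalty \emph{difference} $\i{Penalty}(I,l)-\i{Penalty}(I',l)$ is identical under the two translations. Consequently the two defining conditions of the dominance relation (strict decrease at some level $l$, together with equality at all higher levels) hold under ${\rm lpmln2wc^{pnt}}(\Pi)$ precisely when they hold under ${\rm lpmln2wc}(\Pi)$. Hence the optimal stable models of the two programs coincide, and combining this with Theorem~\ref{thm:lpmln2wc} gives the corollary. The only point requiring care is bookkeeping the sign convention correctly: ${\rm lpmln2wc}$ rewards satisfied formulas with negative penalties whereas ${\rm lpmln2wc^{pnt}}$ charges unsatisfied formulas with positive penalties, so one must verify that these conventions differ exactly by the interpretation-independent constant $C_l$ rather than by a sign or a scale factor. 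Once that identity is pinned down, the invariance of the dominance relation under a uniform shift of all penalties at a level is immediate.
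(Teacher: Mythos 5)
Your proposal is correct, and its organization differs from the paper's own proof even though both hinge on the same elementary fact. The paper never compares the two weak-constraint translations with each other; it re-runs the argmax/argmin chain from its proof of Theorem~\ref{thm:lpmln2wc}, relating the most probable stable models of $\Pi$ directly to the optimal stable models of ${\rm lpmln2wc^{pnt}}(\Pi)$, and the pivotal step there --- turning maximization of the total weight of satisfied formulas into minimization of the total weight of unsatisfied formulas --- is justified by exactly your constants: the satisfied and unsatisfied sums at each level add up to a quantity depending only on $\Pi$ (your $C_0$ is the sum of all soft weights, your $C_1$ the number of hard formulas). You instead treat Theorem~\ref{thm:lpmln2wc} as a black box and show that ${\rm lpmln2wc}(\Pi)$ and ${\rm lpmln2wc^{pnt}}(\Pi)$ pose the same optimization problem: they have identical stable models $\sm[\Pi]$ (same choice formulas, and weak constraints never influence stable-model generation, so Proposition~\ref{prop:lpmln2wc} covers both), and their level-wise penalties differ by the interpretation-independent shift $C_l$, which preserves the dominance relation (strict inequality at some level, equality at all higher levels) and hence the set of optimal stable models. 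Your factoring is more modular --- it isolates the reusable principle that a uniform per-level shift of penalties leaves optimality untouched --- whereas the paper's inline derivation keeps the corollary self-contained in the same argmin formalism it set up for Theorem~\ref{thm:lpmln2wc}. Both are sound, and your sign bookkeeping is exactly right.
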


This alternative view of assigning weights to stable models, in fact, originates from Probabilistic Soft Logic (PSL) \cite{bach15hinge}, where the probability density function of an interpretation is obtained from the sum over the ``penalty'' from the formulas that are ``distant'' from satisfaction. 
This view will lead to a slight advantage when we further turn the translation into the input language of ASP solvers (See Footnote~6).

The current ASP solvers do not allow arbitrary formulas to appear in weak constraints. For computation using the ASP solvers, let  ${\rm lpmln2wc^{pnt,rule}}(\Pi)$ be the translation by turning each weighted formula  $w_i: F_i$ in $\Pi$ into
\[
\ba {rcl}
  \neg F_i &\rar& \i{unsat}(i)  \\
   \neg \i{unsat}(i) & \rar& F_i \\
   & :\sim & \i{unsat}(i)\ \ \ \ \  [w_i @ l].
\ea
\]
where $\i{unsat}(i)$ is a new atom, and $l=1$ if $w_i$ is $\alpha$ and $l=0$ otherwise.

\begin{cor} \label{cor:lpmln2wc-pnt-rule}
Let  $\Pi$ be an $\lpmln$ program. There is a 1-1 correspondence $\phi$ between the most probable stable models of $\Pi$ and the optimal stable models of ${\rm lpmln2wc^{pnt,rule}}(\Pi)$, where $\phi(I) = I\cup\{  \i{unsat}(i) \mid w_i: F_i \in \Pi, \ I\not\models F_i \}$.
\end{cor}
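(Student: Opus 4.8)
The plan is to reduce the statement to Corollary~\ref{cor:lpmln2wc-pnt} by exhibiting $\phi$ as a penalty‑preserving bijection between the stable models of ${\rm lpmln2wc^{pnt}}(\Pi)$ and those of ${\rm lpmln2wc^{pnt,rule}}(\Pi)$. Write $A$ for the original signature, abbreviate $\i{unsat}(i)$ by $u_i$, and let $R$ be the rule part (the two implications for each $i$) of ${\rm lpmln2wc^{pnt,rule}}(\Pi)$. The first and principal step is a purely semantic claim: the stable models of $R$ are exactly $\{\phi(I)\mid I\in\sm[\Pi]\}$, and $\phi$ is a bijection onto them (injectivity is immediate since $\phi(I)\cap A=I$).

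To establish this claim I would compute the reduct of $R$ in the sense of~\cite{ferraris11stable} with respect to a candidate interpretation $J$, writing $I=J\cap A$. Classically, $J\models R$ forces $u_i\in J$ whenever $I\not\models F_i$, since then the body $\neg F_i$ of $\neg F_i\rar u_i$ holds. Conversely, if $I\models F_i$ and $u_i\in J$, both conjuncts of the reduct coming from the $i$‑th pair of rules collapse to $\top$, so $J\setminus\{u_i\}$ still satisfies the reduct and $J$ is not minimal; hence in any stable model $u_i\in J$ iff $I\not\models F_i$, i.e.\ $J=\phi(I)$. With these values fixed, the reduct $R^J$ splits into the unit conjuncts $u_i$ for each $i$ with $I\not\models F_i$ together with $\bigwedge_{i:\,I\models F_i}F_i^{I}$, which is exactly $(\o{\Pi_I})^{I}$. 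Because the $u_i$‑conjuncts involve only the fresh atoms and $(\o{\Pi_I})^{I}$ only atoms of $A$, minimality of $\phi(I)$ over $R^J$ decouples: the forced $u_i$ are automatically minimal, and the $A$‑part is minimal iff $I$ is a stable model of $\o{\Pi_I}$, i.e.\ iff $I\in\sm[\Pi]$. This is the place where the argument must genuinely use the stable‑model (not merely classical) reading of the complex formulas $F_i$, and I expect it to be the main obstacle; a splitting‑set or conservative‑extension argument would give an alternative, but the direct reduct computation is self‑contained.

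It remains to compare penalties. Each weak constraint $:\sim u_i\,[w_i@l]$ fires under $\phi(I)$ exactly when $u_i\in\phi(I)$, i.e.\ when $I\not\models F_i$, which is precisely when the corresponding weak constraint $:\sim\neg F_i$ of ${\rm lpmln2wc^{pnt}}(\Pi)$ fires under $I$. Hence the level‑$0$ penalties of $\phi(I)$ and of $I$ are literally equal, while at level $1$ the penalty of $\phi(I)$ equals $\alpha$, and that of $I$ equals $1$, times the number $|\{i:\ w_i=\alpha,\ I\not\models F_i\}|$ of violated hard rules; since a common positive factor never affects the comparisons used in the domination test, the domination relation induced on $\sm[\Pi]$ is the same for both programs. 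Therefore $\phi$ is an isomorphism with respect to domination and maps optimal stable models to optimal stable models bijectively. Combining this with Corollary~\ref{cor:lpmln2wc-pnt}, which identifies the most probable stable models of $\Pi$ with the optimal stable models of ${\rm lpmln2wc^{pnt}}(\Pi)$ (all of which lie in $\sm[\Pi]$, as the weak constraints do not affect the stable models), yields the claimed $1$-$1$ correspondence $\phi$.
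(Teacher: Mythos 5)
Your proof is correct, and it establishes the same two facts the paper needs — the stable-model bijection and the penalty correspondence — but by genuinely different means in both halves. For the bijection, the paper isolates your main claim as its Lemma~\ref{lem:c2} and proves it with heavier machinery: it splits the rule part into the $\i{unsat}$-defining half and the $F_i$-deriving half via the splitting theorem of Ferraris et al., and handles each half using Lemma~\ref{lem:sm-mm} (stable models relative to a subsignature). Your direct computation of the Ferraris--Lee--Lifschitz reduct of the whole rule part is more elementary and self-contained, and the case analysis checks out: when $I\models F_i$ and $\i{unsat}(i)\in J$, both conjuncts of the reduct for index $i$ become tautologies, so minimality forces $\i{unsat}(i)\notin J$; when $I\not\models F_i$ the pair contributes the unit conjunct $\i{unsat}(i)$; otherwise it contributes $F_i^I$; and the decoupling of minimality over the disjoint vocabularies legitimately reduces stability of $\phi(I)$ to $I$ being a stable model of $\o{\Pi_I}$, so the step you flagged as the main obstacle does go through. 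For the penalty half, the paper does not route through Corollary~\ref{cor:lpmln2wc-pnt} at all; it re-derives an explicit argmax/argmin chain relating $W^{\rm hard}_\Pi$ and $W^{\rm soft}_\Pi$ directly to the penalties of $\phi(\cdot)$ at levels $1$ and $0$. Your shortcut — level-$0$ penalties agree exactly, level-$1$ penalties differ by the common positive factor $\alpha$, hence $\phi$ is an isomorphism for the domination relation and optimal models transfer, after which Corollary~\ref{cor:lpmln2wc-pnt} finishes — is sound and avoids repeating that computation; treating $\alpha$ as a formal positive weight in the level-$1$ comparison is exactly the license the paper itself takes in its own chain. What the paper's route buys is modularity: the splitting theorem and signature-relative stable models are reused in the proofs of Corollary~\ref{cor:lpmln2wc-pnt-asp}, Corollary~\ref{cor:lpmln2wc-pnt-clingo-s}, and Theorem~\ref{thm:plog2lpmln}, whereas your reduct argument is tailored to this particular program shape. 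What your route buys is a shorter, dependency-free proof of the bijection and a cleaner isolation of the only point where the two translations' penalty schemes actually differ.
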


The corollary allows us to compute the most probable stable models (MAP estimates) of the $\lpmln$ program using the combination of {\sc f2lp}~\footnote{\url{http://reasoning.eas.asu.edu/f2lp/}} and {\sc clingo} \footnote{\url{http://potassco.sourceforge.net}} (assuming that the weights are approximated to integers).  System {\sc f2lp} turns this program with formulas into the input language of {\sc clingo}, so {\sc clingo} can be used to compute the theory.

If the unweighted $\lpmln$ program is already in the rule form $\i{Head}\ar\i{Body}$
that is allowed in the input languages of  {\sc clingo} and {\sc dlv}, we may avoid the use of {\sc f2lp} by slightly modifying the translation ${\rm lpmln2wc^{pnt,rule}}$ by turning each weighted rule
\[
  w_i: \i{Head}_i\ar\i{Body}_i
\] 
instead into
\[
\ba {rcl}
  \i{unsat}(i) &\ar& \i{Body}_i, \no\ \i{Head}_i  \\
  \i{Head}_i &\ar& \i{Body}_i, \no\ \i{unsat}(i) \\
  &:\sim&  \i{unsat}(i)\ \ \ \ [w_i@l]
\ea
\]
where $l=1$ if $w_i$ is $\alpha$ and $l=0$ otherwise.

In the case when $\i{Head}_i$ is $\bot$, the translation can be further simplified: we simply turn $w_i : \bot \ar \i{Body}_i$ into  $:\sim  \i{Body}_i \ \ \ \ [w_i@l]$.\footnote{Alternatively, we may turn it into the ``reward'' way, i.e., turning it into $:\sim \no\ \i{Body}_i [-w_i]$, but the rule may not be in the input language of {\sc clingo}.}

\medskip\noindent
{\bf Example~\ref{ex:1} continued:}\ \ 
{\sl 
For program \eqref{ex1}, the simplified translation ${\rm lpmln2wc^{pnt,rule}}$ yields 
\[
\footnotesize
\ba {l|l|lr}
  \i{unsat}(1) \ar p,\no\ q  &  q \ar p, \no\ \i{unsat}(1) &  :\sim \i{unsat}(1) & [10@0] \\
  \i{unsat}(2) \ar p, \no\ r   &  r \ar p, \no\ \i{unsat}(2)  &  :\sim \i{unsat}(2) &  [1@0] \\
  \i{unsat}(3) \ar \no\ p     &    p \ar \no\ \i{unsat}(3) &  :\sim \i{unsat}(3) & [5@0]  \\
                                   &   &  :\sim \no\ r & [-20@0] 
\ea
\]
}

\section{Turning P-log into $\lpmln$} \label{sec:plog2lpmln}

\subsection{Review: P-log} \label{sec:syntax}

\subsubsection{Syntax}
 
A {\em sort} is a set of symbols.
A {\em constant} $c$ maps an element in the {\em domain} $s_1\times\cdots\times s_n$ to an element in the {\em range} $s_0$ (denoted by $\i{Range}(c)$), where each of $s_0,\dots,s_n$ is a sort. 
A {\em sorted propositional signature} is a special case of propositional signatures constructed from a set of constants and their associated sorts, consisting of all propositional atoms $c(\vec{u})\mvis v$ where
$c: s_1\times\cdots\times s_n\rar s_0$, and $\vec{u}\in s_1\times\cdots\times s_n$, and $v\in s_0$.\footnote{%
Note that here ``='' is just a part of the symbol for propositional atoms, and is not  equality in first-order logic. }
Symbol $c(\vec{u})$ is called an {\em attribute} and $v$ is called its {\em value}.
If the range $s_0$ of $c$ is $\{\false,\true\}$ then $c$ is called {\em Boolean}, and $c(\vec{u})\mvis\true$ can be abbreviated as $c(\vec{u})$ and $c(\vec{u})\mvis\false$ as~$\sneg c(\vec{u})$. 

The signature of a P-log program is the union of two propositional signatures $\sigma_1$ and $\sigma_2$, where $\sigma_1$ is a sorted propositional signature, and $\sigma_2$ is a usual propositional signature consisting of atoms $\i{Do}(c(\vec{u})\mvis v)$, $\i{Obs}(c(\vec{u})\mvis v)$ and $\i{Obs}(c(\vec{u})\!\ne\! v)$ for all atoms $c(\vec{u})\mvis v$ in $\sigma_1$.

A P-log program $\Pi$ of signature $\sigma_1\cup\sigma_2$ is a tuple
\begin{equation}\label{plog}
\Pi = \langle {\bf R}, {\bf S}, {\bf P}, {\bf Obs}, {\bf Act} \rangle
\end{equation}
where the signature of each of ${\bf R}$, ${\bf S}$, and ${\bf P}$ is $\sigma_1$ and the signature of each of ${\bf Obs}$ and ${\bf Act}$ is $\sigma_2$ such that
\begin{itemize}
\item  ${\bf R}$ is a set of {\em normal rules} of the form
\[ 
  A \ar B_1, \dots, B_m, \no\ B_{m+1}, \dots, \no\ B_n
\] 
where $A, B_1,\dots, B_n$ are atoms ($0\leq m\leq n$).

\item ${\bf S}$ is a set of {\em random selection rules} of the form
\beq
\ba l
[r]\ \ random(c(\vec{u}): \{x: p(x)\})\ar \i{Body}
\ea 
\eeq{random}
where $r$ is a unique identifier,  $p$ is a boolean constant with a unary argument, and $\i{Body}$ is a set of literals. 
{$x$ is a schematic variable ranging over the argument sort of $p$.} Rule~\eqref{random} is called a {\em random selection} rule for $c(\vec{u})$.
Intuitively, rule \eqref{random} says that if $\i{Body}$ is true, the value of $c(\vec{u})$ is selected at random from the set $\i{Range}(c)\cap \{x: p(x)\} $ unless this value is fixed by a deliberate action, i.e., $Do(c(\vec{u}) \mvis  v)$ for some value $v$.  

\item ${\bf P}$ is a set of so-called {\em probability atoms (pr-atoms)} 
of the form 
\beq 
   pr_r(c(\vec{u})\mvis v \mid C) = p
\eeq{pratom}
where $r$ is the identifier of some random selection rule for $c(\vec{u})$ in~${\bf S}$; $c(\vec{u})\mvis v \in\sigma_1$; $C$ is a set of literals; and $p$ is a real number in $[0, 1]$. We say pr-atom \eqref{pratom} is {\em associated} with the random selection rule whose identifier is $r$.

\item ${\bf Obs}$ is a set of atomic facts for representing ``observation'': 
$Obs(c(\vec{u}) \mvis  v)$ and $Obs(c(\vec{u}) \neq v)$.

\item ${\bf Act}$ is a set of atomic facts for representing a deliberate action: 
$ 
Do(c(\vec{u}) \mvis  v).
$ 
\end{itemize}

\subsubsection{Semantics} \label{sub:semantics}

Let $\Pi$ be a P-log program \eqref{plog} of signature $\sigma_1 \cup \sigma_2$. The possible worlds of $\Pi$, denoted by $\omega(\Pi)$, are the stable models of $\tau(\Pi)$, a (standard) ASP program with the propositional signature 
\[ 
   \sigma_1\cup\sigma_2\cup\{\i{Intervene}(c(\vec{u})) \mid c(\vec{u}) \text{ is an attribute occurring in } {\bf S}\}
\]
that accounts for the logical part of P-log.
Due to lack of space we refer the reader to~\cite{baral09probabilistic} for the definition of $\tau(\Pi)$.

An atom $c(\vec{u}) \mvis  v$ is called {\em possible} in a possible world $W$ due to a random selection rule \eqref{random} if $\Pi$ contains \eqref{random} such that $W \models\i{Body} \land p(v)\land \neg\i{Intervene}(c(\vec{u}))$.\footnote{%
Note that this is slightly different from the original definition of P-log from~\cite{baral09probabilistic}, according to which, if $\i{Intervene}(c(\vec{u}))$ is true, the probability of $c(\vec{u}) \mvis  v$ is determined by the default probability, which is a bit unintuitive.} 
Pr-atom \eqref{pratom} is {\em applied} in $W$ if $c(\vec{u}) \mvis  v$ is possible in $W$ due to $r$ and $W \models C$.

As in \cite{baral09probabilistic}, we assume that all P-log programs $\Pi$ satisfy the following conditions:
\begin{itemize}
\item
\textbf{Condition 1 [Unique random selection rule]: }
If a P-log program $\Pi$ contains two random selection rules for $c(\vec{u})$:
\[ 
\ba l
   \left[r_1\right]\ random(c(\vec{u}): \{x: p_1(x) \}) \ar \i{Body}_1, \\[0.3em]
   \left[r_2\right]\ random(c(\vec{u}): \{x: p_2(x) \}) \ar \i{Body}_2,
\ea
\]
then no possible world of $\Pi$ satisfies both $\i{Body}_1$ and $\i{Body}_2$. 

\item
\textbf{Condition 2 [Unique probability assignment]: }
If a P-log program $\Pi$ contains a random selection rule for $c(\vec{u})$:
\[ 
  \left[ r \right] \ random(c(\vec{u}): \{ x: p(x) \}) \ar \i{Body}
\]
along with two different pr-atoms:    
\[ 
\ba l
   pr_r(c(\vec{u}) \mvis  v \ | \ C_1) = p_1 , \\[0.3em]
   pr_r(c(\vec{u}) \mvis  v \ | \ C_2) = p_2 ,
\ea
\]
then no possible world of $\Pi$ satisfies $\i{Body}$, $C_1$, and $C_2$ together. 
\end{itemize}

\smallskip
Given a P-log program $\Pi$, a possible world $W\in\omega(\Pi)$, and an atom $c(\vec{u}) \mvis  v$ possible in $W$, by \textbf{Condition 1}, it follows that there is exactly one random selection rule \eqref{random} such that 
$W\models\i{Body}$. Let $r_{W, c(\vec{u})}$ denote this random selection rule, and let $AV_W(c(\vec{u})) = \{v' \mid $ there exists a pr-atom 
$pr_{r_{W, c(\vec{u})}}(c(\vec{u}) = v'\mid C)=p$ that is applied in $W$ for some $C$ and $p \}$.
We then define the following notations:
\begin{itemize}
\item  If $v\in AV_W(c(\vec{u}))$, there exists a pr-atom 
\hbox{$pr_{r_{W, c(\vec{u})}} (c(\vec{u}) \mvis  v\mid C) = p $} in $\Pi$ 
for some $C$ and $p$ such that $W\models C$. By \textbf{Condition 2}, 
for any other $pr_{r_{W, c(\vec{u})}} (c(\vec{u}) \mvis  v\mid C') = p'$ in $\Pi$, it follows that $W \not \models C'$. So there is only one pr-atom that is applied in $W$ for $c(\vec{u})\mvis v$, and we define
\[ 
   \i{PossWithAssPr}(W, c(\vec{u}) \mvis  v) = p.
\] 
{(``$c(\vec{u}) \mvis  v$ is possible in $W$ with assigned probability $p$.'')}

\item 
If $v\not\in AV_W(c(\vec{u}))$, we define
\[ 
   \i{PossWithDefPr}(W, c(\vec{u})\mvis v) = \max\big( p, 0 \big),
\] 
where $p$ is 
{\small
\begin{multline*}
   \frac{1 - \sum_{v'\in AV_W(c(\vec{u}))} \i{PossWithAssPr}(W, c(\vec{u}) \mvis  v')}
{ 
  |\{v'' \mid c(\vec{u})\mvis v'' \text{ is possible in } W \text{ and } v'' \not\in AV_W(c(\vec{u})) \}|
}. 
\tagthis\label{pwdp}
\end{multline*}
}

{ ( ``$c(\vec{u}) \mvis  v$ is possible in $W$ with the default probability.'') }

The $\max$ function is used to ensure that the default probability is nonnegative.~\footnote{In~\cite{baral09probabilistic}, a stronger condition of  ``unitariness" is imposed to prevent \eqref{pwdp} from being negative.} 
\end{itemize}

For each possible world $W\in \omega (\Pi)$, and each atom $c(\vec{u}) \mvis  v$ possible in $W$, the probability of $c(\vec{u}) \mvis  v$ to {\em happen} in $W$ is defined as:
\[
\ba l
P(W, c(\vec{u}) \mvis  v) = \\ \hspace{2em}
\begin{cases}
\i{PossWithAssPr}(W, c(\vec{u}) \mvis  v) & \text{if $v\in AV_W(c(\vec{u}))$;} \\
\i{PossWithDefPr}(W, c(\vec{u}) \mvis  v) & \text{otherwise.}
\end{cases}
\ea
\]

The {\em unnormalized probability} of a possible world $W$ is defined as
\[ 
  \hat{\mu}_{\Pi}(W) = \prod_{c(\vec{u})\mvis v\in W \text{ and }\atop c(\vec{u})\mvis v \text{ is possible in } W} P(W, c(\vec{u}) \mvis  v),
\]
and, assuming $\Pi$ has at least one possible world with nonzero unnormalized probability, the \emph{normalized probability} of $W$ is defined as
\[ 
  \mu_{\Pi}(W) = \frac{\hat{\mu}_{\Pi}(W)}{\sum_{W_i\in\omega(\Pi)}\hat{\mu}_{\Pi}(W_i)}.
\]

We say $\Pi$ is {\em consistent} if $\Pi$ has at least one possible world {with a non-zero probability}.

\begin{example} \label{ex:monty}
Consider a variant of the Monty Hall Problem encoding in P-log from~\cite{baral09probabilistic}  to illustrate the probabilistic nonmonotonicity in the presence of assigned probabilities.  There are four doors, behind which are three goats and one car. The guest picks  door~1, and Monty, the show host who always opens one of the doors with a goat, opens door~2. Further, while the guest and Monty are unaware, 
the statistics is that in the past, with 30\% chance the prize was behind door 1, and with 20\% chance, the prize was behind  door~3. Is it still better to switch to another door? 
This example can be formalized in P-log program $\Pi$, using both assigned probability and default probability, as 
{\small
\[ 
\ba{l}
\sneg \i{CanOpen}(d) \ar \i{Selected} \mvis  d.   \ \ \ \ (d \in \{ 1, 2, 3, 4 \}) , \\
\sneg \i{CanOpen}(d) \ar \i{Prize} \mvis  d. \\
\i{CanOpen}(d) \ar \no\ \sneg \i{CanOpen}(d). \\

random(\i{Prize}). \ \ \ \ \ \ random(\i{Selected}). \\
random(\i{Open}: \{x: \i{CanOpen}(x)\}).\\
pr(\i{Prize}\mvis 1) = 0.3. \ \ pr(\i{Prize}\mvis 3)=0.2. \\ 
Obs(\i{Selected}\mvis 1). \ \   Obs(\i{Open}\mvis 2).  \ \ Obs(\i{Prize} \neq 2).  \\
\ea
\] 
}

The possible worlds of $\Pi$ are as follows:
{\footnotesize
\bi
\ii  $W_1 = \{   \i{Obs}(\i{Selected}=1), \i{Obs}(\i{Open}=2), \i{Obs}(\i{Prize}\ne 2), 
                          \i{Selected}\mvis 1, \i{Open}\mvis 2, \i{Prize}\mvis 1,                        
                      \i{CanOpen}(1)=\false, \i{CanOpen}(2)=\true, \i{CanOpen}(3)=\true, \i{CanOpen}(4)=\true \}$ 
\ii $W_2 = \{   \i{Obs}(\i{Selected}=1), \i{Obs}(\i{Open}=2), \i{Obs}(\i{Prize}\ne 2), 
                         \i{Selected}\mvis 1, \i{Open}\mvis 2, \i{Prize}\mvis 3,
                      \i{CanOpen}(1)=\false, \i{CanOpen}(2)=\true, \i{CanOpen}(3)=\false, \i{CanOpen}(4)=\true \}$

\ii $W_3 = \{  \i{Obs}(\i{Selected}=1), \i{Obs}(\i{Open}=2), \i{Obs}(\i{Prize}\ne 2),  
                         \i{Selected}\mvis 1, \i{Open}\mvis 2, \i{Prize}\mvis 4, 
                      \i{CanOpen}(1)=\false, \i{CanOpen}(2)=\true, \i{CanOpen}(3)=\true, \i{CanOpen}(4)=\false \}$.
\ei
}
The probability of each atom to happen is 
\[
\small
\ba l
P(W_i, \i{Selected}\mvis 1) = \i{PossWithDefPr}(W, \i{Selected}\mvis 1)  = 1/4 \\ 
\\
P(W_1, \i{Open}\mvis 2) = \i{PossWithDefPr}(W_1, \i{Open}\mvis 2) = 1/3  \\
P(W_2, \i{Open}\mvis 2) = \i{PossWithDefPr}(W_2, \i{Open}\mvis 2) = 1/2  \\
P(W_3, \i{Open}\mvis 2) = \i{PossWithDefPr}(W_3, \i{Open}\mvis 2) = 1/2  \\ \\
P(W_1, \i{Prize}\mvis 1) = \i{PossWithAssPr}(W_1, \i{Prize}\mvis 1)  = 0.3\\
P(W_2, \i{Prize}\mvis 3) = \i{PossWithAssPr}(W_2, \i{Prize}\mvis 3)  = 0.2\\
P(W_3, \i{Prize}\mvis 4) = \i{PossWithDefPr}(W_3, \i{Prize}\mvis 4)  = 0.25 
\ea
\]
So,
\bi
{\small 
\ii $\hat{\mu}_\Pi(W_1) = 1/4 \times 1/3 \times 0.3 = 1/40$
\ii $\hat{\mu}_\Pi(W_2) = 1/4 \times 1/2 \times 0.2 = 1/40$
\ii $\hat{\mu}_\Pi(W_3) = 1/4 \times 1/2 \times 0.25 = 1/32$.
}
\ei
Thus, in comparison with staying ($W_1$), switching to door~$3$ ($W_2$) does not affect the chance, but switching to door $4$ ($W_3$) increases the chance by 25\%.  
\end{example}

\subsection{Turning P-log into $\lpmln$} \label{ssec:plog2lpmln}

We define translation ${\rm plog2lpmln}(\Pi)$ that turns a P-log program $\Pi$ into an $\lpmln$ program in a modular way.
First, every rule $R$ in $\tau(\Pi)$ (that is used in defining the possible worlds in P-log) is turned into a hard rule $\alpha: R$ in ${\rm plog2lpmln}(\Pi)$. 
In addition, ${\rm plog2lpmln}(\Pi)$ contains the following rules to associate probability to each possible world of $\Pi$. Below $x$, $y$ denote schematic variables, and $W$ is a possible world of~$\Pi$.

\bigskip
\noindent
{\bf Possible Atoms:}\ \ \ 
For each random selection rule \eqref{random} for $c(\vec{u})$ in ${\bf S}$
and for each $v\in\i{Range}(c)$,  ${\rm plog2lpmln}(\Pi)$ includes
\beq
   \i{Poss}_r(c(\vec{u})=v) \ar \i{Body}, p(v), \no\ \i{Intervene}(c(\vec{u}))
\eeq {poss}
Rule~\eqref{poss} expresses that $c(\vec{u})=v$ is possible in $W$ due to $r$ if $W\vDash \i{Body}\land p(v)\land \neg \i{Intervene}(c(\vec{u}))$.

\smallskip\noindent
{\bf Assigned Probability:}\ \ \ 
For each pr-atom~\eqref{pratom}
in \textbf{P}, ${\rm plog2lpmln}(\Pi)$ contains the following  rules:
{\small
\begin{align}
\alpha:&\ \ \i{PossWithAssPr}_{r, C}(c(\vec{u}) \mvis  v) \ar \notag\\
  & \hspace{5.5em}  \i{Poss}_r(c(\vec{u})=v),  C  \taglabel{pratomapplied} \\
\alpha:&\ \ \i{AssPr}_{r, C}(c(\vec{u}) \mvis  v) \ar  \notag \\
  &  \hspace{5.5em} c(\vec{u}) \mvis  v, \i{PossWithAssPr}_{r, C}(c(\vec{u}) \mvis  v)    \taglabel{asspr} \\               
ln(p):& \ \  \bot \ar \no\ \i{AssPr}_{r, C}(c(\vec{u}) \mvis  v)  \hspace{5mm}  \text{ ($p >0$) }  \taglabel{w-asspr}   \\ 
\alpha:&\ \ \bot\ar \i{AssPr}_{r,C}(c(\vec{u})\mvis v)   \hspace{11mm}\text{ ($p = 0$)} \notag \\ 
\alpha:& \ \ \i{PossWithAssPr}(c(\vec{u}) \mvis  v) \ar  \i{PossWithAssPr}_{r, C}(c(\vec{u}) \mvis  v).  \notag 
\end{align}
}
Rule~\eqref{pratomapplied} expresses the condition under which pr-atom \eqref{pratom} is applied in a possible world $W$. Further,  if $c(\vec{u}) \mvis  v$ is true in $W$ as well, rules~\eqref{asspr}  and \eqref{w-asspr}  contribute the assigned probability $e^{ln(p)} = p$ to the unnormalized probability of $W$ as a factor when $p>0$.  

\medskip\noindent
{\bf Denominator for Default Probability:}\ \ \ 
For each random selection rule \eqref{random} for $c(\vec{u})$ in ${\bf S}$
and for each $v\in\i{Range}(c)$,  ${\rm plog2lpmln}(\Pi)$ includes
{\small
\beq
\ba l
   \alpha: \ \i{PossWithDefPr}(c(\vec{u}) \mvis  v) \ar  \\
   \hspace{1cm}                \i{Poss}_r(c(\vec{u})\mvis v), \no\ \i{PossWithAssPr}(c(\vec{u}) \mvis  v)
\ea
\eeq{posswithdefpr} 
\beq    
\ba {rl} 
   \alpha:& \ \i{NumDefPr}(c(\vec{u}), x) \ar   \\ 
   & \hspace{1em}        c(\vec{u}) \mvis  v,   \i{PossWithDefPr}(c(\vec{u})=v),  \\            
   & \hspace{1em}         x = \#count\{y: \i{PossWithDefPr}(c(\vec{u}) \mvis  y) \}  \\
\ea
\eeq{numdefpr}             
\beq
\ba {rl}
  ln(\frac{1}{m}):& \bot \ar \no\ \i{NumDefPr}(c(\vec{u}), m)\\ 
  & \hspace{8em}  (m = 2, \dots, |\i{Range}(c) |)
\ea
\eeq{w-numdefpr}  
}
Rule \eqref{posswithdefpr} asserts that $c(\vec{u}) \mvis  v$ is possible in $W$ with a default probability if it is possible in $W$ and not possible with an assigned probability.
Rule~\eqref{numdefpr} expresses, intuitively, that $\i{NumDefPr}(c(\vec{u}), x)$ is true if there are exactly $x$ different values $v$ such that $c(\vec{u}) \mvis  v$ is possible in $W$ with a default probability, and there is at least one of them that is also true in $W$.  This value $x$ is the denominator of \eqref{pwdp}. 
Then rule~\eqref{w-numdefpr} contributes the factor $1/x$ to the unnormalized probability of $W$ as a factor. 

\medskip\noindent
{\bf Numerator for Default Probability:}\ \ \  
\begin{itemize}
\item Consider each random selection rule 
$[r]\ \ random(c(\vec{u}): \{x: p(x)\})\ar \i{Body}$
for $c(\vec{u})$ in \textbf{S} along with all pr-atoms associated with it in \textbf{P}:
\[ 
\ba l
    pr_r(c(\vec{u}) \mvis  v_1  \mid  C_1) = p_1   \\
    \dots  \\ 
    pr_r(c(\vec{u}) \mvis  v_n  \mid  C_n) = p_n 
\ea
\]

\noindent where $n\ge 1$, and $v_i$ and $v_j$ $(i\ne j)$ may be equal.
For each $v\in \i{Range}(c)$, ${\rm plog2lpmln(\Pi)}$ contains the following rules:\footnote{The sum aggregate can be represented by ground first-order formulas under the stable model semantics under the assumption that the Herbrand Universe is finite \cite{ferraris11logic}. In the general case, it can be represented by generalized quantifiers~\cite{lee12stable1} or infinitary propositional formulas~\cite{harrison14thesemantics}. In the input language of ASP solvers, which does not allow real number arguments, $p_i$ can be approximated to integers of some fixed interval.}
{\small 
\begin{align*}
 \alpha: &\ \   \i{RemPr}(c(\vec{u}), 1\!-\!y) \ar  \i{Body} \\ 
              &\ \ \hspace{0.5em} c(\vec{u}) \mvis  v, \i{PossWithDefPr}(c(\vec{u}) \mvis  v), \\
              &\ \  \hspace{0.5em} y = \#{\rm sum}\{p_1:\i{PossWithAssPr}_{r, C_1}(c(\vec{u}) \mvis  v_1); \\  
              & \ \  \hspace{3.5em}  \dots; p_n:\i{PossWithAssPr}_{r, C_n}(c(\vec{u}) \mvis  v_n)\}.   
              \taglabel{rempr}   \\
\alpha: &\ \  \i{TotalDefPr}(c(\vec{u}), x) \ar  \i{RemPr}(c(\vec{u}), x), x > 0   \taglabel{totaldefpr} \\
ln(x): & \ \   \bot\ar \no\ \i{TotalDefPr}(c(\vec{u}), x)    \taglabel{w-totaldefpr} \\
\alpha: &\ \  \bot\ar \i{RemPr}(c(\vec{u}), x), x \le 0.   \taglabel{TotalDefPr1}
\end{align*}
}
\vspace{-4mm}

\noindent
In rule~\eqref{rempr}, $y$ is the sum of all assigned probabilities.  
Rules~\eqref{totaldefpr} and \eqref{w-totaldefpr} are to account for the numerator of \eqref{pwdp} when $n>0$. The variable $x$ stands for the numerator of \eqref{pwdp}. 
Rule~\eqref{TotalDefPr1} is to avoid assigning a non-positive default probability to a possible world.
\end{itemize}

Note that most rules in ${\rm plog2lpmln(\Pi)}$ are hard rules. The soft rules \eqref{w-asspr}, \eqref{w-numdefpr}, \eqref{w-totaldefpr} cannot be simplified as atomic facts, e.g., 
$ln(\frac{1}{m}): \  \i{NumDefPr}(c(\vec{u}), m)$ in place of \eqref{w-numdefpr}, which is in contrast with the use of probabilistic choice atoms in the distribution semantics based probabilistic logic programming language, such as ProbLog. 
This is related to the fact that the probability of each atom to happen in a possible word in P-log is derived from assigned and default probabilities, and not from independent probabilistic choices like the other probabilistic logic programming languages. In conjunction with the embedding of ProbLog in $\lpmln$ \cite{lee16weighted}, it is interesting to note that both kinds of probabilities can be captured in $\lpmln$ using different kinds of rules. 

\medskip
\noindent{\bf Example~\ref{ex:monty} Continued}\ \ 
For the program $\Pi$ in Example~\ref{ex:monty}, ${\rm plog2lpmln}(\Pi)$ consists of the rules $\alpha: R$ for each rule $R$ in $\tau(\Pi)$ and the following rules.

\bigskip
\noindent
{\bf Possible Atoms:}\ \ 
{\small
\[
\ba l
\alpha : \i{Poss}(\i{Prize} =d) \leftarrow \no\ \i{Intervene}(\i{Prize})\\

\alpha : \i{Poss}(\i{Selected} =d) \leftarrow \no\ \i{Intervene}(\i{Selected})\\

\alpha : \i{Poss}(\i{Open} =d) \leftarrow \i{CanOpen}(d), \no\ \i{Intervene}(\i{Open})\\
\ea 
\]
}

\noindent
{\bf Assigned Probability:}\ \ \ 
{\small
\[
\ba l
\alpha : \i{PossWithAssPr}(\i{Prize} = 1) \leftarrow \i{Poss}(\i{Prize} =1) \\

\alpha : \i{AssPr}(\i{Prize} = 1) \leftarrow \i{Prize} = 1, \i{PossWithAssPr}(\i{Prize} = 1) \\

ln(0.3): \bot \leftarrow \no\ \i{AssPr}(\i{Prize} = 1)\\
\\
\alpha : \i{PossWithAssPr}(\i{Prize} = 3) \leftarrow \i{Poss}(\i{Prize} =3) \\

\alpha : \i{AssPr}(\i{Prize} = 3) \leftarrow \i{Prize} = 3, \i{PossWithAssPr}(\i{Prize} = 3) \\

ln(0.2): \bot \leftarrow \no\ \i{AssPr}(\i{Prize} = 3)\\
\ea 
\]
}

(We simplified slightly not to distinguish $\i{PossWithAssPr}(\cdot)$ and $\i{PossWithAssPr}_{r,C}(\cdot)$  because there is only one random selection rule  for $\i{Prize}$ and both $pr$-atoms for $\i{Prize}$ has empty conditions. )

\medskip\noindent
{\bf Denominator for Default Probability:}\ \ \ 
{\small
\[
\ba l
\alpha : \i{PossWithDefPr}(\i{Prize} =d) \leftarrow \\
\hspace{3em} \i{Poss}(\i{Prize} =d), \no\ \i{PossWithAssPr}(\i{Prize} =d)\\

\alpha : \i{PossWithDefPr}(\i{Selected} =d) \leftarrow  \\
\hspace{3em} \i{Poss}(\i{Selected} =d), \no\ \i{PossWithAssPr}(\i{Selected} =d)\\

\alpha : \i{PossWithDefPr}(\i{Open} =d) \leftarrow  \\
\hspace{3em}\i{Poss}(\i{Open} =d), \no\ \i{PossWithAssPr}(\i{Open} =d)
\ea 
\]
}
{\small
\[
\ba l
\alpha : \i{NumDefPr}(\i{Prize},x) \leftarrow \\ 
\hspace{3em} \i{Prize} = d, \i{PossWithDefPr}(\i{Prize} =d),\\
\hspace{3em}  x = \# count \{ y: \i{PossWithDefPr}(\i{Prize} =y) \} \\

\alpha : \i{NumDefPr}(\i{Selected},x) \leftarrow \\
\hspace{3em} \i{Selected} = d, \i{PossWithDefPr}(\i{Selected} =d), \\
\hspace{3em} x = \# count \{ y: \i{PossWithDefPr}(\i{Selected} =y) \} \\

\alpha : \i{NumDefPr}(\i{Open},x) \leftarrow \\
\hspace{3em} \i{Open} = d, \i{PossWithDefPr}(\i{Open} =d), \\
\hspace{3em} x = \# count \{ y: \i{PossWithDefPr}(\i{Open} =y) \} \\

ln(\frac{1}{m}): \leftarrow \no\ \i{NumDefPr}(c,m) \\
\hspace{6em}    (c\in \{ \i{Prize}, \i{Selected}, \i{Open} \}, m\in \{ 2,3,4\}  )
\ea 
\]
}

\noindent
{\bf Numerator for Default Probability:}\ \ \ 
{\small
\[ 
\ba l
\alpha : \i{RemPr}(\i{Prize}, 1\!-\!x) \leftarrow \i{Prize} = d, \i{PossWithDefPr}(\i{Prize} = d), \\
\hspace{5em} x = \#{\rm sum}\{0.3:\i{PossWithAssPr}(\i{Prize}\mvis 1);  \\
\hspace{92pt} 0.2:\i{PossWithAssPr}(\i{Prize}\mvis 3) \} \\

\alpha : \i{TotalDefPr}(\i{Prize},x) \leftarrow \i{RemPr}(\i{Prize},x), x>0 \\

ln(x): \bot \leftarrow \no\ \i{TotalDefPr}(\i{Prize},x) \\

\alpha : \bot \leftarrow \i{RemDefPr}(\i{Prize},x), x\leq 0 
\ea
\]
}

Clearly, the signature of ${\rm plog2lpmln}(\Pi)$ is a superset of the signature of $\Pi$.  Further, ${\rm plog2lpmln}(\Pi)$ is linear-time constructible.
The following theorem tells us that there is a 1-1 correspondence between the set of the possible worlds with non-zero probabilities of $\Pi$ and the set of the stable models of ${\rm plog2lpmln}(\Pi)$ such that each stable model is an extension of the possible world, and the probability of each possible world of $\Pi$ coincides with the probability of the corresponding stable model of ${\rm plog2lpmln}(\Pi)$.
\begin{thm}\label{thm:plog2lpmln}
Let $\Pi$ be a consistent P-log program. There is a 1-1 correspondence $\phi$ between the set of the possible worlds of $\Pi$ with non-zero probabilities and the set of probabilistic stable models of ${\rm plog2lpmln}(\Pi)$ such that 
\bi
\ii  For every possible world $W$ of $\Pi$ that has a non-zero probability, $\phi(W)$ is a probabilistic stable model of ${\rm plog2lpmln(\Pi)}$, and $\mu_\Pi(W) = P_{\rm plog2lpmln(\Pi)}(\phi(W))$.

\ii For every probabilistic stable model $I$ of ${\rm plog2lpmln(\Pi)}$,  the restriction of $I$ onto the signature of $\tau(\Pi)$, denoted $I|_{\sigma(\tau(\Pi))}$, is a possible world of $\Pi$ and $\mu_\Pi(I|_{\sigma(\tau(\Pi))})>0$.
\ei
\end{thm}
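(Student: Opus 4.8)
The plan is to prove the theorem by separating the purely logical part of ${\rm plog2lpmln}(\Pi)$ from the probability-encoding part, and then matching the unnormalized weights factor by factor with $\hat{\mu}_\Pi$. First I would observe that, since $\Pi$ is consistent, it has a possible world $W$ with $\hat{\mu}_\Pi(W)>0$, and I would check that its canonical extension satisfies every hard rule of ${\rm plog2lpmln}(\Pi)$; hence all hard rules are jointly satisfiable. By the standard property of $\lpmln$ (a consequence of the $\alpha\to\infty$ limit in the definition of $P_\Pi$), every probabilistic stable model of ${\rm plog2lpmln}(\Pi)$ must therefore satisfy all hard rules, and in particular all the rules $\alpha\!:\!R$ with $R\in\tau(\Pi)$. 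Using a splitting-set argument with $\sigma(\tau(\Pi))$ as the bottom signature---legitimate because every auxiliary rule has its head either among the new atoms $\i{Poss}_r,\i{PossWithAssPr},\i{AssPr},\i{PossWithDefPr},\i{NumDefPr},\i{RemPr},\i{TotalDefPr}$ or equal to $\bot$, while its body refers only to $\sigma(\tau(\Pi))$ and lower auxiliary atoms---I would conclude that the restriction $I|_{\sigma(\tau(\Pi))}$ of any such model is exactly a stable model of $\tau(\Pi)$, i.e. a possible world of $\Pi$.

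Next I would establish that each possible world $W$ determines its extension uniquely. The auxiliary rules, although they use negation as failure in \eqref{posswithdefpr} and \eqref{rempr}, form a stratified definition over $\sigma(\tau(\Pi))$: $\i{Poss}_r$ depends only on $W$; $\i{PossWithAssPr}$ on $\i{Poss}_r$; $\i{PossWithDefPr}$ on $\i{Poss}_r$ and the negation of the already-fixed $\i{PossWithAssPr}$; and $\i{NumDefPr},\i{RemPr},\i{TotalDefPr}$ on these through the $\#\mathrm{count}$ and $\#\mathrm{sum}$ aggregates. So the minimal model over the new atoms is uniquely fixed, and I define $\phi(W)$ to be this extension. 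I would then read off the intended meanings, invoking Condition~1 to guarantee that for each possible atom there is exactly one firing random selection rule, and Condition~2 to guarantee that at most one pr-atom is applied per value: this makes $\i{AssPr}_{r,C}(c(\vec{u})\mvis v)$ true precisely when $v\in AV_W(c(\vec{u}))$, $W\models c(\vec{u})\mvis v$, and its applied condition $C$ holds; $\i{NumDefPr}(c(\vec{u}),m)$ true precisely when $c(\vec{u})$ takes a default-possible value in $W$ and $m$ is the denominator of \eqref{pwdp}; and $\i{TotalDefPr}(c(\vec{u}),x)$ true precisely when $x$ is the (positive) numerator of \eqref{pwdp}.

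With the logical picture fixed, I would compute $W_{{\rm plog2lpmln}(\Pi)}(\phi(W))$. Because $\phi(W)$ satisfies every hard rule, the hard contribution is the constant $\alpha\cdot H$, where $H$ is the total number of hard rules, independent of $W$; only the soft rules \eqref{w-asspr}, \eqref{w-numdefpr}, \eqref{w-totaldefpr} vary. Each is a constraint of the form $w\!:\!\bot\ar\no\ A$, so it is satisfied---and contributes the factor $e^{w}$---exactly when $A$ holds in $\phi(W)$. Exponentiating the sum of the satisfied soft weights, the assigned-probability rules \eqref{w-asspr} contribute the product of the $p$'s over the applied pr-atoms whose value is true, matching the $\i{PossWithAssPr}$ factors of $\hat{\mu}_\Pi(W)$; and for each attribute whose actual value is default-possible, rule \eqref{w-numdefpr} contributes $1/m$ and rule \eqref{w-totaldefpr} contributes $x$, whose product $x/m$ is exactly $\i{PossWithDefPr}(W,c(\vec{u})\mvis v)$ from \eqref{pwdp}. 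Hence $W_{{\rm plog2lpmln}(\Pi)}(\phi(W))=e^{\alpha H}\,\hat{\mu}_\Pi(W)$. I would also check the degenerate cases: an assigned probability $0$ triggers the hard rule $\alpha\!:\!\bot\ar\i{AssPr}_{r,C}(c(\vec{u})\mvis v)$, and a non-positive default numerator triggers the hard constraint \eqref{TotalDefPr1}; in both situations $\phi(W)$ violates a hard rule and is thus not a probabilistic stable model, which agrees exactly with $\hat{\mu}_\Pi(W)=0$ on the P-log side (here the $\max(\cdot,0)$ in \eqref{pwdp} matters). Therefore $\phi$ is a bijection from the possible worlds with $\hat{\mu}_\Pi>0$ onto the probabilistic stable models, with restriction as its inverse.

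Finally, since all probabilistic stable models carry the same hard factor $e^{\alpha H}$, this factor cancels when the normalized weight $P_{{\rm plog2lpmln}(\Pi)}$ is formed and the limit $\alpha\to\infty$ is taken, leaving $P_{{\rm plog2lpmln}(\Pi)}(\phi(W))=\hat{\mu}_\Pi(W)\big/\sum_{W'}\hat{\mu}_\Pi(W')=\mu_\Pi(W)$, which yields both bullets. I expect the main obstacle to be the bookkeeping in the preceding step: verifying that the $\#\mathrm{count}$ aggregate in \eqref{numdefpr} and the $\#\mathrm{sum}$ aggregate in \eqref{rempr} evaluate, under the stratified semantics, to exactly the denominator and numerator of \eqref{pwdp}, and that Conditions~1 and~2 really do collapse the families $\i{PossWithAssPr}_{r,C}$ and $\i{NumDefPr}$ to a single relevant instance per attribute, so that no double counting inflates the product. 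Everything else is the routine splitting and stratification argument together with the cancellation of the hard weight in the limit.
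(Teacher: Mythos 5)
Your proposal is correct and follows essentially the same route as the paper's proof: split the auxiliary probability-encoding rules from $\tau(\Pi)$ via the splitting theorem, exploit their acyclicity over $\sigma(\tau(\Pi))$ to pin down the unique extension $\phi(W)$ (the paper does this by checking the completion of the auxiliary part where you invoke stratification), and match the soft-rule weights factor by factor against \eqref{pwdp}, with Conditions~1 and~2 ruling out double counting, and the hard constraints for $p=0$ and non-positive remainders handling the degenerate worlds. The only cosmetic difference is that where you cancel the constant hard-rule factor $e^{\alpha H}$ directly in the $\alpha\to\infty$ limit, the paper cites Proposition~2 of Lee and Wang (2016) (Lemma~3 in the appendix), which encapsulates exactly that cancellation.
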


\begin{proof} (Sketch) 
We can check that the following mapping $\phi$ is the 1-1 correspondence. 
\begin{enumerate}
\item  $\phi(W)\models \i{Poss}_r(c(\vec{u}) \mvis  v)$ iff $c(\vec{u})\mvis v$ is possible in $W$ due to $r$.

\item  For each pr-atom
    $pr_r(c(\vec{u}) \mvis  v \ | \ C) = p$ in $\Pi$, 
    
    $\phi(W)\models \i{PossWithAssPr}_{r, C}(c(\vec{u}) \mvis  v)$ iff 
    this pr-atom is applied in $W$.
    
\item  For each pr-atom $pr_r(c(\vec{u}) \mvis  v \ | \ C) = p$ in~$\Pi$, 
          
        $\phi(W)\models \i{AssPr}_{r, C}(c(\vec{u}) \mvis  v)$ iff this pr-atom is applied in~$W$, and $W\models c(\vec{u})\mvis v$.
        
\item  $\phi(W)\models \i{PossWithAssPr}(c(\vec{u}) \mvis  v)$ iff $v \in  AV_W(c(\vec{u}))$.

\item  $\phi(W)\models \i{PossWithDefPr}(c(\vec{u}) \mvis  v)$ iff $c(\vec{u})\mvis v$ is possible in $W$ and $v \not \in  AV_W(c(\vec{u}))$.

\item  $\phi(W)\models \i{NumDefPr}(c(\vec{u}), m)$ iff there exist exactly $m$ different values $v$ such that $c(\vec{u})\mvis v$ is possible in $W$; $v \not \in AV_W(c(\vec{u}))$; and, for one of such $v$, $W\models c(\vec{u})\mvis v$.

\item  $\phi(W)\models \i{RemPr}(c(\vec{u}), k)$ iff 
           there exists a value $v$ such that 
           $W\models c(\vec{u})\mvis v$; 
           $c(\vec{u}) \mvis  v$ is possible in $W$; 
           $v \not \in AV_W(c(\vec{u}))$; and  
           $$k = 1 - \sum\limits_{ v\in AV_W(c(\vec{u})) } \i{PossWithAssPr}(W, c(\vec{u}) \mvis  v).$$

\item  $\phi(W)\models \i{TotalDefPr}(c(\vec{u}), k)$ iff 
           $\phi(W)\models \i{RemPr}(c(\vec{u}),k)$ and $k>0$.

\end{enumerate}

To check that $\mu_\Pi(W) = P_{\rm plog2lpmln(\Pi)}(\phi(W))$, note first that 
the weight of $\phi(W)$ is computed by multiplying $e$ to the power of the weights of rules \eqref{w-asspr}, \eqref{w-numdefpr}, \eqref{w-totaldefpr} that are satisfied by $\phi(W)$. Let's call this product $TW$.

Consider a possible world $W$ with a non-zero probability of $\Pi$ and $c(\vec{u})\mvis v$ that is satisfied by $W$.

If $c(\vec{u})\mvis v$ is possible in $W$ and pr-atom $pr_r(c(\vec{u})\mvis v \mid C) = p$  is applied in $W$ (i.e., $v\in AV_W(c(\vec{u}))$), then  the assigned probability is applied: $P(W, c(\vec{u})\mvis v) = p$. On the other hand,  by condition~3, $\phi(W)\models \i{AssPr}_{r,C}(c(\vec{u})\mvis v)$, so that from \eqref{w-asspr}, the same $p$ is a factor of $TW$.

If $c(\vec{u})\mvis v$ is possible in $W$ and $v\not\in AV_W(c(\vec{u}))$, the default probability is applied: $P(W, c(\vec{u})\mvis v) = p$ is computed by \eqref{pwdp}.
By Condition 8, $\phi(W)\models \i{TotalDefPr}(c(\vec{u}),x)$ where $x = 1 - \sum\limits_{ v'\in AV_W(c(\vec{u})) } \i{PossWithAssPr}(W, c(\vec{u}) \mvis  v')$. Since $\phi(W)\models \eqref{w-totaldefpr}$, it is a factor of $TW$, which is the same as the numerator of \eqref{pwdp}.  
Furthermore, by Condition~6,  $\phi(W)\models\i{NumDefPr}(c(\vec{u}),m)$, where 
$m$ is the 
denominator of \eqref{pwdp}. Since $\phi(W)\models \eqref{w-numdefpr}$, 
$\frac{1}{m}$ is a factor of $TW$. 
\qed
\end{proof}


\smallskip
\noindent{\bf Example~\ref{ex:monty} Continued}\ \ 
{\sl 
For the P-log program $\Pi$ for the Monty Hall problem, $\Pi' = {\rm plog2lpmln}(\Pi)$ has three probabilistic stable models $I_1$, $I_2$, and $I_3$, each of which is an extension of $W_1$, $W_2$, and  $W_3$ respectively, and satisfies the following atoms: 
$\i{Poss}(\i{Prize}\mvis i)$ for $i=1,2,3,4$;
$\i{Poss}(\i{Selected}\mvis i)$ for $i=1,2,3,4$;
$\i{PossWithAssPr}(\i{Prize}\mvis i)$ for $i=1,3$;
$\i{PossWithDefPr}(\i{Prize}\mvis i)$ for $i =2, 4$;
$\i{PossWithDefPr}(\i{Selected}\mvis i)$ for $i=1,2,3,4$;
$\i{NumDefPr}(\i{Selected}, 4)$.
\BOC
\begin{itemize}\addtolength{\itemsep}{-0.5mm}
\item 
$\i{Poss}(\i{Prize}\mvis i)$ for $i=1,2,3,4$;
\item 
$\i{Poss}(\i{Selected}\mvis i)$ for $i=1,2,3,4$;
\item 
$\i{PossWithAssPr}(\i{Prize}\mvis i)$ for $i=1,3$;
\item  
$\i{PossWithDefPr}(\i{Prize}\mvis i)$ for $i =2, 4$;
\item 
$\i{PossWithDefPr}(\i{Selected}\mvis i)$ for $i=1,2,3,4$;
\item 
$\i{NumDefPr}(\i{Selected}, 4)$.
\end{itemize}
\EOC
In addition, 
\bi
\ii $I_1\models \{  \i{AssPr}(\i{Prize}\mvis 1), \i{Poss}(\i{Open}\mvis 2), \\ 
 \i{Poss}(\i{Open}\mvis 3), \i{Poss}(\i{Open}\mvis 4), \\ 
 \i{PossWithDefPr}(\i{Open}\mvis 2), \i{PossWithDefPr}(\i{Open}\mvis 3),  \\ 
 \i{PossWithDefPr}(\i{Open}\mvis 4), \i{NumDefPr}(\i{Open},3)           \}$

\ii $I_2\models\{ \i{AssPr}(\i{Prize}\mvis 3), \i{Poss}(\i{Open}\mvis 2), \\
 \i{Poss}(\i{Open}\mvis 4),\i{PossWithDefPr}(\i{Open}\mvis 2), \\ 
 \i{PossWithDefPr}(\i{Open}\mvis 4), \i{NumDefPr}(\i{Open},2)     \}$

\ii $I_3\models \{ 
\i{Poss}(\i{Open}\mvis 2),  \i{Poss}(\i{Open}\mvis 3), \\
 \i{PossWithDefPr}(\i{Open}\mvis 2), \i{PossWithDefPr}(\i{Open}\mvis 3), \\
 \i{NumDefPr}(\i{Open},2),  \i{NumDefPr}(\i{Prize},2),  \\ 
  \i{RemPr}(\i{Prize},0.5), \i{TotalDefPr}(\i{Prize},0.5)
  \}$.
\ei

The unnormalized weight $W_{\Pi'}(I_i)$ of each probabilistic stable model $I_i$ is shown below. 
$w(\i{AssPr}_{r,C}(c(\vec{u})\mvis v))$ denotes the exponentiated weight of rule~\eqref{w-asspr}; 
$w(\i{NumDefPr}(c(\vec{u}),m))$ denotes the exponentiated weight of rule~\eqref{w-numdefpr};
$w(\i{TotalDefPr}(c(\vec{u}),x))$ denotes the exponentiated weight of rule~\eqref{w-totaldefpr}.
\bi
\ii 
$W_{\Pi'}(I_1) = w(\i{NumDefPr}(\i{Selected},4)) \times w(\i{AssPr}(\i{Prize}\mvis 1)) 
                     \times w(\i{NumDefPr}(Open,3))
                     =\frac{1}{4} \times \frac{3}{10} \times \frac{1}{3} = \frac{1}{40} $.
\ii
$W_{\Pi'}(I_2) = w(\i{NumDefPr}(\i{Selected},4)) \times w(\i{AssPr}(\i{Prize}\mvis 3)) 
                     \times w(\i{NumDefPr}(Open,2))
                     = \frac{1}{4}  \times \frac{2}{10} \times \frac{1}{2} = \frac{1}{40}$;

\ii 
$W_{\Pi'}(I_3) = w(\i{NumDefPr}(\i{Selected},4)) \times  
                     \times w(\i{NumDefPr}(Open,2))\times \times w(\i{NumDefPr}(\i{Prize},2) \times w(\i{TotalDefPr}(\i{Prize},0.5) 
                      = \frac{1}{4} \times \frac{1}{2} \times \frac{1}{2} \times\frac{5}{10}= \frac{1}{32}$. 

\ei
}

\smallskip 
Combining the translations ${\rm plog2lpmln}$ and ${\rm lpmln2wc}$, one can compute P-log MAP inference using standard ASP solvers.

\section{Conclusion}

In this paper, we show how $\lpmln$ is related to weak constraints and P-log. Weak constraints are a relatively simple extension to ASP programs, while P-log is highly structured but a more complex extension. $\lpmln$ is shown to be a good middle ground language that clarifies the relationships. 
We expect the relationships will help us to apply the mathematical and computational results developed for one language to another language. 

\bigskip\noindent
{\bf Acknowledgments}\ \ 
We are grateful to Evgenii Balai, Yi Wang and anonymous referees for their useful comments on the draft of this paper.
This work was partially supported by the National Science Foundation under Grants IIS-1319794 and IIS-1526301. 


\bibliographystyle{aaai}

\def\wdbprm{W^{\prime\prime}}
\def\pdbprm{P^{\prime\prime}}
\def\smdbprm{SM^{\prime\prime}}
\def\grdtms{\textbf{\textit{At}}}

\newcommand{\argmin}{\mathop{\mathrm{argmin}}\limits}
\newcommand{\argmax}{\mathop{\mathrm{argmax}}\limits}
\newcommand\numberthis{\addtocounter{equation}{1}\tag{\theequation}}

\onecolumn
{\large \textbf{ Appendix to ``$\lpmln$, Weak Constraints, and P-log''}}

The appendix contains 
\begin{itemize}
\item
Proofs in order of 
{\bf Proposition \ref{prop:lpmln2wc}},
{\bf Theorem \ref{thm:lpmln2wc}},
{\bf Theorem \ref{thm:mln2asp}},
{\bf Corollary \ref{cor:lpmln2wc-pnt}},
{\bf Corollary \ref{cor:lpmln2wc-pnt-rule}},
{\bf Corollary \ref{cor:lpmln2wc-pnt-asp}},
{\bf Corollary \ref{cor:lpmln2wc-pnt-clingo-s}},
and {\bf Theorem~\ref{thm:plog2lpmln}};
({\bf Corollary \ref{cor:lpmln2wc-pnt-asp}} and {\bf Corollary \ref{cor:lpmln2wc-pnt-clingo-s}} are corollaries of {\bf Corollary \ref{cor:lpmln2wc-pnt-rule}} when ${\rm lpmln2wc^{pnt,rule}}$ is simplified)
\item
The full $\lpmln$ encoding and full ASP with weak constraints encoding of the variant Monty Hall problem.

\end{itemize}

\section{Proof of $\textbf{Proposition \ref{prop:lpmln2wc}}$} \label{sec:proofwc:prop}

\noindent{\bf Proposition~\ref{prop:lpmln2wc} \optional{prop:lpmln2wc}}\ \ 
{\sl
For any $\lpmln$ program $\Pi$, the set $\sm[\Pi]$ is exactly the set of the stable models of ${\rm lpmln2wc(\Pi)}$.
}
\medskip

\begin{proof}
To prove {\bf Proposition \ref{prop:lpmln2wc}}, it is sufficient to prove
\beq
\text{$I \in \sm[\Pi]$ iff $I$ is a stable model of ${\rm lpmln2wc}(\Pi)$.}
\eeq {t1_sm}
Since $I \in \sm[\Pi]$ iff $I$ is a stable model of $\o{\Pi_I}$, by definition, (\ref{t1_sm}) is equivalent to saying
\[
\text{$I$ is a minimal model of $\bigwedge\limits_{w: F \in \Pi , I\vDash F} F^I$ 
iff 
$I$ is a minimal model of $\bigwedge\limits_{w: F \in \Pi} (\{F\}^{\rm{ch}})^I$},
\]
which is true because
\[
\bigwedge\limits_{w: F \in \Pi} (\{F\}^{\rm{ch}})^I = \bigwedge\limits_{w: F \in \Pi, I\vDash F} (\{F\}^{\rm{ch}})^I \land \bigwedge\limits_{w: F \in \Pi, I\not \vDash F} (\{F\}^{\rm{ch}})^I = \bigwedge\limits_{w: F \in \Pi, I\vDash F} F^I.
\]
\end{proof}
\qed

\section{Proof of $\textbf{Theorem \ref{thm:lpmln2wc}}$} \label{sec:proofwc}
Let $\Pi$ be an $\lpmln$ program. 
By $\Pi^{\rm soft}$ we denote the set of all soft rules in $\Pi$, by $\Pi^{\rm hard}$ we denote the set of all hard rules in $\Pi$. 
For any $I \in \sm[\Pi]$, let $W^{\rm hard}_{\Pi}(I)=exp\bigg(\sum\limits_{w:F\;\in\; (\Pi^{\rm hard})_I} w\bigg)$ and $W^{\rm soft}_{\Pi}(I) = exp\bigg(\sum\limits_{w:F\;\in\; (\Pi^{\rm soft})_I} w\bigg)$, then
$I$ is a most probable stable model of $\Pi$ iff
\[
I \in \argmax_{J:~J \in \argmax_{K:~K \in {\rm SM}[\Pi]} W^{\rm hard}_{\Pi}(K)} W^{\rm soft}_{\Pi}(J).
\]
Let $\Pi'$ be an ASP program with weak constraints such that $Level \in \{ 0,1 \}$ for all weak constraints
\[
{\tt :\sim}\ F \ \ \  [\i{Weight}\ @\ \i{Level}]
\]
in $\Pi'$. $I$ is an optimal stable model of $\Pi'$ iff
\[
I \in \argmin_{J:~J \in \argmin_{K:~K \text{ is a stable model of } \Pi'} \mathit{Penalty}_{\Pi'}(K,1)} \mathit{Penalty}_{\Pi'}(J,0).
\]

\medskip
\noindent{\bf Theorem~\ref{thm:lpmln2wc} \optional{thm:lpmln2wc}}\ \ 
{\sl
For any $\lpmln$ program~$\Pi$, the most probable stable models of $\Pi$ are precisely the optimal stable models of the program with weak constraints ${\rm lpmln2wc}(\Pi)$.
}
\medskip

\begin{proof}
Let $\Pi'$ denote ${\rm lpmln2wc}(\Pi)$. To prove {\bf Theorem~\ref{thm:lpmln2wc}}, it is sufficient to prove
\[
\text{$I$ is a most probable stable model of $\Pi$ iff $I$ is an optimal stable model of $\Pi'$}
\]
which 
is equivalent to proving
\[
I \in \argmax_{J:~J \in \argmax_{K:~K \in {\rm SM}[\Pi]} W^{\rm hard}_{\Pi}(K)} W^{\rm soft}_{\Pi}(J) \text{ iff } I \in \argmin_{J:~J \in \argmin_{K:~K \text{ is a stable model of } \Pi'} \mathit{Penalty}_{\Pi'}(K,1)} \mathit{Penalty}_{\Pi'}(J,0).
\]
This is clear because 
\[
\ba {l l}
& \argmax_{J:~J \in \argmax_{K:~K \in {\rm SM}[\Pi]} W^{\rm hard}_{\Pi}(K)} W^{\rm soft}_{\Pi}(J) \\

=& \text{(by (\ref{t1_sm}) and the definition of $W^{\rm hard}_{\Pi}(I)$ and $W^{\rm soft}_{\Pi}(I)$)}  \\

& \argmax_{J:~J \in \argmax_{K:~K \text{ is a stable model of } \Pi'} exp\big(\sum\limits_{\alpha:F\;\in\; (\Pi^{\rm hard})_K} \alpha\big)} exp\Big(\sum\limits_{w:F\;\in\; (\Pi^{\rm soft})_J} w\Big)\\

=& \\

& \argmax_{J:~J \in \argmax_{K:~K \text{ is a stable model of } \Pi'} exp\big(\sum\limits_{\alpha:F\;\in\; \Pi^{\rm hard}, K\vDash F} 1\big)} exp\Big(\sum\limits_{w:F\;\in\; \Pi^{\rm soft}, J\vDash F} w\Big)\\

=& \\

& \argmin_{J:~J \in \argmin_{K:~K \text{ is a stable model of } \Pi'} \big(\sum\limits_{\alpha:F\;\in\; \Pi^{\rm hard}, K\vDash F} -1\big)} \Big(\sum\limits_{w:F\;\in\; \Pi^{\rm soft}, J\vDash F} -w\Big)\\

=& \\

& \argmin_{J:~J \in \argmin_{K:~K \text{ is a stable model of } \Pi'} \big(\sum\limits_{:\sneg~ F\left[ -1@1 \right] \in \Pi', K\vDash F} -1\big)} \Big(\sum\limits_{:\sneg~ F\left[ -w@0 \right] \in \Pi', J\vDash F} -w\Big)\\

=& \\

& \argmin_{J:~J \in \argmin_{K:~K \text{ is a stable model of } \Pi'} \mathit{Penalty}_{\Pi'}(K,1)} \mathit{Penalty}_{\Pi'}(J,0).
\ea
\]

\end{proof}
\qed

\section{Proof of $\textbf{Theorem \ref{thm:mln2asp}}$} \label{sec:proofasp}
\noindent{\bf Theorem~\ref{thm:mln2asp} \optional{thm:mln2asp}}\ \ 
{\sl
For any Markov Logic Network $\Pi$, the most probable models of $\Pi$ are precisely the optimal stable models of the  program with weak constraints ${\rm mln2wc}(\Pi)$.
}
\medskip

\begin{proof}
For any Markov Logic Network $\Pi$, we obtain an $\lpmln$ program $\Pi'$ from $\Pi$ by adding 
\[
\alpha: \{ A\}^{\rm ch}
\]
for every atom $A$ in $\Pi$.
By Theorem 2 in \cite{lee16weighted}, $\Pi$ and $\Pi'$ have the same probability distribution over all interpretations. Then for any interpretation $I$ of $\Pi$,
\begin{itemize}
\item
$I$ is a most probable model of the MLN program $\Pi$
\end{itemize}
iff
\begin{itemize}
\item
$I$ is a most probable stable model of the $\lpmln$ program $\Pi'$
\end{itemize}
iff (by \textbf{Theorem \ref{thm:lpmln2wc}})
\begin{itemize}
\item
$I$ is an optimal stable model of the ASP program with weak constraints ${\rm lpmln2wc}(\Pi')$
\end{itemize}
iff (since a choice rule is always satisfied, omiting the weak constraint ``$:\sneg \{A\}^{\rm ch}. \left[ -1@1 \right]$'' for all atoms $A$ in $\Pi$ doesn't affect what is an optimal stable model of ${\rm lpmln2wc}(\Pi')$)
\begin{itemize}
\item
$I$ is an optimal stable model of the ASP program with weak constraints ${\rm lpmln2wc}(\Pi) \cup \{ \{ \{ A \}^{\rm ch}\}^{\rm ch} \mid A \text{ is an atom in }\Pi\}$
\end{itemize}
iff (since for any interpretation $I$, the reduct of $\{ \{ A \}^{\rm ch}\}^{\rm ch}$ relative to $I$ is equivalent to the reduct of $\{ A \}^{\rm ch}$ relative to $I$, ${\rm lpmln2wc}(\Pi) \cup \{ \{ \{ A \}^{\rm ch}\}^{\rm ch} \mid A \text{ is an atom in }\Pi\}$ is strongly equivalent to ${\rm lpmln2wc}(\Pi) \cup \{ \{ A \}^{\rm ch} \mid A \text{ is an atom in }\Pi\}$)
\begin{itemize}
\item
$I$ is an optimal stable model of the ASP program with weak constraints ${\rm lpmln2wc}(\Pi) \cup \{ \{ A \}^{\rm ch} \mid A \text{ is an atom in }\Pi\}$
\end{itemize}
Thus we proved $I$ is a most probable model of an MLN program $\Pi$ iff $I$ is an optimal stable model of the ASP program with weak constraints ${\rm mln2wc}(\Pi)$.
\end{proof}

\qed

\section{Proof of $\textbf{Corollary \ref{cor:lpmln2wc-pnt}}$} \label{sec:proofwc-pnt}
\noindent{\bf Corollary~\ref{cor:lpmln2wc-pnt} \optional{cor:lpmln2wc-pnt}}\ \ 
{\sl
For any $\lpmln$ program~$\Pi$, the most probable stable models of $\Pi$ are precisely the optimal stable models of the program with weak constraints ${\rm lpmln2wc^{pnt}}(\Pi)$.
}
\medskip

\begin{proof}
Let $\Pi'$ denote ${\rm lpmln2wc^{pnt}}(\Pi)$. From (\ref{t1_sm}), it's clear that
\beq
\text{$I \in \sm[\Pi]$ iff $I$ is a stable model of $\Pi'$.}
\eeq {c1_sm}
To prove 
\[
\text{$I$ is a most probable stable model of $\Pi$ iff $I$ is an optimal stable model of ${\rm lpmln2wc^{pnt}}(\Pi)$},
\]
it is equivalent to proving
\[
I \in \argmax_{J:~J \in \argmax_{K:~K \in {\rm SM}[\Pi]} W^{\rm hard}_{\Pi}(K)} W^{\rm soft}_{\Pi}(J) \text{ iff } I \in \argmin_{J:~J \in \argmin_{K:~K \text{ is a stable model of } \Pi'} \mathit{Penalty}_{\Pi'}(K,1)} \mathit{Penalty}_{\Pi'}(J,0).
\]
This is clear because
\[
\ba {l l}
& \argmax_{J:~J \in \argmax_{K:~K \in {\rm SM}[\Pi]} W^{\rm hard}_{\Pi}(K)} W^{\rm soft}_{\Pi}(J) \\

=& \text{(by \eqref{c1_sm} and the definition of $W^{\rm hard}_{\Pi}(I)$ and $W^{\rm soft}_{\Pi}(I)$)} \\

& \argmax_{J:~J \in \argmax_{K:~K \text{ is a stable model of } \Pi'} exp\big(\sum\limits_{\alpha:F\;\in\; (\Pi^{\rm hard})_K} \alpha\big)} exp\Big(\sum\limits_{w:F\;\in\; (\Pi^{\rm soft})_J} w\Big)\\

=& \\

& \argmax_{J:~J \in \argmax_{K:~K \text{ is a stable model of } \Pi'} exp\big(\sum\limits_{\alpha:F\;\in\; \Pi^{\rm hard}, K \vDash F} 1\big)} exp\Big(\sum\limits_{w:F\;\in\; \Pi^{\rm soft}, J \vDash F} w\Big)\\

=& \text{(since 
$\sum\limits_{\alpha:F\;\in\; \Pi^{\rm hard}, K \vDash F} 1 + \sum\limits_{\alpha:F\;\in\; \Pi^{\rm hard}, K \not\vDash F} 1$ is a fixed integer that equals to the number of hard rules in $\Pi$,} \\

& ~~\text{and 
$\sum\limits_{w: F \in \Pi^{\rm soft}, J\vDash F} w + \sum\limits_{w: F \in \Pi^{\rm soft}, J\not \vDash F} w$ is a fixed real number that equals to the sum of the weights of all soft rules in $\Pi$)}  \\

& \argmin_{J:~J \in \argmin_{K:~K \text{ is a stable model of } \Pi'} \big(\sum\limits_{\alpha:F\;\in\; \Pi^{\rm hard}, K\not \vDash F} 1\big)} \Big(\sum\limits_{w:F\;\in\; \Pi^{\rm soft}, J\not \vDash F} w\Big)\\

=& \\

& \argmin_{J:~J \in \argmin_{K:~K \text{ is a stable model of } \Pi'} \big(\sum\limits_{:\sneg\ \neg F\left[ 1@1 \right] \in \Pi', K\not \vDash F} 1\big)} \Big(\sum\limits_{:\sneg\ \neg F\left[ w@0 \right] \in \Pi', J\not \vDash F} w\Big)\\

=& \\

& \argmin_{J:~J \in \argmin_{K:~K \text{ is a stable model of } \Pi'} \mathit{Penalty}_{\Pi'}(K,1)} \mathit{Penalty}_{\Pi'}(J,0).
\ea
\]

\end{proof}
\qed

\section{Proof of $\textbf{Corollary \ref{cor:lpmln2wc-pnt-rule}}$} \label{sec:proofwc-pnt-rule}
Let $\sigma$ and $\sigma'$ be signatures such that $\sigma \subseteq \sigma'$. For any two interpretations $I$, $J$ of the same signature $\sigma'$, we write $I <^{\sigma} J$ iff 
\begin{itemize}
\item
$I|_{\sigma} \subsetneq J|_{\sigma}$, and 
\item
$I$ and $J$ agree on $\sigma' \setminus \sigma$.
\end{itemize}

The proof of {\bf Corollary \ref{cor:lpmln2wc-pnt-rule}} will use a restricted version of Theorem 1 from \cite{bartholomew13onthestable}, which is reformulated as follows: 
\medskip
\begin{lemma}  \label{lem:sm-mm}
Let $F$ be a propositional formula. An interpretation $I$ is a stable model of $F$ relative to signature $\sigma$ iff
\begin{itemize}
\item
$I \vDash F^I$, 
\item
and no interpretation $J$ such that $J <^{\sigma} I$ satisfies $F^I$.
\end{itemize}
\end{lemma}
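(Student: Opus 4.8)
The plan is to reduce the lemma to the operator definition of stable models relative to an intensional signature from \cite{ferraris11stable} and then bridge that definition to the reduct-based characterization by a single structural induction. Concretely, I would take as the primitive meaning of ``$I$ is a stable model of $F$ relative to $\sigma$'' the condition that $I\models F$ together with the nonexistence of a witness to instability, where the witness is expressed through the transform $F^*$ that renames each intensional atom $p\in\sigma$ to a fresh copy $\hat p$, leaves each atom outside $\sigma$ untouched, and treats implication by $(G\rar H)^* = (G^*\rar H^*)\land(G\rar H)$. With this reading, $I$ is stable relative to $\sigma$ iff $I\models F$ and there is no interpretation $J$ with $J<^\sigma I$ such that the combined assignment that interprets the original atoms by $I$ and the hatted atoms by $J|_\sigma$ satisfies $F^*$; the condition $\hat\sigma<\sigma$ evaluated at $I$ is exactly $J|_\sigma\subsetneq I|_\sigma$, and freezing the copies of the off-$\sigma$ atoms corresponds to requiring $J$ to agree with $I$ on $\sigma'\setminus\sigma$.

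The core of the argument is a bridging lemma proved by induction on the structure of $F$: for every $J$ with $J|_\sigma\subseteq I|_\sigma$ that agrees with $I$ off $\sigma$, the combined assignment satisfies $F^*$ iff $J\models F^I$. First I would dispose of the case $I\not\models F$, where $F^I=\bot$, by a companion monotonicity sub-lemma stating that whenever the hatted atoms are pointwise below the originals, $F^*$ entails $F$ under $I$; this makes the left-hand side false, so both sides agree. For the case $I\models F$ the reduct recurses through the connectives, so the base cases ($p\in\sigma$, $p\notin\sigma$, $\bot$) follow by unwinding the two definitions, using $J|_\sigma\subseteq I|_\sigma$ to line up the atom cases and $J$'s agreement with $I$ off $\sigma$ for the extensional atoms, while the $\land$ and $\lor$ cases are immediate from the induction hypothesis.

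With the bridging lemma in hand, the lemma follows by chaining equivalences: $I$ is stable relative to $\sigma$ iff $I\models F$ and no $J<^\sigma I$ makes the combined assignment satisfy $F^*$, iff (by the bridging lemma) $I\models F$ and no $J<^\sigma I$ satisfies $F^I$; and since $I\models F$ is equivalent to $I\models F^I$ (a standard property of the reduct), this is exactly the two displayed conditions.

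I expect the implication case of the bridging lemma, together with its monotonicity sub-lemma, to be the main obstacle: this is precisely where the extra conjunct in $(G\rar H)^*$ does its work and where the reduct's rule of collapsing an unsatisfied formula to $\bot$ must be reconciled with the recursive clause for implication. Everything else is bookkeeping following the template of the full-signature equivalence between the operator and reduct definitions in \cite{ferraris11stable}; the only genuinely new ingredient over that template is keeping the off-$\sigma$ atoms frozen, which is handled uniformly by the hypothesis that $J$ and $I$ agree outside $\sigma$, and which is exactly the content of the restriction from \cite{bartholomew13onthestable} being reformulated here.
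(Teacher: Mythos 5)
Your proposal is correct, but there is nothing in the paper to compare it against: the paper never proves this lemma. It is introduced explicitly as ``a restricted version of Theorem 1 from \cite{bartholomew13onthestable} \dots reformulated,'' i.e., imported from the literature as a black box, and then used in the proofs of Lemma~\ref{lem:c2} and Corollary~\ref{cor:lpmln2wc-pnt-rule}. What you have reconstructed is, in essence, the standard proof of that cited result, specialized to propositional formulas: take the operator-style definition of $\mathrm{SM}[F;\sigma]$ from \cite{ferraris11stable}, in which instability is witnessed by an assignment to hatted copies $\hat{p}$ of the atoms $p\in\sigma$ under the transform with $(G\rightarrow H)^{*}=(G^{*}\rightarrow H^{*})\land(G\rightarrow H)$; prove by structural induction the bridge that the combined assignment satisfies $F^{*}$ iff $J\models F^{I}$, for every $J$ with $J|_{\sigma}\subseteq I|_{\sigma}$ that agrees with $I$ outside $\sigma$; then chain the equivalences using the standard fact that $I\models F$ iff $I\models F^{I}$. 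Your diagnosis of where the work lies is accurate: the case $I\not\models F$ (where $F^{I}=\bot$) is exactly what the monotonicity sub-lemma collapses, that sub-lemma holds precisely because of the second conjunct in the implication clause of $(\cdot)^{*}$, and the atoms outside $\sigma$ are handled uniformly by requiring $J$ to agree with $I$ there, which is the whole content of the relation $<^{\sigma}$ and of the ``restriction'' the paper performs on the cited theorem. So your route is sound and self-contained where the paper's route is a citation; the citation buys brevity, while your derivation makes the appendix independent of \cite{bartholomew13onthestable} at the cost of redoing what is essentially the published proof of the theorem being cited, instantiated to the propositional case.
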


The proof of {\bf Corollary \ref{cor:lpmln2wc-pnt-rule}} will use a restricted version of the splitting theorem from \cite{ferr09b}, which is reformulated as follows:

\medskip
\noindent{\bf Splitting Theorem}\ \ 
{\sl
Let $\Pi_1$, $\Pi_2$ be two finite ground programs,
${\bf p}$, ${\bf q}$ be disjoint tuples of distinct atoms. If
\begin{itemize}
\item each strongly connected component of the dependency graph of $\Pi_1\cup\Pi_2$ w.r.t. ${\bf p}\cup{\bf q}$ is a subset of ${\bf p}$ or a subset of ${\bf q}$,
\item no atom in ${\bf p}$ has a strictly positive occurrence in $\Pi_2$, and
\item no atom in ${\bf q}$ has a strictly positive occurrence in $\Pi_1$,
\end{itemize}
then an interpretation $I$ of $\Pi_1\cup \Pi_2$ is a stable model of $\Pi_1\cup \Pi_2$ relative to ${\bf p}\cup{\bf q}$ if and only if $I$ is a stable model of $\Pi_1$ relative to ${\bf p}$ and $I$ is a stable model of $\Pi_2$ relative to ${\bf q}$.
}

\medskip
The proof of {\bf Corollary \ref{cor:lpmln2wc-pnt-rule}} will also use the following lemma. Here and after, $w_i: F_i$ denotes the $i$-th rule in $\Pi$, where $w_i$ could be $\alpha$ or a real number. 
\medskip
\begin{lemma}  \label{lem:c2}
Let  $\Pi$ be an $\lpmln$ program. There is a 1-1 correspondence $\phi$ between the set $\sm[\Pi]$ and the set of the stable models of ${\rm lpmln2wc^{pnt,rule}}(\Pi)$, where $\phi(I) = I\cup\{  \i{unsat}(i) \mid w_i: F_i \in \Pi, \ I\not\models F_i \}$.
\end{lemma}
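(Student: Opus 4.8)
The plan is to reduce the statement to a purely logical fact about the stable models of the non-weak-constraint part of ${\rm lpmln2wc^{pnt,rule}}(\Pi)$, since a stable model of a program with weak constraints is by definition just a stable model of its logical part. Thus it suffices to show that the stable models of the program $\Pi_1$ consisting of the rules $\neg F_i \rar \i{unsat}(i)$ and $\neg\i{unsat}(i)\rar F_i$ (one pair for each weighted formula $w_i: F_i$ in $\Pi$) are exactly the interpretations $\phi(I)$ for $I\in\sm[\Pi]$, and that $\phi$ is injective. Injectivity is immediate because $\phi(I)$ agrees with $I$ on the original signature $\sigma$, so $I$ is recoverable as the restriction $\phi(I)|_\sigma$.

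First I would apply the Splitting Theorem with ${\bf q}=\sigma$ (the atoms of $\Pi$) and ${\bf p}=\{\i{unsat}(i)\}$ (the freshly introduced atoms), splitting $\Pi_1$ into $\Pi_a=\{\neg F_i\rar\i{unsat}(i)\}$ and $\Pi_b=\{\neg\i{unsat}(i)\rar F_i\}$. The three hypotheses hold for the cheap reason that in each group the ``foreign'' atoms occur only inside an antecedent: no $\sigma$-atom occurs strictly positively in $\Pi_a$ (they sit inside $\neg F_i$), no $\i{unsat}$-atom occurs strictly positively in $\Pi_b$ (it occurs only as $\neg\i{unsat}(i)$), and the positive dependency graph therefore has every strongly connected component contained in $\sigma$ or equal to a singleton $\{\i{unsat}(i)\}$. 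Hence $I$ is a stable model of $\Pi_1$ iff $I$ is a stable model of $\Pi_a$ relative to $\{\i{unsat}(i)\}$ and $I$ is a stable model of $\Pi_b$ relative to $\sigma$.

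Next I would analyze the two factors separately, using the reduct characterization of Lemma~\ref{lem:sm-mm}. For $\Pi_a$ relative to $\{\i{unsat}(i)\}$: each $\i{unsat}(i)$ heads a single rule whose body $\neg F_i$ mentions no $\i{unsat}$-atom, so supportedness and minimality force $\i{unsat}(i)$ to be true exactly when $I\not\models F_i$; this pins down the $\i{unsat}$-part of $I$ to be precisely $\{\i{unsat}(i)\mid I\not\models F_i\}$, matching the definition of $\phi$. For $\Pi_b$ relative to $\sigma$, the atoms $\i{unsat}(i)$ act as parameters: a rule $\neg\i{unsat}(i)\rar F_i$ reduces to the trivial formula when $\i{unsat}(i)$ is true and collapses to $F_i$ when $\i{unsat}(i)$ is false, so $\Pi_b$ is equivalent (relative to $\sigma$) to $\bigwedge_{I\not\models\i{unsat}(i)}F_i$. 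Substituting the $\i{unsat}$-values forced in the first factor turns the index condition $I\not\models\i{unsat}(i)$ into $I\models F_i$, so this conjunction is exactly $\o{\Pi_I}$, whence $I|_\sigma$ is a stable model of $\Pi_b$ relative to $\sigma$ iff $I|_\sigma\in\sm[\Pi]$.

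Combining the two factors, $I$ is a stable model of $\Pi_1$ iff $I|_\sigma\in\sm[\Pi]$ and $I=\phi(I|_\sigma)$, which establishes the bijection with inverse $I\mapsto I|_\sigma$. The main obstacle I anticipate is the careful bookkeeping of the reducts in the two factors: verifying that reducing $\neg\i{unsat}(i)\rar F_i$ with $\i{unsat}(i)$ held fixed yields exactly $F_i$ (resp.\ a tautology), and aligning the resulting conjunction with the $\lpmln$ reduct $\o{\Pi_I}$, while keeping straight the distinction between satisfaction of the original formulas and satisfaction of their reducts. The splitting conditions themselves, though requiring the ``strictly positive occurrence'' check, are routine.
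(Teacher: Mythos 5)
Your proposal is correct and follows essentially the same route as the paper's proof: both split ${\rm lpmln2wc^{pnt,rule}}(\Pi)$ into the $\neg F_i\rar\i{unsat}(i)$ part and the $\neg\i{unsat}(i)\rar F_i$ part via the splitting theorem, pin down the $\i{unsat}$ atoms by a completion-style argument, analyze the reducts of the remaining rules (tautology when $\i{unsat}(i)$ holds, collapse to $F_i^I$ otherwise), and align the result with the characterization of $\sm[\Pi]$ from Proposition~\ref{prop:lpmln2wc}. The only difference is presentational: you run one biconditional chain where the paper verifies the two directions of the correspondence separately.
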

\medskip
\begin{proof}
Let $\sigma$ be the signature of $\Pi$. We can check that the following mapping $\phi$ is a 1-1 correspondence:
\[
\phi(I) = I \cup \{ \i{unsat}(i) \mid w_i: F_i \in \Pi, I \not\models F_i \}
\]
where $\phi(I)$ is of an extended signature $\sigma \cup \{ \i{unsat}(i) \mid w_i: F_i \in \Pi \}$. 

To prove $\phi$ is a 1-1 correspondence between the set $\sm[\Pi]$ and the set of the stable models of
\beq
\bigwedge\limits_{w_i:\ F_i \in \Pi}  \Big( 
(F_i \leftarrow \neg \i{unsat}(i)) \land 
(\i{unsat}(i) \leftarrow \neg F_i) \Big),
\eeq{cor2_sm}
it is sufficient to prove the following two bullets.
\begin{itemize}
\item \textbf{prove: for every interpretation $I \in \sm[\Pi]$, $\phi(I)$ is a stable model of (\ref{cor2_sm}).}

Assume $I \in \sm[\Pi]$, by (\ref{t1_sm}), $I$ is a stable model of 
\[
\bigwedge\limits_{w_i:\ F_i \in \Pi} (F_i \leftarrow \neg \neg F_i).
\]
By {\bf Lemma \ref{lem:sm-mm}}, we know
\begin{itemize}
\item
$I \vDash$
\beq
\bigwedge\limits_{w_i:\ F_i \in \Pi, I \vDash F_i} \Big(F_i^I \Big),
\eeq {cor2:b}
\item
and no interpretation $K$ of signature $\sigma$ such that $K <^{\sigma} I$ satisfies (\ref{cor2:b}).
\end{itemize}

To prove $\phi(I)$ is a stable model of (\ref{cor2_sm}), by the splitting theorem, it is sufficient to show
\begin{itemize}
\item
$\phi(I)$ is a stable model of 
$
\bigwedge\limits_{w_i:\ F_i \in \Pi} \Big( \i{unsat}(i) \leftarrow \neg \ F_i \Big)
$
relative to $\{ \i{unsat}(i) \mid w_i: F_i \in \Pi \}$, and
\item
$\phi(I)$ is a stable model of
$
\bigwedge\limits_{w_i:\ F_i \in \Pi}  \Big( 
F_i \leftarrow \neg \i{unsat}(i) \Big)
$
relative to $\sigma$;
\end{itemize}
which is equivalent to showing
\begin{itemize}
\ii[{\bf (a)}]
$\phi(I) \vDash \bigwedge\limits_{w_i:\ F_i \in \Pi} \Big(\i{unsat}(i) \leftrightarrow \neg \ F_i \Big)$, 
\ii[{\bf (b.1)}]
	$\phi(I) \vDash$
	\beq
	\bigwedge\limits_{w_i:\ F_i \in \Pi, \phi(I) \vDash F_i}  \Big( F_i \leftarrow \neg \i{unsat}(i) \Big)^{\phi(I)} \land
	\bigwedge\limits_{w_i:\ F_i \in \Pi, \phi(I) \not \vDash F_i}  \Big( F_i \leftarrow \neg \i{unsat}(i) \Big)^{\phi(I)},
	\eeq {cor2_b1}
\ii[{\bf (b.2)}]
	and no interpretation $L$ of signature $\sigma \cup \{ \i{unsat}(i) \mid w_i: F_i \in \Pi \}$ such that $L <^{\sigma} \phi(I)$ satisfies (\ref{cor2_b1}).
\end{itemize}
It's clear that {\bf (a)} is true by the definition of $\phi(I)$. As for {\bf (b.1)}, since $\phi(I) \vDash 
( F_i \leftrightarrow \neg \i{unsat}(i))$ for all $w_i: F_i \in \Pi$, (\ref{cor2_b1}) is equivalent to
\[
	\bigwedge\limits_{w_i:\ F_i \in \Pi, \phi(I) \vDash F_i}  \Big( F_i^{\phi(I)} \leftarrow \top \Big) \land
	\bigwedge\limits_{w_i:\ F_i \in \Pi, \phi(I) \not \vDash F_i}  \Big( \bot \leftarrow \bot \Big).
\]
Then {\bf (b.1)} is equivalent to saying 
$\phi(I) \vDash$
\beq
	\bigwedge\limits_{w_i:\ F_i \in \Pi, \phi(I) \vDash F_i}  \Big( F_i^{\phi(I)} \Big),
\eeq {cor2_b2}
which is further equivalent to saying $I \vDash$ (\ref{cor2:b}). 
As for {\bf (b.2)}, assume for the sake of contradiction that there exists an interpretation $L$ such that $L <^{\sigma} \phi(I)$ satisfies (\ref{cor2_b1}). Since (\ref{cor2_b1}) is equivalent to (\ref{cor2_b2}), $L \vDash \bigwedge\limits_{w_i:\ F_i \in \Pi, \phi(I) \vDash F_i}  \Big( F_i^{\phi(I)} \Big)$. Thus we know $L|_{\sigma} <^{\sigma} I$ and $L|_{\sigma} \vDash$ (\ref{cor2:b}), which contradicts with ``there is no interpretation $K$ such that $K <^{\sigma} I$ satisfies (\ref{cor2:b})''. So both {\bf (b.1)} and {\bf (b.2)} are true. Consequently, $\phi(I)$ is a stable model of (\ref{cor2_sm}).

\item \textbf{prove: for every stable model $J$ of (\ref{cor2_sm}), $J|_{\sigma} \in \sm[\Pi]$ and $J = \phi(J|_{\sigma})$.}

Assume $J$ is a stable model of (\ref{cor2_sm}), by the splitting theorem, 
\begin{itemize}
\item
$J$ is a stable model of 
$
\bigwedge\limits_{w_i:\ F_i \in \Pi} \Big( \i{unsat}(i) \leftarrow \neg \ F_i \Big)
$
relative to $\{ \i{unsat}(i) \mid w_i: F_i \in \Pi \}$, and 
\item
$J$ is a stable model of
$
\bigwedge\limits_{w_i:\ F_i \in \Pi}  \Big( 
F_i \leftarrow \neg \i{unsat}(i) \Big)
$
relative to $\sigma$.
\end{itemize}
Thus we have
\begin{itemize}
\ii[{\bf (c)}]
$J \vDash \i{unsat}(i) \leftrightarrow \neg F_i$ for all $w_i: F_i \in \Pi$, which follows that $J = J|_{\sigma} \cup \{ \i{unsat}(i) \mid w_i: F_i \in \Pi, J|_{\sigma} \vDash \neg \ F_i \}$. In other words, $J = \phi(J|_{\sigma})$.
\ii[{\bf (d.1)}]
	Since $J\vDash (F_i \leftarrow \neg \i{unsat}(i))$ for all $w_i: F_i \in \Pi$, $J \vDash$ 
	\beq
	\bigwedge\limits_{w_i:\ F_i \in \Pi}  \Big( 
F_i^J \leftarrow (\neg \i{unsat}(i))^J \Big),
	\eeq {cor2_d1}
\ii[{\bf (d.2)}]
	and no interpretation $L$ of signature $\sigma \cup \{ \i{unsat}(i) \mid w_i: F_i \in \Pi \}$ such that $L <^{\sigma} J$ satisfies (\ref{cor2_d1}).
\end{itemize}
Since $J \vDash \i{unsat}(i) \leftrightarrow \neg F_i$ for all $w_i: F_i \in \Pi$, (\ref{cor2_d1}) is equivalent to
\[
	\bigwedge\limits_{w_i:\ F_i \in \Pi, J|_{\sigma} \vDash F_i}  \Big( F_i^J \leftarrow \top \Big) \land
	\bigwedge\limits_{w_i:\ F_i \in \Pi, J|_{\sigma} \not \vDash F_i}  \Big( \bot \leftarrow \bot \Big),
\]
which is further equivalent to
\beq
	\bigwedge\limits_{w_i:\ F_i \in \Pi, J|_{\sigma} \vDash F_i}  \Big( F_i^{J|_{\sigma}} \Big).
\eeq {cor2_d2}
Thus by {\bf (d.1)}, $J|_{\sigma} \vDash$ (\ref{cor2_d2}); and by {\bf (d.2)}, it's easy to show that no interpretation $K$ such that $K <^{\sigma} J|_{\sigma}$ satisfies ({\ref{cor2_d2}}). (Assume for the sake of contradiction, there exists an interpretation $K$ such that $K <^{\sigma} J|_{\sigma}$ satisfies (\ref{cor2_d2}). Let $L = K \cup \{ \i{unsat}(i) \mid w_i: F_i \in \Pi, J\vDash \neg \ F_i \}$. Then $L <^{\sigma} J$ and $L\vDash$ (\ref{cor2_d2}). Since (\ref{cor2_d2}) is equivalent to ({\ref{cor2_d1}}), $L \vDash$ ({\ref{cor2_d1}}), which contradicts with {\bf (d.2)}.)

Then by {\bf Lemma \ref{lem:sm-mm}}, $J|_{\sigma}$ is a stable model of 
$
\bigwedge\limits_{w_i:\ F_i \in \Pi} (F_i \leftarrow \neg \neg F_i).
$
By (\ref{t1_sm}), $J|_{\sigma} \in \sm[\Pi]$.

\end{itemize}
\end{proof}
\qed

\noindent{\bf Corollary~\ref{cor:lpmln2wc-pnt-rule} \optional{cor:lpmln2wc-pnt-rule}}\ \ 
{\sl
Let  $\Pi$ be an $\lpmln$ program. There is a 1-1 correspondence $\phi$ between the most probable stable models of $\Pi$ and the optimal stable models of ${\rm lpmln2wc^{pnt,rule}}(\Pi)$, where $\phi(I) = I\cup\{  \i{unsat}(i) \mid w_i: F_i \in \Pi, \ I\not\models F_i \}$.
}
\medskip

\begin{proof}
Let $\sigma$ be the signature of $\Pi$. We can check that the following mapping $\phi$ is a 1-1 correspondence:
\[
\phi(I) = I \cup \{ \i{unsat}(i) \mid w_i: F_i \in \Pi, I\not\models \ F_i \}
\]
where $\phi(I)$ is of an extended signature $\sigma \cup \{ \i{unsat}(i) \mid w_i: F_i \in \Pi \}$.
By {\bf Lemma \ref{lem:c2}}, we know $\phi$ is a 1-1 correspondence between the set $\sm[\Pi]$ and the set of the stable models of ${\rm lpmln2wc^{pnt,rule}}(\Pi)$. Let $\Pi'$ denote ${\rm lpmln2wc^{pnt,rule}}(\Pi)$, $J'$ and $K'$ denote interpretations of signature $\sigma \cup \{ \i{unsat}(i) \mid w_i: F_i \in \Pi \}$. To prove
\[
\text{$I$ is a most probable stable model of $\Pi$ iff $\phi(I)$ is an optimal stable model of ${\rm lpmln2wc^{pnt,rule}}(\Pi)$},
\]
 it is equivalent to proving
\[
I \in \argmax_{J:~J \in \argmax_{K:~K \in {\rm SM}[\Pi]} W^{\rm hard}_{\Pi}(K)} W^{\rm soft}_{\Pi}(J) \text{ iff } \phi(I) \in \argmin_{J':~J' \in \argmin_{K':~K' \text{ is a stable model of } \Pi'} \mathit{Penalty}_{\Pi'}(K',1)} \mathit{Penalty}_{\Pi'}(J',0),
\]
which is further equivalent to proving
\[
I \in \argmax_{J:~J \in \argmax_{K:~K \in {\rm SM}[\Pi]} W^{\rm hard}_{\Pi}(K)} W^{\rm soft}_{\Pi}(J) \text{ iff } I \in \argmin_{J:~J \in \argmin_{K:~K \in {\rm SM}[\Pi]} \mathit{Penalty}_{\Pi'}(\phi(K),1)} \mathit{Penalty}_{\Pi'}(\phi(J),0).
\]
This is clear because 
\[
\ba {l l}
& \argmax_{J:~J \in \argmax_{K:~K \in {\rm SM}[\Pi]} W^{\rm hard}_{\Pi}(K)} W^{\rm soft}_{\Pi}(J) \\

=& \text{(by the definition of $W^{\rm hard}_{\Pi}(I)$ and $W^{\rm soft}_{\Pi}(I)$)} \\

&\argmax_{J:~J \in \argmax_{K:~K \in {\rm SM}[\Pi]} exp\big(\sum\limits_{\alpha:F_i\;\in\; (\Pi^{\rm hard})_K} \alpha\big)} exp\Big(\sum\limits_{w_i:F_i\;\in\; (\Pi^{\rm soft})_J} w_i\Big)\\

=&\\

& \argmax_{J:~J \in \argmax_{K:~K \in {\rm SM}[\Pi]} \big(\sum\limits_{\alpha:F_i\;\in\; \Pi^{\rm hard}, K \vDash F_i} \alpha \big)} \Big(\sum\limits_{w_i:F_i\;\in\; \Pi^{\rm soft}, J \vDash F_i} w_i\Big)\\

=& \\ 


& \argmin_{J:~J \in \argmin_{K:~K \in {\rm SM}[\Pi]} \big(\sum\limits_{\alpha:F_i\;\in\; \Pi^{\rm hard}, K\not \vDash F_i} \alpha \big)} \Big(\sum\limits_{w_i:F_i\;\in\; \Pi^{\rm soft}, J\not \vDash F_i} w_i\Big)\\

=& \text{(since for any interpretation $K \in \sm[\Pi]$, $\phi(K)\vDash \i{unsat(i)}$ iff $K \not \vDash F_i$)} \\

& \argmin_{J:~J \in \argmin_{K:~K \in {\rm SM}[\Pi]} \big(\sum\limits_{:\sneg\ unsat(i)\left[ \alpha @1 \right] \in \Pi', \phi(K) \vDash unsat(i)} \alpha \big)} \Big(\sum\limits_{:\sneg\ unsat(i)\left[ w_i@0 \right] \in \Pi', \phi(J) \vDash unsat(i)} w_i\Big)\\

=& \\

& \argmin_{J:~J \in \argmin_{K:~K \in {\rm SM}[\Pi]} \mathit{Penalty}_{\Pi'}(\phi(K),1)} \mathit{Penalty}_{\Pi'}(\phi(J),0).
\ea
\]

\end{proof}
\qed

\section{Proof of $\textbf{Corollary \ref{cor:lpmln2wc-pnt-asp}}$} \label{sec:proofwc-pnt-asp}
For any $\lpmln$ program $\Pi$ such that all unweighted rules of $\Pi$ are in the rule form $\i{Head}\ar\i{Body}$, let ${\rm lpmln2wc^{pnt,clingo}}(\Pi)$ be the translation by turning each weighted rule $w_i : \i{Head}_i\ar\i{Body}_i$ in $\Pi$ into
\[
\ba {rcl}
\i{unsat}(i) &\ar& \i{Body}_i, \no\ \i{Head}_i  \\
\i{Head}_i &\ar& \i{Body}_i, \no\ \i{unsat}(i) \\
&:\sim&  \i{unsat}(i)\ \ \ \ [w_i@l]
\ea
\]
where $l=1$ if $w_i$ is $\alpha$ and $l=0$ otherwise. 

\medskip
\begin{cor} \label{cor:lpmln2wc-pnt-asp} 
Let  $\Pi$ be an $\lpmln$ program such that all unweighted rules of $\Pi$ are in the rule form $\i{Head}\ar\i{Body}$.
There is a 1-1 correspondence $\phi$ between the most probable stable models of $\Pi$ and the optimal stable models of ${\rm lpmln2wc^{pnt,clingo}}(\Pi)$, where
$\phi(I) = I \cup \{ \i{unsat}(i) \mid w_i: \i{Head}_i \ar \i{Body}_i \in \Pi, I\vDash \i{Body}_i \land \neg \i{Head}_i \}$.
\end{cor}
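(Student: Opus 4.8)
The plan is to reduce Corollary~\ref{cor:lpmln2wc-pnt-asp} to the already-established Corollary~\ref{cor:lpmln2wc-pnt-rule} by showing that, although ${\rm lpmln2wc^{pnt,clingo}}(\Pi)$ and ${\rm lpmln2wc^{pnt,rule}}(\Pi)$ are built from different rule forms, they induce exactly the same correspondence $\phi$. The first observation is that the map $\phi$ in this statement is literally the map of Corollary~\ref{cor:lpmln2wc-pnt-rule}: since $F_i$ is identified with $\i{Body}_i \rar \i{Head}_i$, we have $I \models \i{Body}_i \land \neg\i{Head}_i$ iff $I \not\models F_i$, so the two definitions of $\phi$ coincide. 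It therefore suffices to prove the analogue of Lemma~\ref{lem:c2}, namely that $\phi$ is a 1-1 correspondence between $\sm[\Pi]$ and the stable models of the logical part
\[
\bigwedge_{w_i:\ \i{Head}_i \ar \i{Body}_i\ \in\ \Pi}\Big( (\i{unsat}(i) \ar \i{Body}_i,\ \no\ \i{Head}_i) \land (\i{Head}_i \ar \i{Body}_i,\ \no\ \i{unsat}(i)) \Big)
\]
of ${\rm lpmln2wc^{pnt,clingo}}(\Pi)$. Once this is in hand, the weak constraints ${:\sim}\ \i{unsat}(i)\ [w_i@l]$ of the two translations are identical and $\i{Penalty}$ depends only on which $\i{unsat}(i)$ hold, so the entire $\argmax$/$\argmin$ computation in the proof of Corollary~\ref{cor:lpmln2wc-pnt-rule} transfers verbatim, yielding the claimed correspondence between the most probable stable models of $\Pi$ and the optimal stable models of ${\rm lpmln2wc^{pnt,clingo}}(\Pi)$.

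To prove the Lemma~\ref{lem:c2} analogue I would follow that proof step for step, splitting the program with the splitting theorem into $\Pi_2 = \{\,\i{unsat}(i) \ar \i{Body}_i, \no\ \i{Head}_i\,\}$ over ${\bf q} = \{\i{unsat}(i)\}$ and $\Pi_1 = \{\,\i{Head}_i \ar \i{Body}_i, \no\ \i{unsat}(i)\,\}$ over ${\bf p} = \sigma$. The splitting hypotheses hold for the same reason as in the proof of Lemma~\ref{lem:c2}: every $\sigma$-atom occurs only in the antecedent (hence not strictly positively) of the $\Pi_2$ rules, the atom $\i{unsat}(i)$ occurs only in the antecedent of the $\Pi_1$ rules, and consequently every strongly connected component of the dependency graph is contained in ${\bf p}$ or in ${\bf q}$. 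For the $\Pi_2$ part, stability relative to ${\bf q}$ forces $\phi(I) \models \i{unsat}(i)$ exactly when the body $\i{Body}_i \land \neg\i{Head}_i$ is true under $I$ --- the same condition on $\i{unsat}(i)$ as in Lemma~\ref{lem:c2} (there phrased via $\neg F_i$), and matching the definition of $\phi$. For the $\Pi_1$ part, once $\i{unsat}(i)$ is fixed by $\phi(I)$, the rule $\i{Head}_i \ar \i{Body}_i, \no\ \i{unsat}(i)$ is vacuous when $\i{unsat}(i)$ is true and collapses to $F_i = \i{Body}_i \rar \i{Head}_i$ when $\i{unsat}(i)$ is false, so the reduct of the $\Pi_1$ rules relative to $\sigma$ agrees with that of $\o{\Pi_I}$; applying Lemma~\ref{lem:sm-mm} then gives stability relative to $\sigma$ iff $I \in \sm[\Pi]$. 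Combining the two parts through the splitting theorem establishes the correspondence.

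The step I expect to be the main obstacle is justifying that the argument really does transfer, because ${\rm lpmln2wc^{pnt,clingo}}(\Pi)$ is \emph{not} strongly equivalent to ${\rm lpmln2wc^{pnt,rule}}(\Pi)$: the bodies $\i{Body}_i \land \neg\i{Head}_i$ and $\neg(\i{Body}_i \rar \i{Head}_i)$ disagree in the logic of here-and-there, e.g.\ at interpretations $(H,T)$ with $\i{Body}_i \in T \setminus H$, so one cannot simply invoke a strong-equivalence replacement and cite Corollary~\ref{cor:lpmln2wc-pnt-rule} directly. The care therefore has to go into the reduct and minimality bookkeeping for the $\i{unsat}(i)$-defining rules of $\Pi_2$: I must confirm that $\i{unsat}(i)$ is derived under precisely the condition $I \not\models F_i$ and that the minimality direction of Lemma~\ref{lem:sm-mm} matches, using the facts that $\i{unsat}(i)$ does not occur in its own defining body and that $\i{Body}_i \land \neg\i{Head}_i$ and $\neg F_i$ are \emph{classically} (hence on total interpretations) equivalent. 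Given this, the remainder is a routine reuse of the proofs of Lemma~\ref{lem:c2} and Corollary~\ref{cor:lpmln2wc-pnt-rule}.
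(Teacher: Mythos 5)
Your proof is correct, and it shares the paper's skeleton---reduction to Corollary~\ref{cor:lpmln2wc-pnt-rule}, the observation that the weak constraints and the map $\phi$ coincide, a splitting-theorem decomposition into the $\i{unsat}$-defining rules and the $\i{Head}$-deriving rules, and completion for the former---but the bridge statement differs. The paper never re-derives the correspondence with $\sm[\Pi]$: it proves that ${\rm lpmln2wc^{pnt,clingo}}(\Pi)$ and ${\rm lpmln2wc^{pnt,rule}}(\Pi)$ have exactly the same stable models and then invokes Corollary~\ref{cor:lpmln2wc-pnt-rule} wholesale; the nontrivial half of that equivalence is the $\i{Head}$-deriving part, which the paper settles by showing that $\i{Head}_i \ar \i{Body}_i \land \neg\,\i{unsat}(i)$ is \emph{strongly equivalent} to $(\i{Head}_i \ar \i{Body}_i) \ar \neg\,\i{unsat}(i)$, via a reduct computation over arbitrary interpretations together with Proposition~5 of \cite{ferraris11logic}. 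You instead bypass the rule-form program entirely and re-run the proof of Lemma~\ref{lem:c2} on the clingo-form rules, computing reducts only at the particular interpretations $\phi(I)$ and finishing with Lemma~\ref{lem:sm-mm}; this avoids any strong-equivalence claim at the price of repeating Lemma~\ref{lem:c2}'s minimality bookkeeping. Your cautionary remark is accurate and in fact mirrors the paper's own structure: the $\i{unsat}$-defining rules of the two translations are indeed \emph{not} strongly equivalent (your $\i{Body}_i \in T\setminus H$ counterexample is the right one), which is precisely why the paper handles that half by completion of stable models relative to the $\i{unsat}$ atoms---where only classical equivalence of the bodies matters---and reserves the strong-equivalence argument for the $\i{Head}$ half, where it does hold. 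The paper's route buys a reusable, interpretation-independent equivalence between the two translations; yours buys a self-contained argument that needs no appeal to \cite{ferraris11logic}.
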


\medskip
\begin{proof}
Let $\sigma$ denote the signature of $\Pi$. 
Since the weak constraints of ${\rm lpmln2wc^{pnt,clingo}}(\Pi)$ are exactly the same as the weak constraints of ${\rm lpmln2wc^{pnt,rule}}(\Pi)$, 
by {\bf Corollary \ref{cor:lpmln2wc-pnt-rule}}, it suffices to prove that for any interpretation $I$ of the signature $\sigma \cup \{ \i{unsat}(i) \mid w_i: \i{Head}_i \ar \i{Body}_i \in \Pi \}$,
\[
\text{$I$ is a stable model of ${\rm lpmln2wc^{pnt,clingo}}(\Pi)$ iff $I$ is a stable model of ${\rm lpmln2wc^{pnt,rule}}(\Pi)$.}
\]
By the splitting theorem, it is equivalent to proving
\begin{itemize}
\ii[{\bf (a)}]
$I$ is a stable model of 
$
\bigwedge\limits_{w_i:\ Head_i \ar Body_i \in \Pi} \Big( \i{unsat}(i) \ar \i{Body}_i \land \neg \i{Head}_i \Big)
$
relative to $\{ \i{unsat}(i) \mid w_i: \i{Head}_i \ar \i{Body}_i \in \Pi \}$, and 
\ii[{\bf (b)}]
$I$ is a stable model of
$
\bigwedge\limits_{w_i:\ Head_i \ar Body_i \in \Pi}  \Big( 
\i{Head}_i \ar \i{Body}_i \land \neg \i{unsat}(i) \Big)
$
relative to $\sigma$;
\end{itemize}
iff
\begin{itemize}
\ii[{\bf (c)}]
$I$ is a stable model of 
$
\bigwedge\limits_{w_i:\ Head_i \ar Body_i \in \Pi} \Big( \i{unsat}(i) \leftarrow \neg (\i{Head}_i \ar \i{Body}_i) \Big)
$
relative to $\{ \i{unsat}(i) \mid w_i: \i{Head}_i \ar \i{Body}_i \in \Pi \}$, and 
\ii[{\bf (d)}]
$I$ is a stable model of
$
\bigwedge\limits_{w_i:\ Head_i \ar Body_i \in \Pi}  \Big( 
(\i{Head}_i \ar \i{Body}_i) \leftarrow \neg \i{unsat}(i) \Big)
$
relative to $\sigma$.
\end{itemize}
This is clear because 
\begin{itemize}
\item
{\bf (a)} and {\bf (c)} are equivalent to saying
$
I \vDash \bigwedge\limits_{w_i:\ Head_i \ar Body_i \in \Pi} \Big( \i{unsat}(i) \leftrightarrow \i{Body}_i \land \neg \i{Head}_i \Big)
$ (by completion), and
\item
{\bf (b)} and {\bf (d)} are equivalent because
$\i{Head}_i \ar \i{Body}_i \land \neg \i{unsat}(i)$ is strongly equivalent to $(\i{Head}_i \ar \i{Body}_i) \leftarrow \neg \i{unsat}(i)$. It is because for any interpretation $J$,
\[
\ba {l l}
& \big( (\i{Head}_i \ar \i{Body}_i) \leftarrow \neg \i{unsat}(i) \big)^J \\

=& \\

& \begin{cases}
(\i{Head}_i \ar \i{Body}_i)^J \leftarrow (\neg \i{unsat}(i))^J & \text{if $J\vDash \i{Head}_i \lor \neg \i{Body}_i \lor \i{unsat}(i)$,} \\
\bot & \text{otherwise;}
\end{cases} \\

=& \\

& \begin{cases}
(\i{Head}_i^J \ar \i{Body}_i^J) \leftarrow (\neg \i{unsat}(i))^J & \text{if $J\vDash \i{Head}_i \lor \neg \i{Body}_i$,} \\
\bot \leftarrow \bot & \text{if $J \not \vDash \i{Head}_i \lor \neg \i{Body}_i$ and $J \vDash \i{unsat}(i)$,} \\
\bot & \text{otherwise;}
\end{cases} \\

\Leftrightarrow& \\

& \begin{cases}
\i{Head}_i^J \leftarrow \big(\i{Body}_i^J \land (\neg \i{unsat}(i))^J\big) & \text{if $J\vDash \i{Head}_i \lor \neg \i{Body}_i$,} \\
\i{Head}_i^J \leftarrow \big(\i{Body}_i^J \land (\neg \i{unsat}(i))^J\big) & \text{if $J \not \vDash \i{Head}_i \lor \neg \i{Body}_i$ and $J\vDash \i{unsat}(i)$,} \\
\bot & \text{otherwise;}
\end{cases} \\

=& \\

& \begin{cases}
\i{Head}_i^J \leftarrow \big(\i{Body}_i^J \land (\neg \i{unsat}(i))^J\big) & \text{if $J\vDash \i{Head}_i \lor \neg \i{Body}_i \lor \i{unsat}(i)$,} \\
\bot & \text{otherwise;}
\end{cases} \\

=& \\

& \begin{cases}
\i{Head}_i^J \leftarrow \big(\i{Body}_i \land (\neg \i{unsat}(i))\big)^J & \text{if $J\vDash \i{Head}_i \lor \neg \i{Body}_i \lor \i{unsat}(i)$,} \\
\bot & \text{otherwise;}
\end{cases} \\

=& \\

& \big( \i{Head}_i \ar \i{Body}_i \land \neg \i{unsat}(i) \big)^J.
\ea
\]
By Proposition 5 from \cite{ferraris11logic}, $\i{Head}_i \ar \i{Body}_i \land \neg \i{unsat}(i)$ is strongly equivalent to $(\i{Head}_i \ar \i{Body}_i) \leftarrow \neg \i{unsat}(i)$.
\end{itemize}
\end{proof}
\qed

\section{Proof of $\textbf{Corollary \ref{cor:lpmln2wc-pnt-clingo-s}}$} \label{sec:proofwc-pnt-clingo-s}
For any $\lpmln$ program $\Pi$ such that all unweighted rules of $\Pi$ are in the rule form $\i{Head}\ar\i{Body}$, let ${\rm lpmln2wc^{pnt,clingo}_{simp}}(\Pi)$ be the translation by turning each weighted rule $w_i : \i{Head}\ar\i{Body}$ in $\Pi$ into (where $l=1$ if $w_i$ is $\alpha$ and $l=0$ otherwise)
\[
\ba {rcl}
&:\sim&  \i{Body}_i\ \ \ \ [w_i@l]
\ea
\]
if $\i{Head}_i$ is $\bot$, or
\[
\ba {rcl}
\i{unsat}(i) &\ar& \i{Body}_i, \no\ \i{Head}_i  \\
\i{Head}_i &\ar& \i{Body}_i, \no\ \i{unsat}(i) \\
&:\sim&  \i{unsat}(i)\ \ \ \ [w_i@l]
\ea
\]
otherwise.

\medskip
\begin{cor} \label{cor:lpmln2wc-pnt-clingo-s} 
Let  $\Pi$ be an $\lpmln$ program such that all unweighted rules of $\Pi$ are in the rule form $\i{Head}\ar\i{Body}$.
There is a 1-1 correspondence $\phi$ between the most probable stable models of $\Pi$ and the optimal stable models of ${\rm lpmln2wc^{pnt,clingo}_{simp}}(\Pi)$, where
$\phi(I) = I \cup \{ \i{unsat}(i) \mid w_i: \i{Head}_i \ar \i{Body}_i \in \Pi, \i{Head}_i \text{ is not } \bot, I\vDash \i{Body}_i \land \neg \i{Head}_i\}$.
\end{cor}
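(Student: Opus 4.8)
The plan is to reduce to Corollary~\ref{cor:lpmln2wc-pnt-asp}, observing that ${\rm lpmln2wc^{pnt,clingo}_{simp}}$ differs from ${\rm lpmln2wc^{pnt,clingo}}$ only in its treatment of rules whose head is $\bot$. First I would analyze a single constraint rule $w_i:\bot\ar\i{Body}_i$: the original translation produces the logical rules $\i{unsat}(i)\ar\i{Body}_i$ (the negative literal $\no\ \i{Head}_i$ reducing to truth because $\i{Head}_i$ is $\bot$) and $\bot\ar\i{Body}_i,\no\ \i{unsat}(i)$, together with the weak constraint $:\sim\i{unsat}(i)\,[w_i@l]$. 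Since $\i{unsat}(i)$ occurs in the head of only the first rule, in every stable model of the original program $\i{unsat}(i)$ is true exactly when $\i{Body}_i$ is true, so the second rule is subsumed and $\i{unsat}(i)\leftrightarrow\i{Body}_i$ holds throughout.

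Next I would make the elimination of these auxiliary atoms precise via the splitting theorem employed in the proof of Corollary~\ref{cor:lpmln2wc-pnt-rule}. Splitting off the atoms $\i{unsat}(i)$ associated with constraint rules, the top part reduces by completion to $\i{unsat}(i)\leftrightarrow\i{Body}_i$ and constrains none of the remaining atoms, while the bottom part is exactly the logical part of the simplified program. This gives a bijection $\psi$ between the stable models of ${\rm lpmln2wc^{pnt,clingo}}(\Pi)$ and those of ${\rm lpmln2wc^{pnt,clingo}_{simp}}(\Pi)$, namely $\psi(J)=J\setminus\{\i{unsat}(i)\mid w_i:\bot\ar\i{Body}_i\in\Pi\}$.

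The key verification is that $\psi$ preserves penalties level by level. For a constraint rule the original incurs penalty $w_i$ at level $l$ precisely when $\i{unsat}(i)$ is true, which we have shown coincides with $\i{Body}_i$ being true; the simplified program incurs the same penalty through $:\sim\i{Body}_i\,[w_i@l]$ precisely when $\i{Body}_i$ is true; and $J$ and $\psi(J)$ agree on every atom of the original signature, so in particular on $\i{Body}_i$. All remaining weak constraints are literally identical in the two programs. Hence $\i{Penalty}(J,l)=\i{Penalty}(\psi(J),l)$ at every level $l$, so $\psi$ restricts to a bijection between the optimal stable models of the two programs.

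Finally I would compose $\psi$ with the correspondence $\phi'$ of Corollary~\ref{cor:lpmln2wc-pnt-asp}. Because $\phi'$ adjoins $\i{unsat}(i)$ for a constraint rule exactly when $I\models\i{Body}_i$, whereas the map $\phi$ in the present statement adjoins no constraint $\i{unsat}$ atoms, one checks directly that $\psi\circ\phi'=\phi$; the composite is thus the claimed 1-1 correspondence between the most probable stable models of $\Pi$ and the optimal stable models of ${\rm lpmln2wc^{pnt,clingo}_{simp}}(\Pi)$. The main obstacle I anticipate is the bookkeeping in the splitting step: verifying that the dependency-graph conditions hold for the constraint $\i{unsat}$ atoms and that the residual constraint $\bot\ar\i{Body}_i,\no\ \i{unsat}(i)$ genuinely imposes nothing once $\i{unsat}(i)\leftrightarrow\i{Body}_i$ is forced.
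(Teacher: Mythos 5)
Your proof is correct, and it differs from the paper's argument at the key step, though both share the same skeleton: reduce to Corollary~\ref{cor:lpmln2wc-pnt-asp} and finish with the same penalty bookkeeping. The difference is where the constraint rules $w_i:\bot\ar\i{Body}_i$ get eliminated. The paper works on the source ($\lpmln$) side: Lemma~\ref{lem:constr} shows that weighted rules of the form $w:\ \bot\ar F$ never affect $\sm[\cdot]$, so $\sm[\Pi]$ coincides with $\sm$ of the constraint-free part of $\Pi$; applying Corollary~\ref{cor:lpmln2wc-pnt-asp} to that constraint-free program immediately makes $\phi$ a bijection between $\sm[\Pi]$ and the stable models of ${\rm lpmln2wc^{pnt,clingo}_{simp}}(\Pi)$, after which the paper verifies $\i{Penalty}_{\Pi_{c4}}(\phi(I),l)=\i{Penalty}_{\Pi_{c3}}(\phi_{c3}(I),l)$ for $l\in\{0,1\}$ and invokes Corollary~\ref{cor:lpmln2wc-pnt-asp} once more on the full $\Pi$. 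You instead work entirely on the target (ASP) side: you isolate a structural fact the paper never states, namely that ${\rm lpmln2wc^{pnt,clingo}}(\Pi)$ and ${\rm lpmln2wc^{pnt,clingo}_{simp}}(\Pi)$ have isomorphic stable-model sets under $\psi(J)=J\setminus\{\i{unsat}(i)\mid w_i:\bot\ar\i{Body}_i\in\Pi\}$ with identical penalties at every level, proved by splitting plus completion on the constraint-rule $\i{unsat}$ atoms, and you then compose $\psi\circ\phi_{c3}=\phi$. Your splitting bookkeeping does go through: each such $\i{unsat}(i)$ forms a singleton strongly connected component and has a strictly positive occurrence only in its defining rule $\i{unsat}(i)\ar\i{Body}_i$ (the body literal $\no\ \i{Head}_i$ being vacuous when $\i{Head}_i$ is $\bot$), while the residual constraint $\bot\ar\i{Body}_i,\no\ \i{unsat}(i)$ only filters stable models and is vacuously satisfied once completion forces $\i{unsat}(i)\leftrightarrow\i{Body}_i$; likewise your composition check is right, since $\phi_{c3}$ adds a constraint-rule $\i{unsat}(i)$ exactly when $I\models\i{Body}_i$. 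What each route buys: the paper's is shorter because it reuses Lemma~\ref{lem:constr} and Corollary~\ref{cor:lpmln2wc-pnt-asp} and proves nothing new about the translations themselves; yours yields a slightly stronger and more modular intermediate result (a penalty-preserving isomorphism between the two clingo translations) from which the corollary, and any future variant that alters only the constraint-rule encoding, follows immediately.
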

\medskip

The proof of \textbf{Corollary \ref{cor:lpmln2wc-pnt-clingo-s}} will use the following lemma:
\medskip
\begin{lemma}\label{lem:constr} 
For any interpretation $I$ of an $\lpmln$ program $\Pi$, let $\Pi^{\rm constr}$ denote a set of weighted rules of the form $w: ~\leftarrow F$, where $w$ is $\alpha$ or a real number, $F$ is a first-order formula.
Then $I \in \sm[\Pi \cup \Pi^{\rm constr}]$ iff $I \in \sm[\Pi]$.
\end{lemma}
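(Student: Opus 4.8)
The plan is to prove the equivalence one interpretation at a time, reducing it to the familiar fact that adding \emph{satisfied} constraints does not affect stable models. Fix an interpretation $I$. Since $\sm$ depends on a program only through the unweighted formulas selected by $I$ (i.e.\ through $\o{\Pi_I}$), the weights attached to the members of $\Pi^{\rm constr}$ play no role, and by definition $I\in\sm[\Pi\cup\Pi^{\rm constr}]$ iff $I$ is a stable model of $\o{(\Pi\cup\Pi^{\rm constr})_I}$. First I would compute this program explicitly: a rule $w:\ \ar F$ of $\Pi^{\rm constr}$, read as the formula $\neg F$, belongs to $(\Pi\cup\Pi^{\rm constr})_I$ exactly when $I\models\neg F$, i.e.\ when $I\not\models F$. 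Hence
\[
\o{(\Pi\cup\Pi^{\rm constr})_I}=\o{\Pi_I}\cup\{\neg F\mid (w:\ \ar F)\in\Pi^{\rm constr},\ I\not\models F\},
\]
so the only difference between the two programs is a set $G$ of formulas $\neg F$, each of which is satisfied by $I$.

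Next I would pass to the reduct/minimal-model characterization already used in the proof of Proposition~\ref{prop:lpmln2wc}: $I$ is a stable model of a ground theory $\Delta$ iff $I$ is a minimal model of $\Delta^I$. Writing $\Delta_1=\o{\Pi_I}$ and $\Delta_2=\Delta_1\cup G$, and using that $I\not\models F$ yields $(\neg F)^I=\neg(F^I)$ together with $I\not\models F^I$, I obtain $\Delta_2^I=\Delta_1^I\wedge\bigwedge_{\neg F\in G}\neg(F^I)$, and that $I$ satisfies every added conjunct. The forward direction is then immediate: if $I$ is a minimal model of $\Delta_1^I$, it is a model of $\Delta_2^I$, and any proper submodel of $\Delta_2^I$ would be a proper submodel of $\Delta_1^I$, contradicting minimality.

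The crux is the backward direction, and this is where I expect the main obstacle to lie: given a proper submodel $J\subsetneq I$ of $\Delta_1^I$, I must rule it out as a submodel of $\Delta_2^I$, which requires $J\models\neg(F^I)$, i.e.\ $J\not\models F^I$, for each added formula. This is exactly the standard monotonicity property of the reduct from the stable model semantics of \cite{ferraris11stable}: for $J\subseteq I$, $J\models F^I$ implies $I\models F^I$; contrapositively, since $I\not\models F^I$ here, no $J\subseteq I$ satisfies $F^I$. Invoking this lemma, every $J\subseteq I$ satisfies $\bigwedge_{\neg F\in G}\neg(F^I)$, so a proper submodel of $\Delta_1^I$ is automatically a proper submodel of $\Delta_2^I$, again contradicting minimality. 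Thus $I$ is a minimal model of $\Delta_1^I$ iff it is a minimal model of $\Delta_2^I$, i.e.\ $I\in\sm[\Pi]$ iff $I\in\sm[\Pi\cup\Pi^{\rm constr}]$, which is the claim. I would finish by observing that the argument is uniform over all $I$ (no assumption that $I$ is a stable model of either program is needed), so it settles both directions of the stated equivalence simultaneously.
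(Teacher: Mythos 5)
Your proof is correct, and it reaches the conclusion by a more self-contained route than the paper. The paper's proof performs the same first step as yours—identifying $\o{(\Pi\cup\Pi^{\rm constr})_I}$ as $\o{\Pi_I}$ conjoined with exactly those constraints $\bot\ar F$ that $I$ satisfies—but then finishes in one stroke by citing Theorem~3 of \cite{ferraris11stable} (the constraint theorem: $I$ is a stable model of $G\land\neg H$ iff $I$ is a stable model of $G$ and $I\models\neg H$), together with the observation that the selected constraints are satisfied by $I$ by construction. You instead re-derive precisely the instance of that theorem you need: passing to the reduct/minimal-model characterization (the paper's Lemma~\ref{lem:sm-mm}, which it also uses for Proposition~\ref{prop:lpmln2wc} and Corollary~\ref{cor:lpmln2wc-pnt-rule}), you note that each added formula $\neg F$ has $I\not\models F$, and conclude that no $J\subseteq I$ can satisfy $F^I$, so the added conjuncts cannot block or create minimality in either direction. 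One small simplification: your appeal to reduct monotonicity is more than is needed, since $I\not\models F$ gives $F^I=\bot$ outright by the definition of the reduct, so each added conjunct $(\neg F)^I=\neg\bot$ is a tautology and the two reducts have literally the same models over subsets of $I$. What the paper's approach buys is brevity and a statement inherited directly at the first-order level; what yours buys is independence from the external theorem, making explicit the structural reason why constraints satisfied by $I$ can never affect whether $I$ is stable.
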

\medskip
\begin{proof}
\begin{itemize}
\item
$I \in \sm[\Pi \cup \Pi^{\rm constr}]$ 
\end{itemize}
iff (by definition)
\begin{itemize}
\item
$I$ is a stable model of 
$
\overline{\Pi_I}
\land
\bigwedge\limits_{\substack{
w:\ \bot \ar F \in \Pi^{\rm constr} \\ 
I \vDash \bot \ar F }}  \Big( \bot \ar F \Big)
$
\end{itemize}
iff (by theorem 3 in \cite{ferraris11stable})
\begin{itemize}
\item
$I$ is a stable model of $\overline{\Pi_I}$ 
and $I\vDash$
$
\bigwedge\limits_{\substack{
w:\ \bot \ar F \in \Pi^{\rm constr} \\ 
I \vDash \bot \ar F }}  \Big( \bot \ar F \Big)
$
\end{itemize}
iff (since 
$I\vDash
\bigwedge\limits_{\substack{
w:\ \bot \ar F \in \Pi^{\rm constr} \\ 
I \vDash \bot \ar F }}  \Big( \bot \ar F \Big)
$
is always true)
\begin{itemize}
\item
$I \in \sm[\Pi]$.
\end{itemize}

\end{proof}
\qed

\medskip

\noindent{\bf Proof of Corollary \ref{cor:lpmln2wc-pnt-clingo-s}}. ~~
We can check that the following mapping $\phi$ is a 1-1 correspondence:
\[
\phi(I) = I \cup \{ \i{unsat}(i) \mid w_i: \i{Head}_i \ar \i{Body}_i \in \Pi, \i{Head}_i \text{ is not } \bot, I\vDash \i{Body}_i \land \neg \i{Head}_i\},
\]
where $\phi(I)$ is of an extended signature $\sigma \cup \{ \i{unsat}(i) \mid w_i: \i{Head}_i \ar \i{Body}_i \in \Pi, \i{Head}_i \text{ is not } \bot \}$.

By {\bf Lemma \ref{lem:constr}}, we know $I \in \sm[\Pi]$ iff $I \in \sm[
\bigwedge\limits_{\substack{
w_i:\ Head_i \ar Body_i \in \Pi \\ 
Head_i \text{ is not } \bot }}  \Big( w_i:\ Head_i \ar Body_i \Big)]$. 

By {\bf Corollary \ref{cor:lpmln2wc-pnt-asp}}, we know $\phi$ is a 1-1 correspondence between the set 
$
 \sm[
\bigwedge\limits_{\substack{
w_i:\ Head_i \ar Body_i \in \Pi \\ 
Head_i \text{ is not } \bot }}  \Big( w_i:\ Head_i \ar Body_i \Big)]
$
and the set of the stable models of
\[ 
\bigwedge\limits_{\substack{w_i:\ Head_i \ar Body_i \in \Pi \\ \text{and $Head_i$ is not $\bot$}}}  \Big( 
(\i{unsat}(i) \ar \i{Body}_i \land \neg \i{Head}_i ) \land 
(\i{Head}_i \ar \i{Body}_i \land \neg \i{unsat}(i) ) \Big),
\] 
where $\phi(I) = I \cup \{ \i{unsat}(i) \mid w_i: \i{Head}_i \ar \i{Body}_i \in \Pi, \i{Head}_i \text{ is not } \bot, I\vDash \i{Body}_i \land \neg \i{Head}_i\}$.

Thus $\phi$ is a 1-1 correspondence between the set $\sm[\Pi]$ and the set of the stable models of ${\rm lpmln2wc^{pnt,clingo}_{simp}}(\Pi)$. 

Let $\Pi_{c4}$ denote ${\rm lpmln2wc^{pnt,clingo}_{simp}}(\Pi)$, $\Pi_{c3}$ denote ${\rm lpmln2wc^{pnt,clingo}}(\Pi)$, and $\phi_{c3}$ denote the 1-1 correspondence in {\bf Corollary \ref{cor:lpmln2wc-pnt-asp}}. By {\bf Corollary \ref{cor:lpmln2wc-pnt-asp}}, it is suffices to prove that for any interpretation $I\in \sm[\Pi]$ and $l\in \{ 0,1 \}$
\[
\i{Penalty}_{\Pi_{c4}}(\phi(I),l) = \i{Penalty}_{\Pi_{c3}}(\phi_{c3}(I),l).
\]
This is clear because
\[
\ba {l l}
& \i{Penalty}_{\Pi_{c4}}(\phi(I),0) \\

=&  \\

& 
\sum\limits_{\substack{:\sneg\ unsat(i)\left[ w_i@0 \right] \in \Pi_{c4} \text{ and } \phi(I) \vDash unsat(i)\\
\text{or } :\sneg Body_i \left[ w_i@0 \right] \in \Pi_{c4} \text{ and } \phi(I) \vDash Body_i
}} w_i
\\

=& \text{(since, by the definition of $\phi(I)$, when $\i{Head}_i$ is not $\bot$, $\phi(I)\vDash \i{unsat(i)}$ iff $I \vDash \i{Body}_i \land \neg \i{Head}_i$)}\\

& \sum\limits_{\substack{w_i:Head_i \ar Body_i\;\in\; \Pi^{\rm soft} \text{, $Head_i$ is not $\bot$, and } I \vDash Body_i \land \neg Head_i\\
\text{or } w_i:Head_i \ar Body_i\;\in\; \Pi^{\rm soft} \text{, $Head_i$ is $\bot$, and } I \vDash Body_i \land \neg Head_i
}} w_i
\\

=& \\

& 
\sum\limits_{w_i:Head_i \ar Body_i\;\in\; \Pi^{\rm soft} \text{ and } I \vDash Body_i \land \neg Head_i} w_i
\\

=& \text{(since $\phi_{c3}(I) \vDash unsat(i)$ iff $I \vDash Body_i \land \neg Head_i$)} \\

& 
\sum\limits_{:\sneg\ unsat(i)\left[ w_i@0 \right] \in \Pi_{c3} \text{ and } \phi_{c3}(I) \vDash unsat(i)} w_i
\\

=& \\

& \i{Penalty}_{\Pi_{c3}}(\phi_{c3}(I),0);
\ea
\]
and similarly,
\[
\i{Penalty}_{\Pi_{c4}}(\phi(I),1) = \i{Penalty}_{\Pi_{c3}}(\phi_{c3}(I),1).
\]
\qed

\newpage

\section{Proof of $\textbf{Theorem \ref{thm:plog2lpmln}}$} \label{sec:prooflpmln}

\subsection{Definition of $\tau(\Pi)$} \label{subsec:tauPi}
Given a P-log program $\Pi$ of the form~\eqref{plog} of signature $\sigma_1 \cup \sigma_2$, a (standard) ASP program $\tau(\Pi)$ with the propositional signature 
\[ \sigma_1\cup\sigma_2\cup\{\i{Intervene}(c(\vec{u})) \mid c(\vec{u}) \text{ is an attribute occurring in } {\bf S}\}, \]
where ${\bf S}$ is the set of random selection rules of $\Pi$, is constructed as follows: 

\begin{itemize}
 
\item $\tau(\Pi)$ contains all rules in {\bf R}.

\item For each attribute $c(\vec{u})$ in $\sigma_1$, for $v_1, v_2 \in \i{Range}(c)$, $\tau(\Pi)$ contains the following rule:
\[
\leftarrow c(\vec{u})= v_1, c(\vec{u})= v_2, v_1 \neq v_2
\]

\item For each random selection rule (\ref{random}) in ${\bf S}$ with $\i{Range}(c)=\left\{v_1,\dots, v_n\right\}$, $\tau(\Pi)$ contains the following rules:
\[
\ba l
c(\vec{u})= v_1; \dots; c(\vec{u})= v_n\ar Body, not\ \i{Intervene}(c(\vec{u})) \\
\ar c(\vec{u})= v, not\ p(v), Body, not\ \i{Intervene}(c(\vec{u}))
\ea
\]
where $\i{Intervene}(c(\vec{u}))$ means that the randomness of $c(\vec{u})$ is intervened (by an atomic fact $Do(c(\vec{u}) = v)$).

\item For each atomic fact $Obs(c(\vec{u}) = v)$ in ${\bf Obs}$,  $\tau(\Pi)$ contains the following rules:
\[
\ba l
Obs(c(\vec{u})=v) \\
\ar Obs(c(\vec{u})= v), \no\ c(\vec{u})= v
\ea
\]

\item For each atomic fact $Obs(c(\vec{u}) \neq v)$ in ${\bf Obs}$,  $\tau(\Pi)$ contains the following rules:
\[
\ba l
Obs(c(\vec{u})\neq v) \\
\ar Obs(c(\vec{u})\neq v), c(\vec{u})= v
\ea
\]

\item For each atomic fact $Do(c(\vec{u}) = v)$ in ${\bf Act}$,  $\tau(\Pi)$ contains the following rules:
\[ 
\ba l
Do(c(\vec{u})= v) \\
c(\vec{u})= v\ar Do(c(\vec{u})= v) \\
\i{Intervene}(c(\vec{u})) \ar Do(c(\vec{u})= v)
\ea
\] 
\end{itemize}

\subsection{Signature of ${\rm plog2lpmln}(\Pi)$} \label{subsec:signature}
For any real number $p \in \left[ 0,1 \right]$ and $b\in \{ \true, \false \}$, we define $\left[ p \right]^b$ as follows: $\left[ p \right]^b = p$ if $b = \true$, and $\left[ p \right]^b = 0$ if $b = \false$. Further, for any P-log program $\Pi$ and any $c(\vec{u})$ in {\bf S} of $\Pi$, we define the set of all possible remaining (unassigned) probabilities of $c(\vec{u})$ in $\Pi$, ${\bf p}_{rem}(c(\vec{u}), \Pi)$, as 
\[
\{ p\mid p = 1 - \sum\limits_{ p_i:~pr_r(c(\vec{u}) = v_i \ | \ C_i) = p_i \in \Pi} \left[ p_i \right]^{b_i}, \text{ where each } b_i\in\{ \true, \false \} \}.
\]

Given a P-log program $\Pi$ of the form~\eqref{plog} of signature $\sigma_1 \cup \sigma_2$, the signature of ${\rm plog2lpmln}(\Pi)$ is 
\[ \sigma_1\cup\sigma_2\cup\{\i{Intervene}(c(\vec{u})) \mid c(\vec{u}) \text{ is an attribute occurring in } {\bf S}\} \cup \sigma_3, \]
where $\sigma_3$ is a propositional signature constructed from $\Pi$ as follows:
\begin{align*}
\sigma_3 = & ~~~~
	\{\i{Poss}_r(c(\vec{u}) \mvis  v) \mid \text{$r$ is a random selection rule for $c(\vec{u})$ in $\Pi$ and $v\in \i{Range}(c) \}$} \\
	& \cup \{\i{\i{PossWithAssPr}}_{r, C}(c(\vec{u}) \mvis  v) \mid \text{there is a pr-atom 
	$pr_r(c(\vec{u}) = v \ | \ C) = p$ in $\Pi$} \} \\
	& \cup \{\i{AssPr}_{r, C}(c(\vec{u})\mvis  v) \mid \text{there is a pr-atom 
	$pr_r(c(\vec{u}) = v \ | \ C) = p$ in $\Pi$} \} \\
	& \cup \{\i{PossWithAssPr}(c(\vec{u}) \mvis  v) \mid \text{there is a random selection rule for $c(\vec{u})$ in $\Pi$ and $v\in \i{Range}(c) \}$} \\
	& \cup \{\i{PossWithDefPr}(c(\vec{u}) \mvis  v) \mid \text{there is a random selection rule for $c(\vec{u})$ in $\Pi$ and $v\in \i{Range}(c) \}$} \\
	& \cup \{\i{NumDefPr}(c(\vec{u}), m) \mid \text{there is a random selection rule for $c(\vec{u})$ in $\Pi$ and $m\in\{1, \dots, |\i{Range}(c)| \}$} \\
	& \cup \{\i{RemPr}(c(\vec{u}), k) \mid \text{there is a random selection rule for $c(\vec{u})$ in $\Pi$ and $k \in {\bf p}_{rem}(c(\vec{u}), \Pi)$}  \} \\
	& \cup \{\i{TotalDefPr}(c(\vec{u}), k) \mid \text{there is a random selection rule for $c(\vec{u})$ in $\Pi$, $k \in {\bf p}_{rem}(c(\vec{u}), \Pi)$, and $k>0$} \} .
\end{align*}

\medskip

Let $\sm '[\Pi]$ be the set
\[
\{ I \mid I \text{ is a stable model of $\overline{\Pi_I}$ that satisfy $\overline{\Pi^{\rm hard}}$} \}.
\]

The unnormalized weight of $I$ under $\Pi$ with respect to soft rules only is defined as
\[
W'_{\Pi}(I)=
\begin{cases}
  exp\Bigg(\sum\limits_{w:R\;\in\; (\Pi^{\rm soft})_I} w\Bigg) & 
      \text{if $I \in \sm '[\Pi]$}; \\
  0 & \text{otherwise.}
\end{cases}
\]

The normalized weight (a.k.a. probability) of $I$ under $\Pi$ with respect to soft rules only is defined as
\[ 
\small 
  P'_\Pi(I)  = \frac{W'_\Pi(I)}{\sum\limits_{J\in {\rm SM'}[\Pi]}{W'_\Pi(J)}}. 
\]

The proof of \textbf{Theorem \ref{thm:plog2lpmln}} will use the following lemma:

\begin{lemma}\label{lem:soft:prop2} (proposition 2 in \cite{lee16weighted})
If $\sm '[\Pi]$ is not empty, 
	for every interpretation $I$ of $\Pi$, $P'_{\Pi}(I)$ coincides with $P_{\Pi}(I)$.
\end{lemma}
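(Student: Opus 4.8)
The plan is to reduce the claim to a direct computation of the limit defining $P_\Pi$ and to identify which stable models survive that limit. For any $I\in\sm[\Pi]$ write $n(I)$ for the number of hard rules in $\Pi_I$ (equivalently, the number of hard rules of $\Pi$ satisfied by $I$) and $s(I)=\sum_{w:F\in(\Pi^{\rm soft})_I} w$ for the total weight of the soft rules satisfied by $I$. Since every hard rule contributes the weight $\alpha$, we have $W_\Pi(I)=e^{\,n(I)\alpha+s(I)}$ for $I\in\sm[\Pi]$ and $W_\Pi(I)=0$ otherwise. First I would substitute this expression into the definition of $P_\Pi(I)$ and, writing $N_h$ for the total number of hard rules of $\Pi$ and $n_{\max}=\max_{J\in\sm[\Pi]}n(J)$, divide numerator and denominator of the quotient by $e^{\,n_{\max}\alpha}$ before passing to the limit.

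After this normalization each summand carries the factor $e^{\,(n(J)-n_{\max})\alpha}$, which tends to $0$ if $n(J)<n_{\max}$ and equals $1$ if $n(J)=n_{\max}$. Hence in the limit only the stable models attaining $n_{\max}$ contribute, giving $P_\Pi(I)=0$ whenever $n(I)<n_{\max}$ and $P_\Pi(I)=e^{s(I)}\big/\sum_{J:\,n(J)=n_{\max}} e^{s(J)}$ whenever $n(I)=n_{\max}$. Note that $\sm'[\Pi]\subseteq\sm[\Pi]$, so the hypothesis $\sm'[\Pi]\neq\emptyset$ already guarantees that $\sm[\Pi]$ is nonempty and the denominator is positive for large $\alpha$.

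The crux is to show that the hypothesis $\sm'[\Pi]\neq\emptyset$ forces $n_{\max}=N_h$ and, moreover, that $\{J\in\sm[\Pi]:n(J)=n_{\max}\}$ is exactly $\sm'[\Pi]$. Indeed, any $I\in\sm'[\Pi]$ is by definition a stable model of $\overline{\Pi_I}$ (so $I\in\sm[\Pi]$) that satisfies every hard rule, whence $n(I)=N_h$; since $N_h$ is trivially the largest value $n$ can take, $n_{\max}=N_h$ as soon as $\sm'[\Pi]$ is nonempty. Conversely, any $J\in\sm[\Pi]$ with $n(J)=N_h$ satisfies all hard rules and therefore lies in $\sm'[\Pi]$. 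This mutual containment is the main obstacle, in the sense that it is the one place where nonemptiness is essential: without it, $n_{\max}$ could be strictly smaller than $N_h$ and the surviving models would fail to coincide with $\sm'[\Pi]$.

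Finally I would match the two distributions. For $I\in\sm'[\Pi]$ we have $W'_\Pi(I)=e^{s(I)}$, so $P'_\Pi(I)=e^{s(I)}\big/\sum_{J\in\sm'[\Pi]} e^{s(J)}$, which coincides with the expression obtained above for $P_\Pi(I)$ once the index set $\{J:n(J)=n_{\max}\}$ is rewritten as $\sm'[\Pi]$. For $I\notin\sm'[\Pi]$ we have $W'_\Pi(I)=0$, hence $P'_\Pi(I)=0$; and such an $I$ is either outside $\sm[\Pi]$ or satisfies $n(I)<N_h=n_{\max}$, so $P_\Pi(I)=0$ as well. In every case $P_\Pi(I)=P'_\Pi(I)$, which completes the argument.
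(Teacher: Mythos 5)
Your proof is correct, and it is essentially the standard argument: splitting each weight into its hard part $n(I)\alpha$ and soft part $s(I)$, observing that after normalizing by $e^{n_{\max}\alpha}$ only the stable models satisfying the maximal number of hard rules survive the limit $\alpha\to\infty$, and then using the nonemptiness of $\sm'[\Pi]$ to identify that surviving set with $\sm'[\Pi]$ itself. Note that the paper does not prove this lemma at all---it imports it as Proposition~2 of \cite{lee16weighted}---so there is nothing internal to compare against; your argument is precisely the one that result rests on.
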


It follows from \textbf{Lemma \ref{lem:soft:prop2}} that if $\sm '[\Pi]$ is not empty, then 
\begin{itemize}
\item
$I$ is a probabilistic stable model of $\Pi$ iff $I \in \sm '[\Pi]$,
\item
every probabilistic stable model of $\Pi$ should satisfy all hard rules in $\Pi$.
\end{itemize}

\medskip

\noindent{\bf Theorem~\ref{thm:plog2lpmln} \optional{thm:plog2lpmln}}\ \ 
{\sl
Let $\Pi$ be a consistent P-log program, $\sigma$ be the signature of $\tau(\Pi)$. There is a 1-1 correspondence $\phi$ between the set of the possible worlds of $\Pi$ with non-zero probabilities and the set of probabilistic stable models of ${\rm plog2lpmln}(\Pi)$ such that 
\bi
\ii[{\bf (a)}] For every possible world $W$ of $\Pi$ that has a non-zero probability, $\phi(W)$ is a probabilistic stable model of ${\rm plog2lpmln(\Pi)}$, and $\mu_\Pi(W) = P_{\rm plog2lpmln(\Pi)}(\phi(W))$.

\ii[{\bf (b)}]  For every probabilistic stable model $I$ of ${\rm plog2lpmln(\Pi)}$, $I|_{\sigma}$ is a possible world of $\Pi$, $I=\phi(I|_{\sigma})$, and $\mu_\Pi(I|_{\sigma})>0$.
\ei
}
\medskip

Note that we make {\bf (b)} a little bit more stronger than the statement in the main body (by adding ``$I=\phi(I|_{\sigma})$'', which is already covered by ``1-1 correspondence''). In this case, to prove {\bf Theorem \ref{thm:plog2lpmln}}, it is sufficient to prove {\bf (a)} and {\bf (b)}.

\medskip
\begin{proof}
For any possible world $W$ of a P-log program $\Pi$, we define the mapping $\phi$ as follows.
\begin{enumerate}
\item
    $\phi(W)\models \i{Poss}_r(c(\vec{u}) \mvis  v)$ iff $c(\vec{u})\mvis v$ is possible in $W$ due to $r$.
    \item
    For each pr-atom
    $pr_r(c(\vec{u}) \mvis  v \ | \ C) = p$
    in $\Pi$, \\
    $\phi(W)\models \i{PossWithAssPr}_{r, C}(c(\vec{u}) \mvis  v)$ iff 
    this pr-atom is applied in $W$.
    \item  
    For each pr-atom $pr_r(c(\vec{u}) \mvis  v \ | \ C) = p$ in~$\Pi$, 
          
    $\phi(W)\models \i{AssPr}_{r, C}(c(\vec{u}) \mvis  v)$ iff this pr-atom is applied in~$W$, and $W\models c(\vec{u})\mvis v$.
    \item
    $\phi(W)\models \i{PossWithAssPr}(c(\vec{u}) \mvis  v)$ iff $v \in  AV_W(c(\vec{u}))$.
    \item
    $\phi(W)\models \i{PossWithDefPr}(c(\vec{u}) \mvis  v)$ iff $c(\vec{u})\mvis v$ is possible in $W$ and $v \not \in  AV_W(c(\vec{u}))$.
    \item
    $\phi(W)\models \i{NumDefPr}(c(\vec{u}), m)$ iff there exist exactly $m$ different values $v$ such that $c(\vec{u})\mvis v$ is possible in $W$; $v \not \in AV_W(c(\vec{u}))$; and, for one of such $v$, $W\models c(\vec{u})\mvis v$.
    \item
    $\phi(W)\models \i{RemPr}(c(\vec{u}), k)$ iff 
there exists a value $v$ such that $W\models c(\vec{u})\mvis v$; $c(\vec{u}) \mvis  v$ is possible in $W$; $v \not \in AV_W(c(\vec{u}))$; and 

    $k = 1 - \sum\limits_{ v\in AV_W(c(\vec{u})) } \i{PossWithAssPr}(W, c(\vec{u}) \mvis  v)$.
    \item  
    $\phi(W)\models \i{TotalDefPr}(c(\vec{u}), k)$ iff $\phi(W)\models \i{RemPr}(c(\vec{u}), k)$ and $k>0$.
\end{enumerate}
Let's denote ${\rm plog2lpmln}(\Pi)$ as $\Pi'$. In the following two parts, we will prove each of the two bullets of \textbf{Theorem \ref{thm:plog2lpmln}}.

\begin{enumerate}
	\item[{\bf (a)}]
	For every possible world $W$ of $\Pi$ with a non-zero probability, to prove $\phi(W)$ is a probabilistic stable model of $\Pi'$, it is sufficient to prove $\phi(W)$ is a stable model of $\overline{\Pi'~^{\rm hard}}$. Indeed, if we prove $\phi(W)$ is a stable model of $\overline{\Pi'~^{\rm hard}}$, then $\phi(W)$ is a stable model of $\overline{\Pi'~^{\rm hard}_{\phi(W)}}$ and $P_{\Pi'~^{\rm hard}}(\phi(W))$ is always greater than 0. Consequently, $\phi(W)$ must be a probabilistic stable model of $\Pi'~^{\rm hard}$.
	 Since $\Pi' = \Pi'~^{\rm hard} \cup \Pi'~^{\rm soft}$, and $\Pi'~^{\rm soft}$ is a set of soft rules of the form ``$w:\ \leftarrow \no\ A$'', where $A$ is an atom and $w$ is a real number, by {\bf Lemma \ref{lem:constr}} (it follows from {\bf Lemma \ref{lem:constr}} that, if all $w$ in $\Pi^{\rm constr}$ are real numbers, $I$ is a probabilistic stable model of $\Pi \cup \Pi^{\rm constr}$ iff $I$ is a probabilistic stable model of $\Pi$), $\phi(W)$ is a probabilistic stable model of $\Pi'$. 

	Let $\sigma$ denote the signature of $\tau(\Pi)$, $\Pi_{AUX} = \overline{\Pi'~^{\rm hard}} \setminus \tau(\Pi)$. 
	It can be seen that no atom in $\sigma$ has a strictly positive occurrence in $\Pi_{AUX}$, and no atom in $\sigma_3$ has a strictly positive occurrence in $\tau(\Pi)$. Furthermore, the construction of $\Pi'$ guarantees that all loops of size greater than one involves atoms in $\sigma$ only. So each strongly connected component of the dependency graph of $\overline{\Pi'~^{\rm hard}}$ relative to $\sigma \cup \sigma_3$ is a subset of $\sigma$ or a subset of $\sigma_3$.
	By the splitting theorem, it is equivalent to show that $\phi(W)$ is a stable model of $\tau(\Pi)$ relative to $\sigma$ and $\phi(W)$ is a stable model of $\Pi_{AUX}$ relative to $\sigma_3$.
	\begin{itemize}
		\item
		\textbf{$\phi(W)$ is a stable model of $\tau(\Pi)$ relative to $\sigma$} :
		Since $W$ is a possible world of $\Pi$, $W$ is a stable model of $\tau(\Pi)$ relative to $\sigma$. Since $\phi(W)$ is an extension of $W$ and no atom in $\phi(W)\setminus W$ belongs to $\sigma$, $\phi(W)$ is a stable model of $\tau(\Pi)$ relative to $\sigma$.
		\item
		\textbf{$\phi(W)$ is a stable model of $\Pi_{AUX}$ relative to $\sigma_3$} :
		Since there is no loop of size greater than one in $\Pi_{AUX}$, we could apply completion on it. Let $\i{Comp}[\Pi_{AUX}; \sigma_3]$ denote the program obtained by applying completion on $\Pi_{AUX}$ with respect to $\sigma_3$, which is as follows:
		\begin{itemize}
		\item
		For each random selection rule (\ref{random}) for $c(\vec{u})$, for each $v \in \i{Range}(c)$ and $x\in\{2, \dots, |\i{Range}(c)| \}$, $\i{Comp}[\Pi_{AUX}; \sigma_3]$ contains:
		{\small
		\beq
		\i{Poss}_r(c(\vec{u}) = v) \leftrightarrow \i{Body} \land p(v) \land \neg \i{Intervene}(c(\vec{u}))
		\eeq {equ:comp_poss}
		}
		{\small
		\beq
		\i{PossWithDefPr}(c(\vec{u}) = v) \leftrightarrow \neg \i{PossWithAssPr}(c(\vec{u}) = v)\ \land \bigvee\limits_{\substack{\text{r' :}\\ \left[ r' \right] \ random(c(\vec{u}): \{ X: p(X) \}) \leftarrow Body \in \Pi}} \i{Poss}_{r'}(c(\vec{u}) = v)
		\eeq {equ:comp1}
		\beq
		\i{NumDefPr}(c(\vec{u}), x) \leftrightarrow 
		x = \#count\{y: \i{PossWithDefPr}(c(\vec{u}) \mvis  y) \} \land
		\bigvee\limits_{\substack{z \in Range(c)}
		} \bigg(c(\vec{u}) = z\land 
		\i{PossWithDefPr}(c(\vec{u}) = z)\bigg)
		\eeq {equ:comp2}
		}
		\item
		For each random selection rule (\ref{random}) for $c(\vec{u})$ along with all pr-atoms associated with it in \textbf{P}:
\beq 
\ba l
    pr_r(c(\vec{u}) \mvis  v_1  \mid  C_1) = p_1   \\
    \dots  \\ 
    pr_r(c(\vec{u}) \mvis  v_n  \mid  C_n) = p_n 
\ea
\eeq {pr1}
		where $n \geq 1$, for $i \in \{ 1, \dots, n \}$, $\i{Comp}[\Pi_{AUX}; \sigma_3]$ also contains:
		{\small
		\beq
		\i{\i{PossWithAssPr}}_{r, C_i}(c(\vec{u}) = v_i) \leftrightarrow \i{Poss}_r(c(\vec{u}) = v_i) \land C_i 
		\eeq {equ:comp3}
		\beq
		\i{AssPr}_{r, C_i}(c(\vec{u}) = v_i) \leftrightarrow c(\vec{u}) = v_i \land \i{\i{PossWithAssPr}}_{r, C_i}(c(\vec{u}) = v_i) 
		\eeq {equ:comp5}
		\beq
		\neg \i{AssPr}_{r, C_i} (c(\vec{u}) = v_i) \hspace{20pt} \text{(if $p_i = 0$)}
		\eeq {comp_constr_asspr}
		\beq
		\i{PossWithAssPr}(c(\vec{u}) = v_i) \leftrightarrow \bigvee\limits_{\substack{\text{r', j :}\\pr_{r'}(c(\vec{u}) = v_i \mid C_j) = p_j \in \Pi}}  \i{\i{PossWithAssPr}}_{r', C_j}(c(\vec{u}) = v_i)
		\eeq {equ:comp4}
		}
		\item
		For each $c(\vec{u})$ in {\bf S} and $x\in {\bf p}_{rem}(c(\vec{u}), \Pi)$, $\i{Comp}[\Pi_{AUX}; \sigma_3]$ also contains:
		{\small
		\beq
		\ba {l l}
		\i{RemPr}(c(\vec{u}), x) \leftrightarrow &\bigvee\limits_{v \in Range(c)} \bigg( c(\vec{u}) = v\land \i{PossWithDefPr}(c(\vec{u}) = v) \bigg) \land \\
		
		& \bigvee\limits_{\substack{\text{r' :}\\ \left[ r' \right] \ random(c(\vec{u}): \{ X: p(X) \}) \leftarrow Body \in \Pi}}
		\bigg( \i{Body} \land x=1\!-\!y \ \land \\
		& y = \#{\rm sum}\{p_1:\i{PossWithAssPr}_{r', C_1}(c(\vec{u}) \mvis  v_1);\dots ;
p_n:\i{PossWithAssPr}_{r', C_n}(c(\vec{u}) \mvis  v_n)\}  \bigg)
		\ea
		\eeq {comp_rempr}
		\beq
		\i{TotalDefPr}(c(\vec{u}), x) \leftrightarrow \i{RemPr}(c(\vec{u}), x) \land x>0
		\eeq {comp_totaldefpr}
		}
		{\small
		\beq
		\neg (\i{RemPr}(c(\vec{u}), x) \land x\leq 0 )
		\eeq {comp_constr_totaldefpr}
		}
		
		\end{itemize}
		First, let's expand some notations in the definition of $\phi(W)$:
		\begin{itemize}
		\item
		$c(\vec{u})=v$ is possible in $W$\\
		By definition, it is equivalent to ``there exists a random selection rule (\ref{random}) such that $W\vDash \i{Body} \land p(v) \land \neg \i{Intervene}(c(\vec{u}))$''.
		\item
		a pr-atom $pr_r (c(\vec{u}) = v_i\mid C_i) = p_i$ is applied in $W$\\
		By definition, it is equivalent to ``$c(\vec{u}) = v_i$ is possible in $W$ due to $r$, and $W\vDash C_i$''.
		\item
		$v \in  AV_W(c(\vec{u}))$\\
		By the definition of $AV_W(c(\vec{u}))$, it is equivalent to ``there exists a pr-atom
		$pr_r (c(\vec{u}) = v\mid C_i) = p_i$ that is applied in $W$ for some $r$ and $i$
		''. 
		\end{itemize}

$\newline$
		Then we will prove that each formula in $\i{Comp}[\Pi_{AUX}; \sigma_3]$ is satisfied by $\phi(W)$ based on the definition of $\phi(W)$:
		\begin{itemize}
		\item
		{\bf Let's take formula (\ref{equ:comp_poss}) into account.} Consider the random selection rule
		$\left[ r \right] \ random(c(\vec{u}): \{ X: p(X) \}) \leftarrow \i{Body}$, where formula (\ref{equ:comp_poss}) is obtained. By definition, 
		\begin{itemize}
		\item
		$\phi(W)\vDash \i{Poss}_r(c(\vec{u}) = v)$ 
		\end{itemize}
		iff 
		\begin{itemize}
		\item
		$c(\vec{u}) = v$ is possible in $W$ due to $r$
		\end{itemize}
		iff 
		\begin{itemize}
		\item
		$W\vDash \i{Body}\land p(v) \land \no\ \i{Intervene}(c(\vec{u}))$
		\end{itemize}
		iff (since $\phi(W)$ is an extension of $W$)
		\begin{itemize}
		\item
		$\phi(W)\vDash \i{Body}\land p(v) \land \no\ \i{Intervene}(c(\vec{u}))$
		\end{itemize}
		Thus formula (\ref{equ:comp_poss}) is satisfied by $\phi(W)$.
		
		\item
		{\bf Let's take formula (\ref{equ:comp1}) into account.} By definition, 
		\begin{itemize}
		\item
		$\phi(W)\vDash \i{PossWithDefPr}(c(\vec{u}) = v)$ 
		\end{itemize}
		iff 
		\begin{itemize}
		\item
		$c(\vec{u}) = v$ is possible in $W$
		\item
		$v \not \in AV_W(c(\vec{u}))$
		\end{itemize}
		iff (by definition)
		\begin{itemize}
		\item
		there exists a random selection rule $r$ such that $c(\vec{u}) = v$ is possible in $W$ due to $r$
		\item
		$\phi(W) \not \vDash \i{PossWithAssPr}(c(\vec{u}) = v)$
		\end{itemize}
		iff (by definition)
		\begin{itemize}
		\item
		there exists a random selection rule $r$ such that $\phi(W) \vDash \i{Poss}_{r}(c(\vec{u}) = v)$
		\item
		$\phi(W)\vDash \neg \i{PossWithAssPr}(c(\vec{u}) = v)$
		\end{itemize}
		Thus formula (\ref{equ:comp1}) is satisfied by $\phi(W)$.
		
		\item
		{\bf Let's take formula (\ref{equ:comp2}) into account.} By definition, 
		\begin{itemize}
		\item
		$\phi(W)\vDash \i{NumDefPr}(c(\vec{u}), x)$ 
		\end{itemize}
		iff 
		\begin{itemize}
		\item
		there exist exactly $x$ different $v$ such that 
			\begin{itemize}
			\item
			$c(\vec{u})=v$ is possible in $W$
			\item
			$v \not \in AV_W(c(\vec{u}))$
			\item
			for one of such $v$, $W\models c(\vec{u})=v$
			\end{itemize}
		\end{itemize}
		iff (by definition and since $\phi(W)$ is an extension of $W$)
		\begin{itemize}
		\item
		there exists exactly $x$ different $v$ such that 
			\begin{itemize}
			\item
			$\phi(W)\vDash \i{PossWithDefPr}(c(\vec{u}) = v)$
			\item
			for one of such $v$, $\phi(W)\vDash c(\vec{u})=v \land \i{PossWithDefPr}(c(\vec{u}) = v)$
			\end{itemize}
		\end{itemize}
		Thus formula (\ref{equ:comp2}) is satisfied by $\phi(W)$.
		
		\item
		{\bf Let's take formula (\ref{equ:comp3}) into account.} Consider the pr-atom
		$pr_r(c(\vec{u}) = v_i \mid C_i) = p_i$
		where formula (\ref{equ:comp3}) is obtained. By definition, 
		\begin{itemize}
		\item
		$\phi(W)\vDash \i{PossWithAssPr}_{r, C_i}(c(\vec{u}) = v_i)$ 
		\end{itemize}
		iff 
		\begin{itemize}
		\item
		this pr-atom is applied in $W$
		\end{itemize}
		iff 
		\begin{itemize}
		\item
		$c(\vec{u}) = v_i$ is possible in $W$ due to $r$, and $W\vDash C_i$
		\end{itemize}
		iff (by definition and since $\phi(W)$ is an extension of $W$)
		\begin{itemize}
		\item
		$\phi(W)\vDash \i{Poss}_r(c(\vec{u}) = v_i) \land C_i$
		\end{itemize}
		Thus formula (\ref{equ:comp3}) is satisfied by $\phi(W)$.
		
		{\bf Remark:} By {\bf Condition 1}, $r$ is the only random selection rule for $c(\vec{u})$ whose ``\i{Body}'' is satisfied by $W$. And by {\bf Condition 2}, there won't be another pr-atom $pr_r(c(\vec{u}) = v \ | \ C') = p' \in \Pi$ such that $W\vDash C'$. Thus for any $c(\vec{u}) = v$, $\phi(W)$ could at most satisfy one $\i{PossWithAssPr}_{r, C_i}(c(\vec{u}) = v_i)$ for any $r$ and $C_i$.
		
		\item
		{\bf Let's take formula (\ref{equ:comp5}) into account.} Consider the pr-atom
		$pr_r(c(\vec{u}) = v_i \mid C_i) = p_i$
		in $\Pi$, where formula (\ref{equ:comp5}) is obtained, by definition, 
		\begin{itemize}
		\item
		$\phi(W)\vDash \i{AssPr}_{r, C_i}(c(\vec{u}) = v_i)$ 
		\end{itemize}
		iff 
		\begin{itemize}
		\item
		this pr-atom is applied in $W$ 
		\item
		$W \vDash c(\vec{u}) = v_i$
		\end{itemize}
		iff (by definition and since $\phi(W)$ is an extension of $W$)
		\begin{itemize}
		\item
		$\phi(W)\vDash \i{PossWithAssPr}_{r, C_i}(c(\vec{u}) = v_i) \land c(\vec{u})=v_i$
		\end{itemize}
		Thus formula (\ref{equ:comp5}) is satisfied by $\phi(W)$.
		
		\item
		{\bf Let's take formula (\ref{comp_constr_asspr}) into account.} For any pr-atom
		$pr_r(c(\vec{u}) = v_i \mid C_i) = p_i$
		in $\Pi$ such that $p_i = 0$, assume for the sake of contradiction that $\phi(W) \vDash \i{AssPr}_{r, C_i}(c(\vec{u}) = v_i)$. Then by definition, this pr-atom is applied and $W\vDash c(\vec{u}) = v_i$. In other words, $c(\vec{u}) = v_i \in W$, $c(\vec{u}) = v_i$ is possible in $W$, and $P(W,c(\vec{u}) = v_i)=0$. Thus $\hat{\mu}_\Pi(W) = 0$, which contradicts that $\mu_\Pi(W) > 0$.
		
		Thus formula (\ref{comp_constr_asspr}) is satisfied by $\phi(W)$.
		
		\item
		{\bf Let's take formula (\ref{equ:comp4}) into account.} By definition, 
		\begin{itemize}
		\item
		$\phi(W)\vDash \i{PossWithAssPr}(c(\vec{u}) = v_i)$ 
		\end{itemize}
		iff 
		\begin{itemize}
		\item
		$v_i \in AV_W(c(\vec{u}))$
		\end{itemize}
		iff 
		\begin{itemize}
		\item
		there exist a pr-atom $pr_r(c(\vec{u}) = v_i \mid C_j) = p_j$ that is applied in $W$ for some $r$ and $j$ (where $i$ and $j$ may be different)
		\end{itemize}
		iff (by definition)
		\begin{itemize}
		\item
		there exist $r$ and $j$ such that $\phi(W)\vDash \i{PossWithAssPr}_{r, C_j}(c(\vec{u}) = v_i)$
		\end{itemize}
		Thus formula (\ref{equ:comp4}) is satisfied by $\phi(W)$.
		
		\item
		{\bf Let's take formula (\ref{comp_rempr}) into account.} By definition,
		\begin{itemize}
		\item
		$\phi(W)\vDash \i{RemPr}(c(\vec{u}), x)$ 
		\end{itemize}
		iff 
		\begin{itemize}
		\item
		there exists a $v$ such that
			\begin{itemize}
			\item
			$W\vDash c(\vec{u}) \mvis  v$
			\item
			$c(\vec{u}) \mvis  v$ is possible in $W$
			\item
			$v \not \in AV_W(c(\vec{u}))$, and
			\end{itemize}
		\item
		$x = 1 - \sum\limits_{ v'\in AV_W(c(\vec{u})) } \i{PossWithAssPr}(W, c(\vec{u}) \mvis  v')$
		\end{itemize}
		iff (by definition and since $\phi(W)$ is an extension of $W$)
		\begin{itemize}
		\item
		there exists a $v$ such that $\phi(W)\vDash c(\vec{u}) = v\land \i{PossWithDefPr}(c(\vec{u}) = v)$, 
		\item
		$x = 1 - y$, and
		\item
		$y = \sum\limits_{ \phi(W)\vDash PossWithAssPr(c(\vec{u}) = v') } \i{PossWithAssPr}(W, c(\vec{u}) \mvis  v')$
		\end{itemize}
		iff (by formula (34) and the definition of $\i{PossWithAssPr}(W, c(\vec{u}) \mvis  v)$)
		\begin{itemize}
		\item
		there exists a $v$ such that $\phi(W)\vDash c(\vec{u}) = v\land \i{PossWithDefPr}(c(\vec{u}) = v)$, 
		\item
		$x = 1 - y$, and
		\item
		there exists a random selection rule $r$ \eqref{random} along with all pr-atoms \eqref{pr1} associated with it such that
			\begin{itemize}
			\item
			$\phi(W) \vDash \i{Body}$
			\item
			$y = \sum\limits_{ j: \phi(W) \vDash PossWithAssPr_{r, C_j}(c(\vec{u}) \mvis  v_j) } p_j$
			\end{itemize}
		\end{itemize}
		Thus formula (\ref{comp_rempr}) is satisfied by $\phi(W)$.
		
		{\bf Remark:} By {\bf Condition 1}, there exits at most one random selection rule whose ``\i{Body}'' is satisfied by $W$. Thus there is no other random selection rule $r'$ such that $\phi(W) \vDash \i{PossWithAssPr}_{r', C_j}(c(\vec{u}) \mvis  v_j)$ for any $j$.
		Morevoer, for any $c(\vec{u}) \in \Pi$, there exits at most one $\i{RemPr}(c(\vec{u}), x)$ that can be satisfied by $\phi(W)$.
		
		\item
		{\bf Let's take formula (\ref{comp_totaldefpr}) into account.} By definition, 
		\begin{itemize}
		\item
		$\phi(W)\vDash \i{TotalDefPr}(c(\vec{u}), x)$ 
		\end{itemize}
		iff 
		\begin{itemize}
		\item
		$\phi(W)\vDash \i{RemPr}(c(\vec{u}), x) \land x>0$
		\end{itemize}
		Thus formula (\ref{comp_totaldefpr}) is satisfied by $\phi(W)$.
		
		\item
		{\bf Let's take formula (\ref{comp_constr_totaldefpr}) into account.} Consider the random selection rule (\ref{random}) for $c(\vec{u})$ along with all pr-atoms \eqref{pr1} associated with it ($n \geq 1$), where formula (\ref{comp_constr_totaldefpr}) is obtained. Assume for the sake of contradiction that $\phi(W) \vDash (\i{RemPr}(c(\vec{u}), x) \land x\leq 0 )$ for some $x$, which (by definition) follows that 
		\begin{itemize}
		\item
		there exists a $v$ such that
			\begin{itemize}
			\item
			$W\vDash c(\vec{u}) \mvis  v$
			\item
			$c(\vec{u}) \mvis  v$ is possible in $W$
			\item
			$v \not \in AV_W(c(\vec{u}))$
			\end{itemize}
		\item
		$x = 1 - \sum\limits_{ v\in AV_W(c(\vec{u})) } \i{PossWithAssPr}(W, c(\vec{u}) \mvis  v)$ and $x \leq 0$
		\end{itemize}
		In other words, $c(\vec{u}) \mvis v \in W$, $c(\vec{u}) \mvis v$ is possible in $W$, and $P(W, c(\vec{u}) \mvis v)= \i{PossWithDefPr}(W,c(\vec{u}) \mvis v) = 0$. Thus $\hat{\mu}_\Pi(W) = 0$, which contradicts that $\mu_\Pi(W) > 0$.
		
		Thus formula (\ref{comp_constr_totaldefpr}) is satisfied by $\phi(W)$.
	\end{itemize}
	\medskip
		Now we see the definition of $\phi(W)$ guarantees that $\phi(W)$ is a model of  $\i{Comp}[\Pi_{AUX}; \sigma_3]$. Thus $\phi(W)$ is a stable model of  $\Pi_{AUX}$ relative to $\sigma_3$.
	\end{itemize}
	
	Until now we proved $\phi(W)$ is a stable model of $\Pi'$.
	Then, we are going to prove $\mu_{\Pi}(W) = P_{\Pi'}(\phi(W))$. 
	
	Recall that $\Pi'$ denotes the translated $\lpmln$ program ${\rm plog2lpmln}(\Pi)$, $W'_{\Pi}(I)$ denotes the unnormalized weight of $I$ under $\Pi$ with respect to soft rules only.
	
	Firstly we will prove $\hat{\mu}_{\Pi}(W) = W'_{\Pi'}(\phi(W))$.
	From the definition of $\hat{\mu}_{\Pi}(W)$ (unnormalized probability) and $P(W,c(\vec{u})=v)$ in the semantics of P-log, we have
	\begin{align*}
	\hat{\mu}_{\Pi}(W) &= \prod_{\substack{\text{$c(\vec{u})=v:$}\\\text{$c(\vec{u})=v$ is possible in $W$}\\\text{and $W\vDash c(\vec{u})=v$}}}P(W,c(\vec{u})=v)
	\\
	& = \prod_{\substack{\text{$c(\vec{u})=v:$}\\\text{$c(\vec{u})=v$ is possible in $W$}\\\text{$W\vDash c(\vec{u})=v$}\\\text{and $v \in AV_W(c(\vec{u}))$}}}P(W,c(\vec{u})=v) 
	\times 
	\prod_{\substack{\text{$c(\vec{u})=v:$}\\\text{$c(\vec{u})=v$ is possible in $W$}\\\text{$W\vDash c(\vec{u})=v$}\\\text{and $v \not \in AV_W(c(\vec{u}))$}}}P(W,c(\vec{u})=v)
	\\
	& = \prod_{\substack{\text{$c(\vec{u})=v:$}\\\text{$c(\vec{u})=v$ is possible in $W$}\\\text{$W\vDash c(\vec{u})=v$}\\\text{and $v \in AV_W(c(\vec{u}))$}}}\i{PossWithAssPr}(W,c(\vec{u})=v) 
	\times \\
	&\hspace{12pt} \prod_{\substack{\text{$c(\vec{u})=v:$}\\\text{$c(\vec{u})=v$ is possible in $W$}\\\text{and $W\vDash c(\vec{u})=v$}\\\text{and $v \not \in AV_W(c(\vec{u}))$}}}\i{PossWithDefPr}(W,c(\vec{u})=v)
	\end{align*}
	Since $W$ is a possible world of $\Pi$ with a non-zero probability, $\hat{\mu}_{\Pi}(W) > 0$.
	Since the statement ``$v \in AV_W(c(\vec{u}))$'' is equivalent to saying ``$c(\vec{u})=v$ is possible in $W$, and there exists $pr_{r_{W, c(\vec{u})}} (c(\vec{u}) = v\mid C) = p \in \Pi$ for some $C$ and $p$, and $W\vDash C$'', we have
	\begin{align*}
	\hat{\mu}_{\Pi}(W) &= \prod_{\substack{\text{$c(\vec{u})=v:$}\\\text{$c(\vec{u})=v$ is possible in $W$}\\\text{$W\vDash c(\vec{u})=v$}\\\text{$pr_{r_{W, c(\vec{u})}} (c(\vec{u}) = v\mid C) = p \in \Pi$}\\\text{and $W\vDash C$}}}p
	\times \\
	&\hspace{12pt} \prod_{\substack{\text{$c(\vec{u})=v:$}\\\text{$c(\vec{u})=v$ is possible in $W$}\\\text{$W\vDash c(\vec{u})=v$}\\\text{and $v \not \in AV_W(c(\vec{u}))$}}}\frac{1 - \sum_{v'\in AV_W(c(\vec{u}))}\i{PossWithAssPr}(W, c(\vec{u}) = v')}{|\{v'' \mid c(\vec{u})\mvis v'' \text{ is possible in } W \text{ and } v'' \not\in AV_W(c(\vec{u})) \}|}
	\\
	& = \prod_{\substack{\text{$c(\vec{u})=v:$}\\\text{$c(\vec{u})=v$ is possible in $W$}\\\text{$W\vDash c(\vec{u})=v$}\\\text{$pr_{r_{W, c(\vec{u})}} (c(\vec{u}) = v\mid C) = p \in \Pi$}\\\text{and $W\vDash C$}}}p
	\times \\
	&\hspace{12pt} \prod_{\substack{\text{$c(\vec{u})=v:$}\\\text{$c(\vec{u})=v$ is possible in $W$}\\\text{$W\vDash c(\vec{u})=v$}\\\text{and $v \not \in AV_W(c(\vec{u}))$}}}\frac{1}{|\{v' \mid c(\vec{u})\mvis v' \text{ is possible in } W \text{ and } v' \not\in AV_W(c(\vec{u})) \}|}
	\times \\
	&\hspace{12pt} \prod_{\substack{\text{$c(\vec{u})=v:$}\\\text{$c(\vec{u})=v$ is possible in $W$}\\\text{$W\vDash c(\vec{u})=v$}\\\text{and $v \not \in AV_W(c(\vec{u}))$}}}(1 - \sum\limits_{\substack{
	\text{$c(\vec{u})=v':$} \\
	\text{$c(\vec{u})=v'$ is possible in $W$} \\
	\text{$pr_{r_{W, c(\vec{u})}} (c(\vec{u}) = v'\mid C) = p \in \Pi$} \\
	\text{and $W\vDash C$} \\
	}}p)
	\end{align*}
	Note that by \textbf{Condition 1}, the subscript $r_{W, c(\vec{u})}$ of the applied pr-atom is the only random selection rule for $c(\vec{u})$ whose body could be satisfied by $W$.
	
	We then calculate $W'_{\Pi'}(\phi(W))$, the unnormalized weight of $\phi(W)$ with respect to all soft rules in $\Pi'$. From the construction of $\Pi'$, it's easy to see that there are only 3 kinds of soft rules: Rule (\ref{w-asspr}), Rule (\ref{w-numdefpr}), and Rule (\ref{w-totaldefpr}), which are satisfied iff $\phi(W)\vDash \i{AssPr}_{r, C}(c(\vec{u})=v)$, $\phi(W)\vDash \i{NumDefPr}(c(\vec{u}),m)$, and $\phi(W)\vDash \i{TotalDefPr}(c(\vec{u}),x)$, respectively. 
	Let's denote the unnormalized weight of $\phi(W)$ with respect to each of these three rules as $W'_{\Pi'}(\phi(W))|_{\ref{w-asspr}}$, $W'_{\Pi'}(\phi(W))|_{\ref{w-numdefpr}}$, $W'_{\Pi'}(\phi(W))|_{\ref{w-totaldefpr}}$. It's clear that $W'_{\Pi'}(\phi(W)) = W'_{\Pi'}(\phi(W))|_{\ref{w-asspr}} \times W'_{\Pi'}(\phi(W))|_{\ref{w-numdefpr}} \times W'_{\Pi'}(\phi(W))|_{\ref{w-totaldefpr}}$.
	
	Consider a $c(\vec{u}) = v$ that is possible in $W$ and $W \vDash c(\vec{u}) = v$. Since $\hat{\mu}_{\Pi}(W) > 0$, if $v\in AV_W(c(\vec{u}))$, $pr_{r_{W, c(\vec{u})}} (c(\vec{u}) = v\mid C) = p \in \Pi$ and $W\vDash C$, then $P(W,c(\vec{u}) = v) = p$ and $p>0$;
	if f $v \not\in AV_W(c(\vec{u}))$, then $1 - \sum\limits_{ v'\in AV_W(c(\vec{u})) } \i{PossWithAssPr}(W, c(\vec{u}) \mvis  v')$ must be greater than $0$.
	By the definition of $\phi(W)$, 
	\begin{align*}
	W'_{\Pi'}(\phi(W))|_{\ref{w-asspr}} & =  exp\Bigg(
	\sum\limits_{\substack{
	\text{$c(\vec{u})=v:$}\\
	\text{$pr_r (c(\vec{u}) = v\mid C) = p \in \Pi$}\\
	\text{$\phi(W)\vDash \i{AssPr}_{r, C}(c(\vec{u})=v)$}}} ln(p) 
	\Bigg) \\
	& \text{(Note that by \textbf{Condition 1}, $r$ must be the same as $r_{W, c(\vec{u})}$)}\\
	& = \prod_{\substack{
	\text{$c(\vec{u})=v:$}\\
	\text{$pr_{r_{W, c(\vec{u})}} (c(\vec{u}) = v\mid C) = p \in \Pi$}\\
	\text{$c(\vec{u})=v$ is possible in $W$}\\
	\text{$W\vDash C$}\\
	\text{and $W\vDash c(\vec{u})=v$}}}p
	\end{align*}

	\begin{align*}
	W'_{\Pi'}(\phi(W))|_{\ref{w-numdefpr}} & =  exp\Bigg(
	\sum\limits_{\substack{
	\text{$c(\vec{u}),m:$}\\
	\text{$m\geq 2$}\\
	\text{$\phi(W)\vDash \i{NumDefPr}(c(\vec{u}),m)$}}} ln(\frac{1}{m})
	\Bigg) \\
	& =  exp\Bigg(
	\sum\limits_{\substack{
	\text{$c(\vec{u}),m:$}\\
	\text{$\phi(W)\vDash \i{NumDefPr}(c(\vec{u}),m)$}}} ln(\frac{1}{m})
	\Bigg) \\
	& = \prod_{\substack{\text{$c(\vec{u})=v:$}\\\text{$c(\vec{u})=v$ is possible in $W$}\\\text{$W\vDash c(\vec{u})=v$}\\\text{and $v \not \in AV_W(c(\vec{u}))$}}}\frac{1}{|\{v' \mid c(\vec{u})\mvis v' \text{ is possible in } W \text{ and } v' \not\in AV_W(c(\vec{u})) \}|}
	\end{align*}
	\begin{align*}
	W'_{\Pi'}(\phi(W))|_{\ref{w-totaldefpr}} & =  exp\Bigg(
	\sum\limits_{\substack{
	\text{$c(\vec{u}),x:$}\\
	\phi(W) \vDash TotalDefPr(c(\vec{u}),x)
	}}ln(x)
	\Bigg) \\
	& = exp\Bigg(
	\sum\limits_{\substack{
	\text{$c(\vec{u})=v:$}\\
	\text{$c(\vec{u})=v$ is possible in $W$}\\
	\text{$v\not\in AV_W(c(\vec{u}))$}\\
	\text{and $W\vDash c(\vec{u})=v$}}} 
	ln(1 - \sum\limits_{ v'\in AV_W(c(\vec{u})) } \i{PossWithAssPr}(W, c(\vec{u}) \mvis  v')
	\Bigg) \\
	& = exp\Bigg(
	\sum\limits_{\substack{
	\text{$c(\vec{u})=v:$}\\
	\text{$c(\vec{u})=v$ is possible in $W$}\\
	\text{$v\not\in AV_W(c(\vec{u}))$}\\
	\text{and $W\vDash c(\vec{u})=v$}}} 
	ln(1-\sum\limits_{\substack{
	\text{$c(\vec{u})=v':$}\\
	\text{$c(\vec{u})=v'$ is possible in $W$}\\
	\text{$pr_{r_{W, c(\vec{u})}} (c(\vec{u}) = v'\mid C) = p \in \Pi$}\\
	\text{and $W\vDash C$}}}{p})
	\Bigg) \\
	& = \prod_{\substack{\text{$c(\vec{u})=v:$}\\\text{$c(\vec{u})=v$ is possible in $W$}\\\text{$W\vDash c(\vec{u})=v$}\\\text{and $v \not \in AV_W(c(\vec{u}))$}}}(1 - \sum\limits_{\substack{
	\text{$c(\vec{u})=v':$} \\
	\text{$c(\vec{u})=v'$ is possible in $W$} \\
	\text{$pr_{r_{W, c(\vec{u})}} (c(\vec{u}) = v'\mid C) = p \in \Pi$} \\
	\text{and $W\vDash C$} \\
	}}p)
	\end{align*}
	
	It's easy to see that $W'_{\Pi'}(\phi(W))= W'_{\Pi'}(\phi(W))|_{\ref{w-asspr}} \times W'_{\Pi'}(\phi(W))|_{\ref{w-numdefpr}} \times W'_{\Pi'}(\phi(W))|_{\ref{w-totaldefpr}} = \hat{\mu}_{\Pi}(W)$. 
	We already proved that for any possible world $W$ of $\Pi$, $\phi(W)$ is a probabilistic stable model of $\Pi'$. Then to prove $\mu_{\Pi}(W) = P_{\Pi'}(\phi(W))$, it is sufficient to prove for any probabilistic stable model $I$ of $\Pi'$, $I|_{\sigma}$ is a possible world of $\Pi$ and $I = \phi(I|_{\sigma})$ (which will be proved in the next part). Indeed, if we proved this, we know $\phi(W)$ and $W$ are 1-1 correspondent, thus $P'_{\Pi'}(\phi(W)) = \mu_{\Pi}(W)$. Since $\phi(W) \in \sm '[\Pi']$, by {\bf Lemma \ref{lem:soft:prop2}}, $P_{\Pi'}(\phi(W))=P'_{\Pi'}(\phi(W))=\mu_{\Pi}(W)$.
	
	\item[{\bf (b)}]
	Since $\Pi$ is consistent, there exists a possible world $W'$ of $\Pi$ with a non-zero probability. It's proved that $\phi(W')$ is a probabilistic stable model of $\Pi'$ and $\phi(W')$ satisfies $\overline{\Pi'~^{\rm hard}}$. So $\sm '[\Pi']$ is not empty. Let $I$ be a probabilistic stable model of $\Pi'$, by {\bf Lemma \ref{lem:soft:prop2}}, $I\vDash \overline{\Pi'~^{\rm hard}}$. Besides, since $\Pi' \setminus \Pi'~^{\rm hard}$ is a set of rules of the form $w: \leftarrow F$, by \textbf{Lemma \ref{lem:constr}}, $I$ is a stable model of $\overline{\Pi'~^{\rm hard}_I}$. Thus $I$ is a stable model of $\overline{\Pi'~^{\rm hard}}$.
	
	Since (1) $I$ is a stable model of $\tau(\Pi) \cup \Pi_{AUX}$, (2) no atom in $\sigma$ has a strictly positive occurrence in $\Pi_{AUX}$, (3) no atom in $\sigma_3$ has a strictly positive occurrence in $\tau(\Pi)$, (4) each strongly connected component of the dependency graph of $\tau(\Pi) \cup \Pi_{AUX}$ relative to $\sigma \cup \sigma_3$ is a subset of $\sigma$ or a subset of $\sigma_3$, by the splitting theorem
	\begin{itemize}
	\item
	$I$ is a stable model of $\tau(\Pi)$ relative to $\sigma$. Thus $I|_{\sigma}$ is a stable model of $\tau(\Pi)$, which means $I|_{\sigma}$ is a possible world of $\Pi$.
	\item
	$I$ is a stable model of $\Pi_{AUX}$ relative to $\sigma_3$. So $I\vDash \i{Comp}[\Pi_{AUX}; \sigma_3]$. 
	\end{itemize}
	Let's denote $I|_{\sigma}$ by $W$, we'll prove $I = \phi(W)$ by checking if $I$ satisfies all conditions in the definition of $\phi(W)$. 
	\begin{itemize}
	\item
	Let's consider condition \textbf{(1)} in the definition of $\phi$. 
	Take any random selection rule
	$\left[ r \right] \ random(c(\vec{u}): \{ X: p(X) \}) \leftarrow \i{Body}$,
	since $I$ satisfies formula (\ref{equ:comp_poss}), 
	\begin{itemize}
	\item
	$I\vDash \i{Poss}_{r}(c(\vec{u}) = v)$ 
	\end{itemize}
	iff 
	\begin{itemize}
	\item
	$I\vDash \i{Body} \land p(v) \land \neg \i{Intervene}(c(\vec{u}))$
	\end{itemize}
	iff (since all atoms in the above conjunction part belong to $\sigma$)
	\begin{itemize}
	\item
	$W\vDash \i{Body} \land p(v) \land \neg \i{Intervene}(c(\vec{u}))$
	\end{itemize}
	iff 
	\begin{itemize}
	\item
	$c(\vec{u}) = v$ is possible in $W$ due to $r$.
	\end{itemize}
	
	\item
	Let's consider condition \textbf{(2)} in the definition of $\phi$. 
	Take any pr-atom
	$pr_r(c(\vec{u}) = v_i \mid C_i) = p_i$
	in $\Pi$, 
	since $I$ satisfies formula (\ref{equ:comp3}), 
	\begin{itemize}
	\item
	$I\vDash \i{PossWithAssPr}_{r, C_i}(c(\vec{u}) = v_i)$ 
	\end{itemize}
	iff 
	\begin{itemize}
	\item
	$I\vDash \i{Poss}_r(c(\vec{u}) = v_i) \land C_i$
	\end{itemize}
	iff (from the proof of condition {\bf (1)}, and since $C_i$ belongs to $\sigma$)
	\begin{itemize}
	\item
	$c(\vec{u}) = v_i$ is possible in $W$ due to $r$ and $W\vDash C_i$
	\end{itemize}
	iff 
	\begin{itemize}
	\item
	this pr-atom is applied in $W$
	\end{itemize}

	Thus condition \textbf{(2)} is satisfied by $I$.
	\item
	Let's consider condition \textbf{(3)} in the definition of $\phi$. 
	Take any pr-atom
	$pr_r(c(\vec{u}) = v_i \mid C_i) = p_i$
	in $\Pi$, 
	since $I$ satisfies formula (\ref{equ:comp5}), 
	\begin{itemize}
	\item
	$I\vDash \i{AssPr}_{r, C_i}(c(\vec{u}) = v_i)$ 
	\end{itemize}
	iff 
	\begin{itemize}
	\item
	$I\vDash \i{PossWithAssPr}_{r, C_i}(c(\vec{u}) = v_i) \land c(\vec{u})=v_i$
	\end{itemize}
	iff (from the proof of condition {\bf (2)}, and since $c(\vec{u}) = v_i$ belongs to $\sigma$)
	\begin{itemize}
	\item
	this pr-atom is applied in $W$
	\item
	$W\vDash c(\vec{u}) = v_i$
	\end{itemize}

	Thus condition \textbf{(3)} is satisfied by $I$.
	\item
	Let's consider condition \textbf{(4)} in the definition of $\phi$. 
	Since $I$ satisfies formula (\ref{equ:comp4}), 
	\begin{itemize}
	\item
	$I\vDash \i{PossWithAssPr}(c(\vec{u}) = v_i)$ 
	\end{itemize}
	iff (from the proof of condition {\bf (2)})
	\begin{itemize}
	\item
	there exist a $r$ and $j$ such that $I\vDash \i{\i{PossWithAssPr}}_{r, C_j}(c(\vec{u}) = v_i)$
	\end{itemize}
	iff 
	\begin{itemize}
	\item
	there exist a pr-atom $pr_r(c(\vec{u}) = v_i \mid C_j) = p_j$ that is applied in $W$ for some $r$ and $j$ (where $i$ and $j$ may be different)
	\end{itemize}
	iff 
	\begin{itemize}
	\item
	$v_i \in AV_W(c(\vec{u}))$
	\end{itemize}

	Thus condition \textbf{(4)} is satisfied by $I$.
	\item
	Let's consider condition \textbf{(5)} in the definition of $\phi$. 
	Since $I$ satisfies formula (\ref{equ:comp1}), 
	\begin{itemize}
	\item
	$I\vDash \i{PossWithDefPr}(c(\vec{u}) = v)$ 
	\end{itemize}
	iff 
	\begin{itemize}
	\item
	$I\vDash \neg \i{PossWithAssPr}(c(\vec{u}) = v)$ 
	\item
	there exists a random selection rule $\left[ r \right] \ random(c(\vec{u}): \{ X: p(X) \}) \leftarrow \i{Body}$, such that 
	$I \vDash \i{Poss}_{r}(c(\vec{u}) = v)$
	\end{itemize}
	iff (by condition {\bf (4)} and  {\bf (1)})
	\begin{itemize}
	\item
	$v \not \in AV_W(c(\vec{u}))$
	\item
	$c(\vec{u}) = v$ is possible in $W$
	\end{itemize}

	Thus condition \textbf{(5)} is satisfied by $I$.
	\item
	Let's consider condition \textbf{(6)} in the definition of $\phi$. 
	Since $I$ satisfies formula (\ref{equ:comp2}), 
	\begin{itemize}
	\item
	$I\vDash \i{NumDefPr}(c(\vec{u}), x)$ 
	\end{itemize}
	iff 
	\begin{itemize}
	\item
	$x = \#count\{y: \i{PossWithDefPr}(c(\vec{u}) \mvis  y) \}$
	\item
	there exists a $c(\vec{u}) = z$ such that $I\vDash c(\vec{u}) = z\land \i{PossWithDefPr}(c(\vec{u}) = z)$
	\end{itemize}
	iff 
	\begin{itemize}
	\item
	there exist exactly $x$ different values $v$ such that 
		\begin{itemize}
		\item
		$I\vDash \i{PossWithDefPr}(c(\vec{u}) = v)$
		\item
		for one of such $v$, $I\vDash c(\vec{u}) = v$
		\end{itemize}
	\end{itemize}
	iff (by condition {\bf (5)}, and since $c(\vec{u}) = v$ belongs to $\sigma$)
	\begin{itemize}
	\item
	there exist exactly $x$ different values $v$ such that 
		\begin{itemize}
		\item
		$c(\vec{u}) = v$ is possible in $W$
		\item
		$v \not \in AV_W(c(\vec{u}))$
		\item
		for one of such $v$, $W\vDash c(\vec{u}) = v$
		\end{itemize}
	\end{itemize}

	Thus condition \textbf{(6)} is satisfied by $I$.
	\item
	Let's consider condition \textbf{(7)} in the definition of $\phi$. 
	Since $I$ satisfies formula (\ref{comp_rempr}), 
	\begin{itemize}
	\item
	$I\vDash \i{RemPr}(c(\vec{u}), x)$ 
	\end{itemize}
	iff 
	\begin{itemize}
	\item
	there exists a $v$ such that $\phi(W)\vDash c(\vec{u}) = v\land \i{PossWithDefPr}(c(\vec{u}) = v)$
	\item
	there exists a random selection rule \eqref{random} along with all pr-atoms \eqref{pr1} associated with it such that
		\begin{itemize}
		\item
		$I \vDash \i{Body}$
		\item
		$y = \sum\limits_{ pr_r(c(\vec{u}) \mvis  v  |  C) = p \in \Pi, \phi(W) \vDash PossWithAssPr_{r, C}(c(\vec{u}) \mvis  v) } p$
		\item
		$x = 1 - y$
		\end{itemize}
	\end{itemize}
	iff (by condition {\bf (5)} and {\bf (2)}, and since $c(\vec{u}) = v$ belongs to $\sigma$)
	\begin{itemize}	
	\item
	there exists a $v$ such that 
		\begin{itemize}
		\item
		$c(\vec{u}) = v$ is possible in $W$
		\item
		$v \not\in AV_W(c(\vec{u}))$
		\item
		$W\vDash c(\vec{u}) = v$
		\end{itemize}
	\item
	$x = 1 - \sum\limits_{ v'\in AV_W(c(\vec{u})) } \i{PossWithAssPr}(W, c(\vec{u}) \mvis  v')$
	\end{itemize}

	Thus condition \textbf{(7)} is satisfied by $I$.	
	\item
	Let's consider condition \textbf{(8)} in the definition of $\phi$. 
	Since $I$ satisfies formula (\ref{comp_totaldefpr}), 
	\begin{itemize}
	\item
	$I\vDash \i{TotalDefPr}(c(\vec{u}), x)$ 
	\end{itemize}
	iff 
	\begin{itemize}
	\item
	$I\vDash \i{RemPr}(c(\vec{u}), x)$
	\item
	$x>0$
	\end{itemize}
	Thus condition \textbf{(8)} is satisfied by $I$.	
	$\newline$
	Now we proved that $I$ is exactly $\phi(W)$, in other words, $I=\phi(I|_{\sigma})$. Thus for every probabilistic stable model $I$ of ${\rm plog2lpmln}(\Pi)$, $I|_{\sigma}$ is a possible world of $\Pi$ and $I = \phi(I|_{\sigma})$. 
	Consequently, $W$ and $\phi(W)$ (or $I|_{\sigma}$ and $I$) are 1-1 correspondent.
	Since $I$ is a probabilistic stable model of $\Pi'$, $P_{\Pi'}(I)>0$. Then $\mu_{\Pi}(I|_{\sigma}) = P_{\Pi'}(I) >0$.
	\end{itemize}
\end{enumerate}
\end{proof}
\qed
\newpage
\section{Full Translation of Monty Hall} \label{sec:examples}
For the P-log program $\Pi$ in \textbf{Example 2}, we showed a part of the translated $\lpmln$ program, ${\rm plog2lpmln}(\Pi)$, in \textbf{Example 3}. The full version of ${\rm plog2lpmln}(\Pi)$ is as follows: ($d \in \{ 1, 2, 3, 4 \}$)
{\small
\[
\ba{l}
// **** ~\tau(\Pi)~ **** \\
\alpha : \sneg \i{CanOpen}(d) \leftarrow \i{Selected} = d \\
\alpha : \sneg \i{CanOpen}(d) \leftarrow \i{Prize} = d \\
\alpha : \i{CanOpen}(d) \leftarrow \no \sneg \i{CanOpen}(d) \\
\\
\alpha : \leftarrow \i{CanOpen}(d), \sneg \i{CanOpen}(d) \\
\alpha : \leftarrow \i{Prize} =d_1, \i{Prize} =d_2, d_1 \neq  d_2 \\
\alpha : \leftarrow \i{Selected} =d_1, \i{Selected} =d_2, d_1 \neq  d_2 \\
\alpha : \leftarrow \i{Open} =d_1, \i{Open} =d_2, d_1 \neq  d_2 \\
\\
\alpha : \i{Prize} =1; \i{Prize} =2; \i{Prize} =3; \i{Prize} =4 \leftarrow \no\ \i{Intervene}(\i{Prize}) \\
\alpha : \i{Selected} =1; \i{Selected} =2; \i{Selected} =3; \i{Selected} =4 \leftarrow \no\ \i{Intervene}(\i{Selected}) \\
\alpha : \i{Open} =1; \i{Open} =2; \i{Open} =3; \i{Open} =4 \leftarrow \no\ \i{Intervene}(\i{Open}) \\
\alpha : \leftarrow \i{Open} =d, \no\ \i{CanOpen}(d), \no\ \i{Intervene}(\i{Open}) \\
\\
\alpha : \i{Obs}(\i{Selected} =1)\\
\alpha : \leftarrow Obs(\i{Selected} =1), \no\ \i{Selected} =1\\

\alpha : \i{Obs}(\i{Open} =2)\\
\alpha : \leftarrow Obs(\i{Open} =2), \no\ \i{Open} =2\\

\alpha : \i{Obs}(Prize \neq 2)\\
\alpha : \leftarrow Obs(Prize \neq 2), \i{Prize} =2\\
\\

// **** \text{ Possible Atoms} **** \\
\alpha : \i{Poss}(\i{Prize} =d) \leftarrow \no\ \i{Intervene}(\i{Prize})\\

\alpha : \i{Poss}(\i{Selected} =d) \leftarrow \no\ \i{Intervene}(\i{Selected})\\

\alpha : \i{Poss}(\i{Open} =d) \leftarrow \i{CanOpen}(d), \no\ \i{Intervene}(\i{Open})\\
\\

// **** \text{ Assigned Probability} **** \\
\alpha : \i{PossWithAssPr}(\i{Prize} = 1) \leftarrow \i{Poss}(\i{Prize} =1) \\

\alpha : \i{AssPr}(\i{Prize} = 1) \leftarrow \i{Prize} = 1, \i{PossWithAssPr}(\i{Prize} = 1) \\

ln(0.3): \bot \leftarrow \no\ \i{AssPr}(\i{Prize} = 1)\\
\\
\alpha : \i{PossWithAssPr}(\i{Prize} = 3) \leftarrow \i{Poss}(\i{Prize} =3) \\

\alpha : \i{AssPr}(\i{Prize} = 3) \leftarrow \i{Prize} = 3, \i{PossWithAssPr}(\i{Prize} = 3) \\

ln(0.2): \bot \leftarrow \no\ \i{AssPr}(\i{Prize} = 3)\\
\\

// **** \text{ Denominator for Default Probability} **** \\
\alpha : \i{PossWithDefPr}(\i{Prize} =d) \leftarrow \i{Poss}(\i{Prize} =d), \no\ \i{PossWithAssPr}(\i{Prize} =d)\\

\alpha : \i{PossWithDefPr}(\i{Selected} =d) \leftarrow \i{Poss}(\i{Selected} =d), \no\ \i{PossWithAssPr}(\i{Selected} =d)\\

\alpha : \i{PossWithDefPr}(\i{Open} =d) \leftarrow \i{Poss}(\i{Open} =d), \no\ \i{PossWithAssPr}(\i{Open} =d)\\
\\
\alpha : \i{NumDefPr}(\i{Prize},x) \leftarrow \i{Prize} = d, \i{PossWithDefPr}(\i{Prize} =d), x = \# count \{ y: \i{PossWithDefPr}(\i{Prize} =y) \} \\

\alpha : \i{NumDefPr}(\i{Selected},x) \leftarrow \i{Selected} = d, \i{PossWithDefPr}(\i{Selected} =d), x = \# count \{ y: \i{PossWithDefPr}(\i{Selected} =y) \} \\

\alpha : \i{NumDefPr}(\i{Open},x) \leftarrow \i{Open} = d, \i{PossWithDefPr}(\i{Open} =d), x = \# count \{ y: \i{PossWithDefPr}(\i{Open} =y) \} \\

ln(\frac{1}{m}): \leftarrow \no\ \i{NumDefPr}(c,m) ~~~~~~~~~~// c \in \{ \i{Prize}, \i{Selected}, \i{Open} \}, m\in \{ 2,3,4\}  \\
\\

// **** \text{ Numerator for Default Probability} **** \\
\alpha : \i{RemPr}(\i{Prize}, 1-x) \leftarrow \i{Prize} = d, \i{PossWithDefPr}(\i{Prize} = d), x = \#{\rm sum}\{0.3:\i{PossWithAssPr}(\i{Prize}\mvis 1); 0.2:\i{PossWithAssPr}(\i{Prize}\mvis 3) \} \\

\alpha : \i{TotalDefPr}(\i{Prize},x) \leftarrow \i{RemPr}(\i{Prize},x), x>0 \\

ln(x): \bot \leftarrow \no\ \i{TotalDefPr}(\i{Prize},x)\\

\alpha : \bot \leftarrow \i{RemPr}(\i{Prize},x), x\leq 0 \\

\ea
\]
}
\newpage
The further translated ASP with Weak Constraints (WC) encoding is as follows: 

{\small
\[
\ba{l}
\% **** Declaration Part **** \\
\i{door}(1..4). \\
\\
\% **** ~\tau(\Pi)~ **** \\
\i{canOpen},(D, f) \leftarrow \i{selected}(D). \\
\i{canOpen},(D, f) \leftarrow \i{prize}(D). \\
\i{canOpen},(D, t) \leftarrow \no\ \i{canOpen},(D, f), \i{door}(D). \\
\\
\leftarrow \i{canOpen},(D, t), \i{canOpen},(D, f). \\
\leftarrow \i{prize}(D_1), \i{prize}(D_2), D_1 \neq  D_2. \\
\leftarrow \i{selected}(D_1), \i{selected}(D_2), D_1 \neq  D_2. \\
\leftarrow \i{open}(D_1), \i{open}(D_2), D_1 \neq  D_2. \\
\\
1\{ \i{prize}(D): \i{door}(D) \}1 \leftarrow \no\ \i{intervene}(\i{prize}). \\
1\{ \i{selected}(D): \i{door}(D) \}1 \leftarrow \no\ \i{intervene}(\i{selected}). \\
1\{ \i{open}(D): \i{door}(D) \}1 \leftarrow \no\ \i{intervene}(\i{open}). \\
\leftarrow \i{open}(D), \no\ \i{canOpen},(D, t), \no\ \i{intervene}(\i{open}). \\
\\
obs(\i{selected}, 1).\\
\leftarrow obs(\i{selected}, 1), \no\ \i{selected}(1).\\
obs(\i{open}, 2).\\
\leftarrow obs(\i{open}, 2), \no\ \i{open}(2).\\
nobs(\i{prize}, 2).\\
\leftarrow nobs(\i{prize}, 2), \i{prize}(2).\\
\\
\% **** \text{ Possible Atoms} **** \\
\i{poss}(\i{prize}, D) \leftarrow \no\ \i{intervene}(\i{prize}).\\
\i{poss}(\i{selected}, D) \leftarrow \no\ \i{intervene}(\i{selected}).\\
\i{poss}(\i{open}, D) \leftarrow \i{canOpen}(D), \no\ \i{intervene}(\i{open}).\\
\\
\% **** \text{ Assigned Probability} **** \\
\i{possWithAssPr}(\i{prize}, 1) \leftarrow \i{poss}(\i{prize}, 1). \\

\i{assPr}(\i{prize}, 1) \leftarrow \i{prize}(1), \i{possWithAssPr}(\i{prize}, 1). \\
\\
\i{possWithAssPr}(\i{prize}, 3) \leftarrow \i{poss}(\i{prize}, 3). \\

\i{assPr}(\i{prize}, 3) \leftarrow \i{prize}(3), \i{possWithAssPr}(\i{prize}, 3). \\
\\
\% **** \text{ Denominator for Default Probability} **** \\
\i{possWithDefPr}(\i{prize}, D) \leftarrow \i{poss}(\i{prize}, D), \no\ \i{possWithAssPr}(\i{prize}, D).\\

\i{possWithDefPr}(\i{selected}, D) \leftarrow \i{poss}(\i{selected}, D), \no\ \i{possWithAssPr}(\i{selected}, D).\\

\i{possWithDefPr}(\i{open}, D) \leftarrow \i{poss}(\i{open}, D), \no\ \i{possWithAssPr}(\i{open}, D).\\
\\
\i{numDefPr}(\i{prize},X) \leftarrow \i{prize}(D), \i{possWithDefPr}(\i{prize}, D), X = \# count \{ Y: \i{possWithDefPr}(\i{prize}, Y) \}. \\

\i{numDefPr}(\i{selected},X) \leftarrow \i{selected}(D), \i{possWithDefPr}(\i{selected}, D), X = \# count \{ Y: \i{possWithDefPr}(\i{selected}, Y) \}. \\

\i{numDefPr}(\i{open},X) \leftarrow \i{open}(D), \i{possWithDefPr}(\i{open}, D), X = \# count \{ Y: \i{possWithDefPr}(\i{open}, Y) \}. \\
\\
\% **** \text{ Numerator for Default Probability} **** \\
\i{remPr}(\i{prize}, Y) \leftarrow \i{prize}(D), \i{possWithDefPr}(\i{prize}, D), X = \#{\rm sum}\{0.3:\i{possWithAssPr}(\i{prize}, 1); 0.2:\i{possWithAssPr}(\i{prize}, 3) \}, Y = 1-X.\\

\i{totalDefPr}(\i{prize}, X) \leftarrow \i{remPr}(\i{prize}, X), X>0.\\

\leftarrow \i{remPr}(\i{prize}, X), X\leq 0. \\

\\
\% **** \text{ Weak Constraints} **** \\
\% \text{note that if we remove this part, we can get all stable models of this program, not just the optimal ones}
\\
:\sneg\ \i{assPr}(\i{prize}, 1). \left[ -ln(0.3) \right] \\
:\sneg\ \i{assPr}(\i{prize}, 3). \left[ -ln(0.2) \right] \\
:\sneg\ \i{numDefPr}(C, M). \left[ -ln(1/M) \right] \\
:\sneg\ \i{totalDefPr}(\i{prize}, X). \left[ -ln(X) \right] \\

\ea
\]
}

\newpage
For both translated $\lpmln$ and WC programs, there are 3 stable models that satisfy all hard rules. The intersection of 3 stable models is shown below, followed by the remaining part of these stable models:

{\small
\[
\ba{l}
\textbf {Intersection of 3 stable models: }\text{(following the syntax in $\lpmln$ encoding)}
\\
\\
\{ \i{Obs}(\i{Selected} =1), \i{Obs}(\i{Open} =2), \i{Obs}(\i{Prize} \neq 2), \\
\i{Selected} =1, \i{Open} =2, \\
\i{CanOpen}(1)=\false , \i{CanOpen}(2)= \true , \\
\i{Poss}(\i{Prize} =1), \i{Poss}(\i{Prize} =2), \i{Poss}(\i{Prize} =3), \i{Poss}(\i{Prize} =4), \\
\i{Poss}(\i{Selected} =1), \i{Poss}(\i{Selected} =2), \i{Poss}(\i{Selected} =3), \i{Poss}(\i{Selected} =4), \\
\i{Poss}(\i{Open} =2), \\
\i{PossWithAssPr}(\i{Prize} = 1), \i{PossWithAssPr}(\i{Prize} = 3), \\
\i{PossWithDefPr}(\i{Prize} =2), \i{PossWithDefPr}(\i{Prize} =4), \\
\i{PossWithDefPr}(\i{Selected} =1), \i{PossWithDefPr}(\i{Selected} =2), \i{PossWithDefPr}(\i{Selected} =3), \i{PossWithDefPr}(\i{Selected} =4), \\
\i{PossWithDefPr}(\i{Open} =2), \\
\i{NumDefPr}(\i{Selected},4)
\}
\ea
\]
}

{\small
\[
\ba{l}
I_1 = \{ \i{Prize} =1, \i{CanOpen}(3)= \true, \i{CanOpen}(4)= \true, \\
\hspace{28pt} \i{AssPr}(\i{Prize} =1), \i{NumDefPr}(\i{Open},3), \\
\hspace{28pt} \i{Poss}(\i{Open} =3), \i{Poss}(\i{Open} =4), \\
\hspace{28pt} \i{PossWithDefPr}(\i{Open} =3), \i{PossWithDefPr}(\i{Open} =4) \} \\
\\
I_2 = \{ \i{Prize} =3, \i{CanOpen}(3)= \false, \i{CanOpen}(4)= \true, \\
\hspace{28pt} \i{AssPr}(\i{Prize} = 3), \i{NumDefPr}(\i{Open},2), \\
\hspace{28pt} \i{Poss}(\i{Open} =4), \\
\hspace{28pt} \i{PossWithDefPr}(\i{Open} =4) \} \\
\\
I_3 = \{ \i{Prize} =4, \i{CanOpen}(3)= \true, \i{CanOpen}(4)= \false, \\
\hspace{28pt} \i{NumDefPr}(\i{Prize}, 2), \i{TotalDefPr}(\i{Prize}, 0.5), \i{NumDefPr}(\i{Open},2), \\
\hspace{28pt} \i{Poss}(\i{Open} =3), \\
\hspace{28pt} \i{PossWithDefPr}(\i{Open} =3), \\
\hspace{28pt} \i{RemPr}(\i{Prize}, 0.5)\} \\
\ea
\]
}

The unnormalized weight $\omega (I_i)$ of each stable model $I_i$ is shown below:

\[
\ba{l l}
\omega (I_1) &= \omega(\i{NumDefPr}(\i{Selected},4))\times \hspace{12pt} \omega(\i{AssPr}(\i{Prize} = 1)) \times \omega(\i{NumDefPr}(\i{Open},3)) \\ \\
 &= \frac{1}{4} ~~~~\times~~~~ 
0.3 \times \frac{1}{3} = \frac{1}{40} \\
\\
\omega (I_2) &= \omega(\i{NumDefPr}(\i{Selected},4))\times \hspace{12pt} \omega(\i{AssPr}(\i{Prize} = 3)) \times \omega(\i{NumDefPr}(\i{Open},2)) \\ \\
 &= \frac{1}{4} ~~~~\times~~~~ 
0.2 \times \frac{1}{2} = \frac{1}{40} \\
\\
\omega (I_3) &= \omega(\i{NumDefPr}(\i{Selected},4))\times \hspace{12pt} \omega(\i{NumDefPr}(\i{Prize}, 2)) \times \omega(\i{TotalDefPr}(\i{Prize}, 0.5)) \times \omega(\i{NumDefPr}(\i{Open},2)) \\ \\
 &= \frac{1}{4} ~~~~\times~~~~ 
\frac{1}{2} \times 0.5 \times \frac{1}{2} = \frac{1}{32}
\ea
\]

\end{document}